%
\documentclass[10pt,a4paper]{article}

\usepackage[nocompress]{cite}
\usepackage[pdftex]{graphicx}
\usepackage[small,rm,RM]{subfigure}
\usepackage[cmex10]{amsmath}
\usepackage{array}
\usepackage{caption}
\usepackage{amssymb}
\usepackage{amsthm}
\usepackage{amsmath}
\usepackage{multirow}
\usepackage[table,pdftex]{xcolor}
\usepackage{authblk}
\usepackage{appendix}

\newcommand{\etal}{et al. }
\newcommand{\superscript}[1]{\ensuremath{^{\textrm{#1}}}}

\newtheorem{corollary}{Corollary}
\newtheorem{lemma}{Lemma}
\newtheorem{proposition}{Proposition}
\newtheorem{remark}{Remark}

\makeatletter 
\newtheorem*{rep@theorem}{\rep@title} 
\newcommand{\newreptheorem}[2]{%
\newenvironment{rep#1}[1]{%
 \def\rep@title{#2 \ref{##1}}%
 \begin{rep@theorem}}%
 {\end{rep@theorem}}} 
\makeatother 

\newreptheorem{lemma}{Lemma} 
\newreptheorem{corollary}{Corollary} 
\newreptheorem{remark}{Remark}

\newcommand{\footnoteremember}[2]{
\footnote{#2}
\newcounter{#1}
\setcounter{#1}{\value{footnote}}
}
\newcommand{\footnoterecall}[1]{
\footnotemark[\value{#1}]
}

\begin{document}

\title{WESD - Weighted Spectral Distance for Measuring Shape Dissimilarity}

\author[1]{Ender~Konukoglu\thanks{Corresponding author: email: ender.konukoglu@gmail.com}}
\author[1]{Ben~Glocker}
\author[1]{Antonio~Criminisi}
\author[2]{Kilian~M.~Pohl}
\affil[1]{Microsoft Research Cambridge, UK}
\affil[2]{University of Pennsylvania, USA}

\maketitle

\begin{abstract}
This article presents a new distance for measuring shape dissimilarity between objects. Recent publications introduced the use of eigenvalues of the Laplace operator as compact shape descriptors. Here, we revisit the eigenvalues to define a proper distance, called Weighted Spectral Distance (WESD), for quantifying shape dissimilarity. The definition of WESD is derived through analysing the heat-trace. This analysis provides the proposed distance an intuitive meaning and mathematically links it to the intrinsic geometry of objects. We analyse the resulting distance definition, present and prove its important theoretical properties. Some of these properties include: i) WESD is defined over the entire sequence of eigenvalues yet it is guaranteed to converge, ii) it is a pseudometric, iii) it is accurately approximated with a finite number of eigenvalues, and iv) it can be mapped to the $[0,1)$ interval. Lastly, experiments conducted on synthetic and real objects are presented. These experiments highlight the practical benefits of WESD for applications in vision and medical image analysis. 
\end{abstract}

\section{Introduction}\label{sec:intro}
Quantifying shape differences between objects is an important task for
various areas in computer science, medical imaging and engineering. In
manufacturing, for example, one may wish to characterize the
difference in shape of two fabricated tools. In radiology, a doctor
frequently diagnoses a disease based on anatomical and pathological
shape changes over time. In computer vision, discriminative shape
models are used for automated object recognition,~\cite{Zhang2004,Iyer2005}.

In order to define measurements of shape dissimilarity, scientists rely on descriptors of objects that capture information on their geometry \cite{Zhang2004}. These descriptors can be in the form of parametrized models (e.g. point clouds, surface patches, space curves, medial axis transforms) or in the form of geometric properties (e.g. volume, surface area to volume ratio, curvature maps). Once a descriptor is formulated the distance between two shapes can be defined as the difference between the associated descriptors. The exact definition of the distance however, is a critical issue. In order to define an intuitive and theoretically sound distance, one should ensure that it takes into account the nature of the descriptor. For instance, the descriptor might be an infinite sequence of positive values, in which case we should be careful not to define a distance that diverges for every non-identical pair of shapes.

Shape descriptors based on the eigensystems of Laplace and
Laplace-Beltrami operators, called {\it spectral signatures}, have
recently gained popularity in computational shape
analysis~\cite{Reuter2006,Levy2006,Rustamov2007,Sun2009,Reuter2009,Bronstein2011}. These descriptors leverage the fact that the eigenvalues and the
eigenfunctions of Laplace operators contain information on the
intrinsic geometry of
objects~\cite{Weyl1912,Kac1966,CourantHilbert}. A visual analogy
useful for an intuitive understanding is to think of an object (e.g.\
in 2D) as the membrane of a drum. In this case the eigenvalues
correspond to the fundamental frequencies of vibration of the membrane
during percussion, and the eigenfunctions correspond to its
fundamental patterns of vibration. Both the eigenvalues and the
eigenfunctions depend on the shape of the drum head and thus can be
used as shape descriptors for the object.

Despite recent progress
by~\cite{Reuter2006,Levy2006,Rustamov2007,Sun2009,Reuter2009,Bronstein2011},
designing meaningful shape distances based on spectral signatures
remains challenging. Difficulties arise from the nature of the
eigensystems. The eigenfunctions of a shape mostly provide {\em
  localized} information on the geometry of small
neighborhoods. Aggregating such local information into an overall
shape dissimilarity measure is non-trivial. On the other hand, the
eigenvalues provide information about the {\em overall} shape, so they
are ideal for defining global distances. However, they form a
diverging sequence making it difficult to define a theoretically sound
metric. Here, we tackle this latter problem and propose a new shape
distance based on the eigenvalues, which is technically sound,
intuitive and practically useful.

In the remainder of this section, we first review in further detail
the literature on spectral signatures and shape distances related to
eigenfunctions and eigenvalues. Then, we provide a brief overview of
our new shape distance.

\subsection{Eigenfunctions}
The eigenfunctions of an object constitute an infinite set of
functions. Each function depends on the shape of the object and is
different than the rest of the set. Figure~\ref{fig:eigensystem}
illustrates this for two example objects where a few eigenfunctions
are shown. The values these functions attain at each point capture the
local geometry around the point, i.e. of its neighborhood. Inspired from this
geometric information, methods define {\it local} shape
signatures~\cite{Levy2006,Jain2006,Rustamov2007,Sun2009} for each
point on an object by evaluating a subset of eigenfunctions at that
specific location. Global shape distances are then defined using such
local signatures. Such distance definitions rely on correspondences. 
These correspondences
should hold both in terms of points and the subset of
eigenfunctions used in the signatures, a condition hard to satisfy in
practice~\cite{Jain2006}. Explicitly searching for such
correspondences leads to expensive
algorithms~\cite{Jain2006,Bronstein2010a,Memoli2010,Bronstein2009,Bronstein2010}. On
the other hand, computing distances between distributions of local
signatures obtained by aggregating all the points, as
in~\cite{Rustamov2007,Sun2009,Lai2010}, might implicitly construct
false correspondences. In summary, defining a global distance based on
local signatures is not an easy task.
\begin{figure}
  \center \subfigure{
    \includegraphics[width=0.70\linewidth]{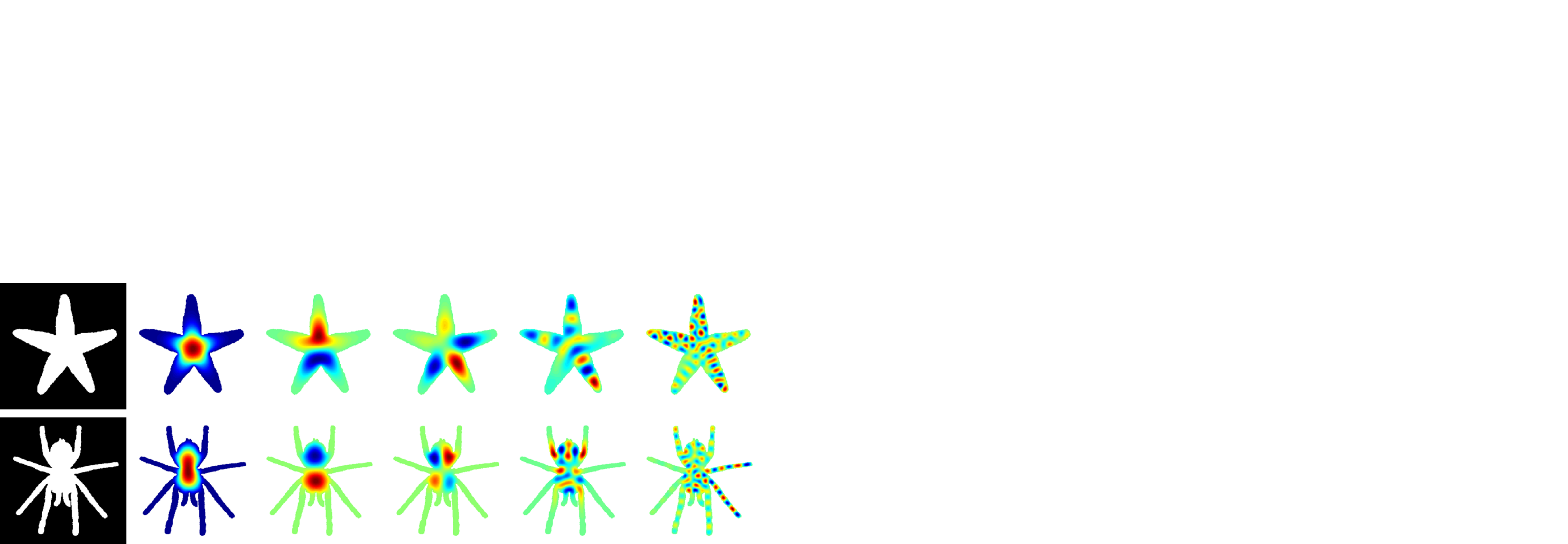}}
  \caption{\label{fig:eigensystem}Starfish and tarantula. The objects
    represented as binary maps are shown on the left, followed by the
    1st, 2nd, 5th, 20th, and 100th eigenfunction. The values increase
    from blue (negative) to red (positive) with green being zero.}
\end{figure}
  
Instead of extracting local information from an eigenfunction, one can
also think of capturing its global pattern by looking at regions where
its values are all positive or all negative. Such regions are called {\em
  nodal domains}. Different eigenfunctions induce different patterns
and, in turn, have different number of nodal domains, called {\em
  nodal counts}~\cite{CourantHilbert}. For a given object, the ordered
sequence of nodal counts contain information on its overall
geometry~\cite{Gnutzmann2005,Gnutzmann2006}. Inspired by these
observations, authors in~\cite{Lai2009} used this sequence as a {\em
  global} shape signature. They further defined the associated shape
distance between two objects as the Euclidean norm of the vector
difference between their nodal count sequences. However, it is not
intuitively clear what the nodal counts represent. Furthermore, the
entire sequence is diverging so that, in practice, one first chooses a
finite subset and then computes the distance for that subset. These
difficulties make it hard to define an intuitive and sound shape
distance based on nodal counts.
\subsection{Eigenvalues}
Signatures based on eigenvalues, on the other hand, have a clearer
geometric interpretation. The set of eigenvalues contains information
on the overall geometry of the object. Specifically, the ordered
sequence is analytically related to the intrinsic geometry by the {\it
  heat-trace},~\cite{Pleijel1954,McKean1967,Smith1981,Protter1987,Vassilevich2003}. Hence,
more intuitive distances can be constructed using the eigenvalues. However, similar to the sequence of nodal counts, the eigenvalue sequence is also divergent. This makes the distance definition theoretically challenging. Inspired by the sequence's link to the geometry, Reuter~\etal in~\cite{Reuter2006}, used the smallest $N$ eigenvalues as a shape signature, called {\em shape-DNA}. As the associated shape distance, the authors proposed the Euclidean norm of the vector difference between the shape-DNAs of objects. Although this is a very good first attempt the divergent nature of eigenvalue sequence results in important theoretical limitations for this distance, as also pointed out in~\cite{Memoli2010}. The main problems are i) defining a distance on the entire sequence does not yield a proper metric, ii) the differences between the higher components of two sequences dominate the final distance value, even though these components do not necessarily provide more information on the geometry, and iii) the distance value is sensitive to the choice of the signature size $N$. These theoretical problems also cause practical drawbacks as we demonstrate later.

This article proposes a new shape distance, called Weighted Spectral Distance (WESD), using the sequence of eigenvalues of the Laplace operator. We derive WESD from the functional relationship between the eigenvalues and the geometric invariants as given by the heat-trace. {This derivation provides WESD a clear geometric intuition as a shape distance. It also links WESD to the distance defined by Reuter~\etal in~\cite{Reuter2006} as well as to the local signature defined in \cite{Rustamov2007}. The resulting formulation of WESD differs from other previously proposed scores based on eigenvalues, whether in shape analysis or other fields \cite{Jurman2010}, both in its formulation and in the fact that it is defined over the entire sequence. This latter point, as we will show later, alleviates the critical importance of the choice of the signature.} We furthermore analyse and prove theoretical properties of WESD showing that it does not share some of the fundamental problems the distance proposed in~\cite{Reuter2006} has. Specifically, we prove that WESD: i) converges despite the fact that it is defined over the entire eigenvalue sequence, ii) can be mapped to the $[0, 1)$ interval, iii) is accurately approximated with a finite number of eigenvalues and the truncation error has an analytical upper bound and iv) is a pseudometric.  These theoretical properties also yield important practical advantages such as being less sensitive to the signature size (truncation parameter) $N$, providing a principled way of choosing this parameter, providing more stable low-dimensional shape embedding and simplicity in combining with other distances as WESD can be normalised. Applying to synthetic and real objects, we further demonstrate the benefits of WESD in comparison to the other eigenvalue-based distance defined in~\cite{Reuter2006}. 

The remainder of this article is structured as follows. Section~\ref{sec:solo} presents a brief overview of the Laplace operator, the eigenvalue sequence and its role in shape analysis. In Section~\ref{sec:wsd_theory} we define WESD and derive its theoretical properties. Section~\ref{sec:wsd_experiments} presents an extensive set of experimental analysis on 2D objects extracted from synthetic binary maps, shape-based retrieval results for 3D objects using the SHREC dataset~\cite{Lian2011}, low dimensional embeddings of real 3D data such as subcortical structures in brain scans and 4D analysis of binary maps extracted from cardiac images.
\section{Spectrum of Laplace Operator}~\label{sec:solo} This section
provides a brief background on the Laplace operator, its eigenvalue
sequence, called {\it spectrum}, and its role in shape analysis. We
first relate an object's intrinsic geometry to the spectrum of the
corresponding Laplace operator. We then provide some details on the
previously proposed shape-DNA~\cite{Reuter2006} and discuss the
associated issues. For further details we refer the reader
to~\cite{CourantHilbert,Protter1987,Vassilevich2003}
and~\cite{Reuter2006}.

We denote an object as a closed bounded domain
$\Omega\subset\mathbb{R}^d$ with piecewise smooth boundaries. In the
case of binary maps, $\Omega$ would correspond to the foreground representing the object. For a given $\Omega$, the
Laplace operator on this object is defined with respect to a twice
differentiable real-valued function $f$ as
\begin{equation}
  \nonumber \Delta_{\Omega} f \triangleq \sum_{i=1}^d \frac{\partial^2}{ \partial x_i^2} f,\  \ \forall \mathbf{x} \in \Omega
\end{equation}
where $\mathbf{x} = [x_1,...,x_d]$ is the spatial coordinate. The
eigenvalues and the eigenfunctions of $\Delta_{\Omega}$ are defined as the solutions of the Helmholtz equation with Dirichlet type boundary
conditions\footnote{Other boundary conditions yield different
  eigensystems. Here we are only interested in the Dirichlet
  type. Please refer to~\cite{CourantHilbert} for the other
  types.},~\cite{CourantHilbert},
\begin{equation}
  \nonumber \Delta f + \lambda f = 0 \ \forall \mathbf{x}\in\Omega, \ f(\mathbf{x}) = 0, \ \forall \mathbf{x} \in \partial\Omega,
\end{equation}
where $\partial\Omega$ denotes the boundary of the object and
$\lambda\in\mathbb{R}$ is a scalar. There are infinitely many pairs
$\{(\lambda_n,f_n)\}_{n=1}^\infty$ satisfying this equation and they
form the set of eigenvalues and eigenfunctions respectively. The
ordered set of eigenvalues is a positive diverging sequence such that
$0<\lambda_1\leq\lambda_2\leq\dots\leq\lambda_n\leq\dots$. This
infinite sequence is called the Dirichlet spectrum of $\Delta_\Omega$,
which we refer simply as the ``spectrum''. In addition, each component
of the spectrum is called a ``mode", e.g. $\lambda_n$ is the called
$n^{th}$ mode of the spectrum

The spectrum contains information on the intrinsic geometry of
objects. Weyl in~\cite{Weyl1912} showed the first spectrum-geometry
link by proving that the asymptotic behavior of the eigenvalues is
given as
\begin{equation}
  \nonumber \lambda_n \sim 4\pi^2\left( \frac{n}{B_d V_{\Omega}} \right)^{2/d},\ n\rightarrow\infty,
\end{equation}
where $V_{\Omega}$ is the volume of $\Omega$ and $B_d$ is the volume
of the unit ball in $\mathbb{R}^d$. {Later works,
as~\cite{Pleijel1954,McKean1967,Smith1981,Protter1987}, extended this result by studying the properties of the Green's function of the Laplace operator, and showed that a more accurate spectrum-geometry link is given by the {\it heat-trace}, which in $\mathbb{R}^d$ is given as}
\begin{eqnarray}
  \label{eqn:heat_trace} Z(t) &\triangleq& \sum_{n=1}^{\infty}e^{-\lambda_n t} = \sum_{s = 0}^\infty a_{s/2}t^{-d/2+ s/2},\ \ t>0.
\end{eqnarray}
The coefficients of the polynomial expansion, $a_{s/2}$, are the
components carrying the geometric information. These coefficients are
given as sums of volume and boundary integrals of some local
invariants of the
shape,~\cite{McKean1967,Smith1981,Vassilevich2003}. For instance, as
given in~\cite{McKean1967}, the first three coefficients are:
\begin{eqnarray}
  \nonumber a_{0} &=& \frac{1}{(4\pi)^{d/2}} V_{\Omega}\\
  \nonumber a_{1/2} &=& -\frac{1}{4(4\pi)^{d/2-1/2}} S_{\Omega},\\
  \nonumber a_{1} &=& -\frac{1}{6(4\pi)^{d/2}}\int_{\partial\Omega}\kappa d\partial\Omega,
\end{eqnarray}
where $S_{\Omega}$ is the surface area (circumference in 2D) and
$\kappa$ is the mean (geodesic) curvature on the boundary of
$\Omega$. The functional relationship between the eigenvalue sequence
and the coefficients $a_{s/2}$ can be seen in
Equation~(\ref{eqn:heat_trace}). This connection relates the spectrum
to the intrinsic geometry, which is the reason why Laplace spectrum is
important for the computational study of shapes.

In addition to the spectrum-geometry link, the eigenvalues of the
Laplace operator have two other properties which make them useful for
shape analysis,~\cite{CourantHilbert}. These are: 1) the Laplace
operator is invariant to isometric transformations and 2) the
spectrum depends continuously on the deformations applied to the
boundary of the object. The advantage of the first property is obvious since isometric transformations do not alter the shape. In addition to this, the second property states that there is a continuous link between the differences in eigenvalues and the difference in shape, which makes eigenvalues ideal for measuring shape differences. 

Unfortunately, it has also been shown that there exists isospectral non-congruent objects, i.e. objects with different shape but the same spectrum~\cite{Gordon1992}. Therefore, theoretically the Laplace
spectrum does not uniquely identify shapes. However, as stated
in~\cite{Reuter2006}, practically this does not cause a problem mostly
because the constructed isospectral non-congruent objects in 2D and 3D are rather extreme examples with nonsmooth boundaries.

The spectral signature, shape-DNA, proposed in~\cite{Reuter2006} is
inspired from the properties given above. For a given shape $\Omega$,
its shape-DNA is the first $N$ modes of the spectrum of the Laplace
operator defined on $\Omega$: $[\lambda_1, \lambda_2, \dots, \lambda_N]$. In addition to the properties the shape-DNA inherits from the eigenvalues, the authors also proposed several normalisations to obtain almost scale invariance\footnote{We use the term ``almost" because scale invariance is an application dependent concept and the definition of scale difference between arbitrary objects is a mathematically vague notion. A further discussion of scale invariance is outside the scope of this article and we refer the reader to~\cite{Reuter2006}.}. The normalisations used in the experiments in~\cite{Reuter2006,Niethammer2007,Reuter2009,Lian2011} are given as $\lambda_n \rightarrow \lambda_n V_{\Omega}^{2/d}$ and $\lambda_n \rightarrow \lambda_n / \lambda_1$.

In~\cite{Reuter2006}, the authors also defined a shape distance based
on shape-DNA. Either using the original or its scale invariant
version, this distance is given as
\begin{eqnarray}
  \label{eqn:r_distance} \rho_{SD}^N(\Omega_\lambda,\Omega_\xi) \triangleq  \left[\sum_{n=1}^N \left( \lambda_n - \xi_n \right)^2 \right]^{1/2},
\end{eqnarray}
where $\Omega_\xi$ denotes the object with the spectrum
$\{\xi_n\}_{n=1}^\infty$. Using
$\rho_{SD}^N(\Omega_\lambda,\Omega_\xi)$, the authors were able to
distinguish between distinct shapes~\cite{Lian2011}, construct shape manifolds based on the pairwise distances and perform statistical
comparisons~\cite{Niethammer2007,Reuter2009}.

However, as also pointed out in~\cite{Memoli2010}, due to the diverging nature of the spectrum, $\rho_{SD}^N$ suffers from three essential drawbacks limiting its usability: i) differences at higher modes of the spectrum have higher impacts on the final distance value even though they are not necessarily more informative about the intrinsic geometry, ii) the distance is extremely sensitive to the signature size $N$, while the choice of this parameter is arbitrary, and iii) the distance cannot be defined over the entire spectrum because it does not yield a proper metric in that case. Therefore, defining a sound and intuitive distance based on the spectrum is still an open question for which we propose a solution in the next section.
\section{Weighted Spectral Distance - WESD}\label{sec:wsd_theory}
This section presents the proposed spectral distance, WESD, the analysis of the heat-trace leading to its definition and its theoretical properties. The structure of presentation aims to separate the definition of the distance, which is essential for its practical implementation, from the details related to its derivation and theoretical properties. In this light, we first present the definitions and mention the associated properties with appropriate references to the following subsections, which contain further details.

We define the Weighted Spectral Distance - WESD - for two closed bounded domains with piecewise smooth boundaries, $\Omega_\lambda,\Omega_\xi\subset \mathbb{R}^d$ as
\begin{equation}\label{eqn:wsd_infinite}
\rho(\Omega_\lambda,\Omega_\xi) \triangleq \left[ \sum_{n=1}^{\infty} \left(\frac{|\lambda_n - \xi_n|}{\lambda_n\xi_n}\right)^p \right]^{1/p},
\end{equation}with $p\in\mathbb{R}$ and $p>d/2$. Unlike the distance given in Equation~(\ref{eqn:r_distance}), WESD is defined over the entire eigenvalue sequence and the factor $p$ is not fixed to 2. In addition, the difference at each mode contributes to the overall distance proportional to $|\lambda_n - \xi_n|/\lambda_n\xi_n$ instead of $|\lambda_n - \xi_n|$. The additional $\lambda_n\xi_n$ factor (seeming like a simple addition to Equation~\ref{eqn:r_distance}) actually arises from analysing the relation between the n\superscript{th} mode of the spectrum and the heat-trace, which will be presented in Section~\ref{sec:heat_trace_analysis}. This analysis also provides WESD with a geometric intuition. Furthermore, for $p>d/2$ the infinite sum in the definition is guaranteed to converge to a finite value for any pair of shapes. Hence, WESD  exists. In addition to its existence, WESD also satisfies the triangular inequality making it a pseudometric. These points are proven in Section~\ref{sec:theo_analysis}. Moreover, the pseudometric WESD has a multi-scale aspect with respect to $p$. In Section~\ref{sec:multi} we show that adjusting $p$ controls the sensitivity of WESD with respect shape differences at finer scales, i.e. with respect to geometric differences at local level such as thin protrusions or small bumps. Thus, for higher values of $p$ the distance becomes less sensitive to finer scale differences.

In addition to WESD, we define the normalised score for shape dissimilarity {\it nWESD} as
\begin{equation}\label{eqn:nwsd}
\overline{\rho}(\Omega_\lambda,\Omega_\xi) \triangleq \frac{\rho(\Omega_\lambda,\Omega_\xi)}{\mathbf{W}(\Omega_\lambda,\Omega_\xi)}\in[0,1),
\end{equation}
which maps $\rho(\Omega_\lambda,\Omega_\xi)$ to the $[0,1)$ interval using the shape-dependent normalisation factor
\begin{equation}
\nonumber\mathbf{W}(\Omega_\lambda,\Omega_\xi) \triangleq \left\{C + K\cdot\left[ \zeta\left(\frac{2p}{d}\right) - 1 - \left(\frac{1}{2}\right)^\frac{2p}{d}\right] \right\}^\frac{1}{p}.
\end{equation}
The factors $C$ and $K$ are the shape based coefficients defined in Corollary~\ref{corr:conv}, and $\zeta(\cdot)$ is the Riemann zeta function~\cite{Whittaker}.
Being confined to $[0,1)$, nWESD allows us to i) compare dissimilarities of different pairs of shapes and ii) easily use the shape dissimilarity in combination with scores quantifying other type of differences between objects such as volume overlap in case of matching or Jacard's index in case of accuracy assessment. 

One important issue in defining a distance or a score using the entire eigenvalue sequence is computational limits. In practice we can only compute a finite number of eigenvalues and therefore, can only approximate such distances. Considering this, here we define the finite approximations of WESD and nWESD using the smallest $N$ eigenvalues as
\begin{eqnarray}
\label{eqn:wsd_finite}\rho^N(\Omega_\lambda,\Omega_\xi) &\triangleq& \left[ \sum_{n=1}^N \left( \frac{|\lambda_n-\xi_n}{\lambda_n\xi_n} \right)^p \right]^{1/p} \\
\label{eqn:nwsd_finite}\overline{\rho}^N(\Omega_\lambda,\Omega_\xi)&\triangleq& \frac{\rho^N(\Omega_\lambda,\Omega_\xi)}{\mathbf{W}(\Omega_\lambda,\Omega_\xi)}\in[0,1),
\end{eqnarray}where $N$ is a truncation parameter. 
{Previous works, such as~\cite{Reuter2006,Rustamov2007,Lai2009,Jurman2010,Lai2010}, also define distances based on finite number of modes. However, their view on the distance definition was first to construct finite shape signatures and then to define a distance on the signatures. Therefore, the signature size was a critical component of the definition itself. Furthermore, the effects of the choice of the signature size on the distance values have not been carefully analysed in these works. The view presented here defines the distance directly using the entire sequence without constructing a finite signature. This alleviates the importance of the signature size on the distance. The finite computation given in Equations~\ref{eqn:wsd_finite} and~\ref{eqn:nwsd_finite} are viewed as approximations to the distance and $N$ as the truncation parameter. In this conceptually different setting, unlike previous works, we provide in Section~\ref{sec:choosingN} a careful analysis of the choice of $N$ on the spectral distance.} Specifically, we prove that $\lim_{N\rightarrow\infty}|\rho(\Omega_\lambda,\Omega_\xi)-\rho^N(\Omega_\lambda,\Omega_\xi)| = 0$ and $\lim_{N\rightarrow\infty}|\overline{\rho}(\Omega_\lambda,\Omega_\xi)-\overline{\rho}^N(\Omega_\lambda,\Omega_\xi)| = 0$. Furthermore, we provide a theoretical upper bound for these errors that shows how fast they decrease in the worst case leading to a principled strategy for choosing $N$.

Section~\ref{sec:scale} ends the section by focusing on the invariance of WESD and nWESD to global scale (relative size) differences between objects. Specifically, we discuss how an ``approximate" scale invariance can be attained for WESD and nWESD by following the same strategy proposed in~\cite{Reuter2006}.
\subsection{Analysis of the Heat-Trace and Derivation of WESD}\label{sec:heat_trace_analysis}
{We derive WESD by analysing the mathematical link between the spectrum of an object and its geometry. This link is given by the heat-trace defined in Equation~(\ref{eqn:heat_trace}). Let us consider the heat-trace as a function of both $t$ and the spectrum, $Z(t,\lambda_1,\lambda_2,...)$. The main question we answer is how much the $Z(t,\cdot)$ function changes when we change the $n^{th}$ mode of the spectrum from $\lambda_n$ to $\xi_n$. Considering the polynomial expansion equivalent to $Z(t,\cdot)$ given in Equation~(\ref{eqn:heat_trace}), one can see that the change in the value $Z(t,\cdot)$ is directly related to the changes in the coefficients $a_{s/2}$ and so to changes in the integrals over the local invariants. By analysing the influence of the change in the n\superscript{th} mode on $Z(t,\cdot)$, we actually analyse the influence of this change on the integrals over local geometric invariants. Following this line of thought, we quantify the influence of the change from $\Lambda_n$ to $\xi_n$ on $Z(t,\cdot)$ in terms of $\lambda_n$ and $\xi_n$. This can be done by defining
\begin{eqnarray}
\nonumber\Delta_Z^{n} &\triangleq& \int_0^\infty \left|
Z(t,\dots,\lambda_{n-1},\lambda_n,\lambda_{n+1},\dots) \right.\\
\nonumber &&\ \ \ \ \ \ \ \left. - Z(t,\dots,\lambda_{n-1},\xi_n,\lambda_{n+1},\dots)\right|dt,
\end{eqnarray}which is simply the $L_1$-norm of the difference between the functions that is linked to the difference between $\lambda_n$ and $\xi_n$. Replacing $Z(t,\cdot)$ with its definition leads to
\begin{eqnarray}\label{eqn:heat_trace_integral}
\Delta_Z^{n} &=& \int_0^\infty \left|e^{-\lambda_n t} - e^{-\xi_n t}\right|dt
\end{eqnarray}
Without loss of generality let us assume $\xi_n \geq \lambda_n$. Then 
\begin{equation}
\nonumber e^{-\lambda_n t} \geq e^{-\xi_n t}\ \text{for }t>0.
\end{equation}
We can then evaluate the integral in Equation~(\ref{eqn:heat_trace_integral}) as
\begin{eqnarray}
\nonumber\Delta_Z^{n} = \int_0^{\infty} e^{-\lambda_n t}- e^{-\xi_n t}dt = \frac{|\lambda_n - \xi_n|}{\lambda_n\xi_n}.
\end{eqnarray}
$\Delta_Z^{n}$ captures the influence of the difference at the n\superscript{th} mode on $Z(t,\cdot)$. Now, aggregating these influences across all modes leads to the definition of WESD
\begin{equation}
\nonumber\rho(\Omega_\lambda,\Omega_\xi) = \left[\sum_{n=1}^\infty \left(\Delta_Z^{n}\right)^p\right]^{1/p} = \left[\sum_{n=1}^\infty \left(\frac{|\lambda_n - \xi_n|}{\lambda_n \xi_n}\right)^p\right]^{1/p}.
\end{equation}}

{Surprisingly, the formulation of WESD, which results from the analysis presented above, also has very beneficial properties that makes it theoretically sound and useful in practical applications. These properties will be analysed in the following.}

{Before delving into this analysis though let us make two remarks. The first relates $\rho_{SD}(\cdot,\cdot)$ (Equation~(\ref{eqn:r_distance})) to the analysis of the heat-trace presented above.
\begin{remark}\label{remark:Reuter}
Let us define 
\begin{eqnarray}
\nonumber\Delta_Z^{n,m} &\triangleq& \left|\int_0^\infty \frac{d^m}{dt^m}Z\left(t,...,\lambda_{n-1},\lambda_n,\lambda_{n+1},... \right) \right.\\
\nonumber && \ \ \ \ \left. -  \frac{d^m}{dt^m}Z\left(t,...,\lambda_{n-1},\xi_n,\lambda_{n+1},... \right) dt \right|,
\end{eqnarray}
Then $\Delta_Z^{n,0} = \Delta_Z^n$. Evaluating this integral yields $\Delta_Z^{n,m} = \left| \lambda_n^{m-1} - \xi_n^{m-1}\right|$. By setting $m=2$ $\rho_{SD}(\cdot,\cdot)$ can be derived as follows
\begin{equation}
\nonumber \rho_{SD}(\Omega_\lambda,\Omega_\xi) = \left[ \sum_{n=0}^\infty \left(\Delta_Z^{n,2}\right)^2\right]^{1/2}.
\end{equation}
This derivation not only relates WESD to $\rho_{SD}(\cdot,\cdot)$ but also provides the link between $\rho_{SD}(\cdot,\cdot)$ and the heat-trace. 
\end{remark}}

{The second remark notes the link between WESD and {\em Global Point Signatures} ($\textrm{GPS}$), a local shape descriptor, presented in \cite{Rustamov2007}. 
\begin{remark}\label{remark:Rustamov}
$\textrm{GPS}$, as presented in \cite{Rustamov2007}, is defined for each point in an object $\Omega_\lambda$ as the infinite series $\textrm{GPS}_{\Omega_\lambda}(\mathbf{x})\triangleq\left\{\Phi_{\lambda,n}(\mathbf{x})\right\} \triangleq\left\{ \lambda_n^{-1/2}f_n(\mathbf{x}) \right\}_{n=1}^\infty$, where $\mathbf{x}\in\Omega_\lambda$ and $f_n(\mathbf{x})$ is the n\superscript{th} eigenfunction. $\textrm{GPS}$ has a connection to WESD arising from the following element-wise integrals
\begin{equation}
\nonumber \int_{\Omega_\lambda}\Phi_{\lambda,n}^2(\mathbf{x})d\mathbf{x} = \int_{\Omega_\lambda}\left[ \lambda_n^{-1/2} f_n(\mathbf{x})\right]^2d\mathbf{x} = \lambda_n^{-1},
\end{equation}
where the equality arises from the fact that eigenfunctions form an orthonormal basis in $\Omega_\lambda$ \cite{CourantHilbert}, i.e. $\int_{\Omega_\lambda}f_n(\mathbf{x})f_m(\mathbf{x})d\mathbf{x} = \delta(n-m)$ with $\delta(\cdot)$ being the Dirac's delta. Considering this integral, WESD can also be regarded as a distance between $\textrm{GPS}$' of two objects as
\begin{eqnarray}
\nonumber \left\{\sum_{n=1}^\infty \left[\int_{\Omega_\lambda}\Phi_{\lambda,n}^2(\mathbf{x})d\mathbf{x} - \int_{\Omega_\xi}\Phi_{\xi,n}^2(\mathbf{x})d\mathbf{x}\right]^p\right\}^\frac{1}{p}
= \rho(\Omega_\lambda,\Omega_\xi).
\end{eqnarray}
This link also provides an alternative view on the normalisation factor $\lambda_n^{-1/2}$ used in $\textrm{GPS}$. In \cite{Rustamov2007} author justifies this normalisation factor by noting that for an object the {\em Green's function} can be written as an inner product in the GPS domain, see Section 4 in \cite{Rustamov2007}. This is later used to argue the geometric meaning of GPS as authors point out the use of Green's function in different shape processing tasks. Our link between GPS and WESD provides an alternative view on the normalisation factor as it connects this local signature to the heat-trace $Z(t)$.
\end{remark}}
\subsection{Existence of the Pseudometric
  WESD}\label{sec:theo_analysis}
WESD is defined as the limit of an infinite series as given in
Equation~(\ref{eqn:wsd_infinite}). For such a distance to be a proper one, actually a pseudometric in this case, the limit of the infinite series should exist for any two spectra. In the case of WESD, this is not evident because it is defined over the entire spectra and each spectrum is a divergent sequence. The first corollary presented below proves that when $p>d/2$ WESD indeed satisfies this condition, i.e. the infinite series converges. The corollary further provides an upper bound for this limit, which is used to construct nWESD. We would like to note that for the ease of presentation, the proofs for all the following corollaries and lemmas are given in Appendix B in the supplemental material.
\begin{corollary}\label{corr:conv}
  Let $\Omega_\lambda\subset\mathbb{R}^d$ and
  $\Omega_\xi\subset\mathbb{R}^d$ be any two closed domains with
  piecewise smooth boundaries and $\{\lambda\}_{n=1}^\infty$ and
  $\{\xi\}_{n=1}^\infty$ be their Laplace spectrum. Then the weighted
  spectral distance
  \begin{equation}
    \nonumber\rho(\Omega_\lambda, \Omega_\xi) = \left[ \sum_{n=1}^\infty  \left(\frac{|\lambda_n - \xi_n|}{\lambda_n \xi_n}\right)^p\right]^{1/p}
  \end{equation}converges for $p>\frac{d}{2}$. Furthermore, 
  \begin{equation}
    \label{eqn:n_factor}\rho(\Omega_\lambda, \Omega_\xi) < \left\{ C + K\cdot\left[ \zeta\left(\frac{2p}{d}\right) - 1 - \left(\frac{1}{2}\right)^\frac{2p}{d}\right] \right\}^\frac{1}{p},
  \end{equation}where $\zeta(\cdot)$ is the Riemann zeta function and the coefficients $C$ and $K$ are given as
  \begin{eqnarray}
    \nonumber C &\triangleq& \sum_{i=1,2} \left[\frac{d+2}{d\cdot4\pi^2}\cdot\left(\frac{B_d\hat{V}}{i}\right)^\frac{2}{d} - \frac{1}{\mu}\cdot\left(\frac{d}{d+4}\right)^{i-1}\right]^p\\
    \nonumber K &\triangleq& \left[ \frac{d+2}{d\cdot4\pi^2}\cdot\left(B_d\hat{V}\right)^\frac{2}{d} - \frac{1}{\mu}\cdot\frac{d}{d + 2.64}\right]^p\\
    \nonumber \hat{V} &\triangleq& \max(V(\Omega_\lambda),V(\Omega_\xi)),\ \ \mu\triangleq\max(\lambda_1,\xi_1),
  \end{eqnarray}
  where $V(\cdot)$ denotes the volume (or area in 2D) of an object.
\end{corollary}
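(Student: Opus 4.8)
The plan is to control the general term mode-by-mode by comparison with a $p$-series and then to recognise the tail as a Riemann zeta value. First I would rewrite the summand as $\frac{|\lambda_n-\xi_n|}{\lambda_n\xi_n}=\left|\frac{1}{\lambda_n}-\frac{1}{\xi_n}\right|=\max\!\left(\tfrac{1}{\lambda_n},\tfrac{1}{\xi_n}\right)-\min\!\left(\tfrac{1}{\lambda_n},\tfrac{1}{\xi_n}\right)$, so that bounding each mode reduces to an upper bound on the larger reciprocal together with a lower bound on the smaller one. The two summands appearing inside the brackets of $C$ and $K$ correspond precisely to these two estimates.

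For the larger reciprocal I would invoke the Li--Yau lower bound $\sum_{k=1}^n\lambda_k\ge\frac{d}{d+2}\frac{4\pi^2}{(B_dV)^{2/d}}n^{(d+2)/d}$ and combine it with $\lambda_n\ge\frac1n\sum_{k=1}^n\lambda_k$ (since $\lambda_n$ is the largest of $\lambda_1,\dots,\lambda_n$) to obtain the pointwise estimate $\lambda_n\ge\frac{d}{d+2}\frac{4\pi^2}{(B_dV)^{2/d}}n^{2/d}$. Applying this to both spectra with the larger volume $\hat V=\max(V(\Omega_\lambda),V(\Omega_\xi))$ gives $\max(1/\lambda_n,1/\xi_n)\le\frac{d+2}{d\cdot4\pi^2}(B_d\hat V)^{2/d}n^{-2/d}$, which is exactly the first term inside the brackets of $C$ (for $n=1,2$) and $K$ (for $n\ge3$).

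For the smaller reciprocal, $\min(1/\lambda_n,1/\xi_n)=1/\max(\lambda_n,\xi_n)$, I would use a universal upper bound of the form $\lambda_n\le c_n(d)\,\lambda_1\,n^{2/d}$. The case $n=1$ is immediate since $\max(\lambda_1,\xi_1)=\mu$; the case $n=2$ follows from the Payne--P\'olya--Weinberger inequality $\lambda_2/\lambda_1\le(d+4)/d$, yielding the factor $d/(d+4)$; and for $n\ge3$ a Cheng--Yang--type universal inequality supplies the constant $2.64$. Together with $\lambda_1,\xi_1\le\mu$ this gives $\min(1/\lambda_n,1/\xi_n)\ge\frac1\mu\frac{d}{d+c}n^{-2/d}$ with $c=4$ for $n=2$ and $c=2.64$ for $n\ge3$, matching the subtracted terms. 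Subtracting the two estimates yields, for every $n$, a bound of the shape $\frac{|\lambda_n-\xi_n|}{\lambda_n\xi_n}\le\beta_n\,n^{-2/d}$ whose coefficient $\beta_n$ is exactly the bracketed expression in $C$ (for $n=1,2$) and in $K^{1/p}$ (for $n\ge3$); note that $\beta_n\ge0$ automatically, since the left-hand side is nonnegative, so raising to the $p$-th power preserves the inequality.

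Finally I would raise the per-mode bound to the power $p$ and sum. The modes $n=1,2$ contribute $C$, while the tail gives $K\sum_{n=3}^\infty n^{-2p/d}=K\big[\zeta(2p/d)-1-(1/2)^{2p/d}\big]$, where the series is a Riemann zeta value with its first two terms removed. Since $\zeta(s)<\infty$ exactly when $s>1$ and here $s=2p/d>1\iff p>d/2$, the sum converges and, after taking the $p$-th root, the claimed upper bound follows. I expect the main obstacle to be locating and verifying the explicit universal upper eigenvalue bound producing the constant $2.64$ for $n\ge3$ (the Li--Yau lower bound and the $n=2$ PPW bound are classical, and the summation step is routine); a secondary point is checking that the volume and first-mode normalisations are applied consistently, so that the same $\hat V$ and $\mu$ can be used uniformly across all modes.
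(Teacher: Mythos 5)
Your proposal is correct and follows essentially the same route as the paper's own proof: a mode-by-mode bound obtained from the Li--Yau lower bound (for the larger reciprocal) and the Yang/Cheng--Yang universal upper bounds (for the smaller reciprocal), followed by comparison of the tail with $K\sum_{n\ge 3}n^{-2p/d}=K[\zeta(2p/d)-1-(1/2)^{2p/d}]$. The only cosmetic differences are that you derive the pointwise Li--Yau estimate from its summed form and attribute the $n=2$ constant $(d+4)/d$ to Payne--P\'olya--Weinberger rather than to Yang's inequality at $n=1$, both of which the paper's Appendix A supplies directly.
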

The Inequality~(\ref{eqn:n_factor}) states that WESD has a
shape-dependent upper bound. We thus can map the WESD to the $[0,1)$
interval through normalising it with this upper bound. The nWESD
score, given in Equation~\ref{eqn:nwsd} is constructed based on this
strategy. Since its existence is established next we prove that WESD
is a pseudometric, i.e. satisfies the other criteria to be a pseudometric, such as the triangle inequality.
\begin{corollary}\label{corr:pseudometric}
  $\rho(\Omega_\lambda,\Omega_\xi)$ is a pseudometric for $d\geq2$.
\end{corollary}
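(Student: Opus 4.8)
The plan is to verify the defining axioms of a pseudometric — nonnegativity, symmetry, vanishing on the diagonal, and the triangle inequality — while relying on Corollary~\ref{corr:conv} to guarantee that $\rho$ is finite whenever $p>d/2$. The crucial first observation is a purely algebraic rewriting of each summand: since $\lambda_n,\xi_n>0$,
\begin{equation}
\nonumber \frac{|\lambda_n - \xi_n|}{\lambda_n \xi_n} = \left| \frac{1}{\lambda_n} - \frac{1}{\xi_n} \right|.
\end{equation}
Hence, writing $u_n \triangleq 1/\lambda_n$ and $v_n \triangleq 1/\xi_n$ for the sequences of reciprocal eigenvalues, WESD is exactly the $\ell^p$ distance between these sequences,
\begin{equation}
\nonumber \rho(\Omega_\lambda, \Omega_\xi) = \left[ \sum_{n=1}^\infty |u_n - v_n|^p \right]^{1/p} = \| u - v \|_p.
\end{equation}
Corollary~\ref{corr:conv} ensures these differences lie in $\ell^p$, so this reformulation is well defined and reduces the whole question to standard properties of $\ell^p$ spaces.

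With this reformulation the easy axioms are immediate: nonnegativity holds because every term is a $p$-th power of an absolute value, symmetry follows from $|u_n - v_n| = |v_n - u_n|$, and $\rho(\Omega_\lambda,\Omega_\lambda)=0$ because every summand vanishes when the spectrum is compared with itself. For the triangle inequality, given a third object $\Omega_\eta$ with reciprocal-eigenvalue sequence $w_n \triangleq 1/\eta_n$, I would invoke Minkowski's inequality in $\ell^p$,
\begin{equation}
\nonumber \| u - w \|_p \leq \| u - v \|_p + \| v - w \|_p,
\end{equation}
which translates verbatim into $\rho(\Omega_\lambda, \Omega_\eta) \leq \rho(\Omega_\lambda, \Omega_\xi) + \rho(\Omega_\xi, \Omega_\eta)$.

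The only subtlety — and the reason the hypothesis $d\geq 2$ appears — is that Minkowski's inequality requires $p\geq 1$. For $d\geq 2$ the convergence constraint $p>d/2$ already forces $p>1$, so the triangle inequality holds automatically with no extra assumption; for $d=1$ one could take $p<1$, for which the $\ell^p$ functional fails the triangle inequality. Thus the real content of the statement is the compatibility of the two constraints, and I expect no obstacle beyond observing that $d\geq 2$ is exactly what reconciles the convergence requirement $p>d/2$ with the $p\geq 1$ needed for Minkowski. Finally, I would note that $\rho$ is only a \emph{pseudo}metric: $\rho(\Omega_\lambda,\Omega_\xi)=0$ forces merely $\lambda_n=\xi_n$ for all $n$, i.e. that the two spectra coincide, and by the existence of isospectral noncongruent domains~\cite{Gordon1992} this does not entail $\Omega_\lambda=\Omega_\xi$, so the identity-of-indiscernibles axiom genuinely fails.
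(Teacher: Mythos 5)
Your proof is correct, and it reaches the conclusion by a genuinely cleaner route than the paper's. Both arguments dispatch nonnegativity, symmetry, and vanishing on the diagonal trivially, and both finish the triangle inequality with Minkowski's inequality in $\ell^p$ (justified, as you note, by $p>d/2\geq 1$ when $d\geq 2$). The difference is in how the componentwise triangle inequality is obtained. The paper never observes your identity $\frac{|\lambda_n-\xi_n|}{\lambda_n\xi_n}=\bigl|\tfrac{1}{\lambda_n}-\tfrac{1}{\xi_n}\bigr|$; instead it proves $\varrho_n(\Omega_\lambda,\Omega_\eta)+\varrho_n(\Omega_\eta,\Omega_\xi)\geq\varrho_n(\Omega_\lambda,\Omega_\xi)$ directly by a three-case analysis on the relative ordering of $\lambda_n,\eta_n,\xi_n$, invoking a separate monotonicity lemma for $f(a,b)=(a-b)/(ab)$ in the two non-interleaved cases. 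Your reciprocal rewriting makes that entire case analysis collapse into the ordinary triangle inequality $|u_n-w_n|\leq|u_n-v_n|+|v_n-w_n|$ on $\mathbb{R}$, and it additionally exposes the structural fact that WESD is literally the $\ell^p$ distance between the sequences of reciprocal eigenvalues — a reformulation the paper does not state but which makes the pseudometric property (and the convergence, since $1/\lambda_n\sim n^{-2/d}$ puts each reciprocal spectrum individually in $\ell^p$) essentially automatic. Your closing remark on why the identity of indiscernibles fails (isometries and isospectral noncongruent domains) matches the paper's discussion following the corollary. The one thing worth tightening is the aside about $d=1$: the hypothesis $p>d/2$ does not by itself force $p<1$ there, it merely fails to rule it out, so the statement's restriction to $d\geq2$ is about guaranteeing $p\geq1$ for every admissible $p$ rather than about $d=1$ necessarily breaking; but this does not affect the claimed result.
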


We note that WESD is not a metric because the spectrum is invariant to
isometries, which is a desirable property for shape analysis. However,
in addition to this, the spectrum is also invariant to isospectral
non-congruent shapes. This is not desirable but does not cause
problems in practice as discussed in Section \ref{sec:solo} and also
confirmed in our experiments.
\subsection{On the multi-scale aspect of WESD}~\label{sec:multi} 
The previous section highlighted the role of $p$ on the convergence
properties of WESD and therefore on its existence. We now demonstrate that $p$ also provides WESD a multi-scale characteristic. The sensitivity of WESD to the shape differences at finer scales depends on the value of $p$. Specifically, we show that the higher the value $p$ the less sensitive WESD is to finer scale details and its sensitivity increases as $p$ gets lower.

The multi-scale aspect of WESD arises from the relationship between
the Laplace operators and heat diffusion processes~\cite{EvansBook}. We first present an intuitive summary of this relationship, which is about the multi-scale aspect of $Z(t)$ and $t$ in particular. For a more mathematical treatment we refer the reader to~\cite{Sun2009}. As stated in~\cite{Sun2009} and~\cite{Memoli2010}, $t$ can be interpreted as the time variable in a heat diffusion process within an object. A useful visual analogy to consider here is the Laplacian smoothing of a surface where $t$ would correspond to the amount of smoothing. Similar to the surface smoothing, as $t$ increases, the local geometric details of an object, such as sharp ridges or steep valleys, lose further their influence on the $Z(t)$ value. As a result $Z(t)$ becomes somewhat insensitive to these local geometric details, in
other words shape details at finer scales. From an alternative view, the value of $Z(t)$ loses its information content with regards to local geometric details. This effect intuitively summarizes the multi-scale characteristic of the heat-trace with respect to $t$.

Having explained the multi-scale aspect of $Z(t)$, we now analyse how this aspect is reflected upon the eigenvalues. To do so let us define the {\em influence ratio} $\mathcal{D}(n,t)\triangleq \frac{e^{-\lambda_n t}}{Z(t)}$. This ratio captures the influence of the $n^{th}$ mode on the heat-trace. In other words, the higher the ratio, the higher the influence of $\lambda_n$ on the value of $Z(t)$ at that specific $t$. The following lemma compares the influence ratios of different modes and how this comparison depends on $t$.
\begin{lemma}~\label{lemma:influenceratio} Let
  $\Omega_\lambda\subset\mathbb{R}^d$ represent an object with
  piecewise smooth boundary and $\mathcal{D}(l,t)\triangleq
  \frac{e^{-\lambda_l t}}{Z(t)}$ be the corresponding influence ratio
  of mode $l$ at $t$. Then for any two spectral indices
  $m>n>0$
  \begin{equation}\nonumber\mathcal{D}(n,t) > \mathcal{D}(m,t),\ \
    \forall t>0\end{equation}
  and particularly for two $t$ values such that $t_1 > t_2$
  \begin{equation} \nonumber
    \frac{\mathcal{D}(m,t_1)}{\mathcal{D}(n,t_1)} <
    \frac{\mathcal{D}(m,t_2)}{\mathcal{D}(n,t_2)}.\end{equation}
\end{lemma}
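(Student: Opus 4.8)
The plan is to exploit the fact that the denominator $Z(t)$ is common to both influence ratios, so that every comparison reduces to a comparison of the numerator exponentials. Throughout I would use that the spectrum is ordered, $0 < \lambda_1 \le \lambda_2 \le \dots$, so that $m > n$ forces $\lambda_m \ge \lambda_n$, and that $Z(t) > 0$ for every $t > 0$.

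For the first inequality I would simply write the difference over a common denominator,
\begin{equation}\nonumber
\mathcal{D}(n,t) - \mathcal{D}(m,t) = \frac{e^{-\lambda_n t} - e^{-\lambda_m t}}{Z(t)}.
\end{equation}
Since $Z(t) > 0$, the sign of the left-hand side equals the sign of the numerator. For $t > 0$ the map $\lambda \mapsto e^{-\lambda t}$ is strictly decreasing, so $\lambda_n < \lambda_m$ yields $e^{-\lambda_n t} > e^{-\lambda_m t}$ and hence $\mathcal{D}(n,t) > \mathcal{D}(m,t)$ for all $t>0$. (If the spectrum happened to have a repeated eigenvalue with $\lambda_n = \lambda_m$ the two ratios would coincide; I would note that the strict statement is to be read for distinct modes, which is the situation the multi-scale discussion concerns.)

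For the second inequality the key observation is that $Z(t)$ cancels entirely from the ratio of the two influence ratios:
\begin{equation}\nonumber
\frac{\mathcal{D}(m,t)}{\mathcal{D}(n,t)} = \frac{e^{-\lambda_m t}/Z(t)}{e^{-\lambda_n t}/Z(t)} = e^{-(\lambda_m - \lambda_n)\, t}.
\end{equation}
Because $\lambda_m - \lambda_n > 0$, the right-hand side is a strictly decreasing function of $t$. Therefore $t_1 > t_2$ gives $e^{-(\lambda_m - \lambda_n)\, t_1} < e^{-(\lambda_m - \lambda_n)\, t_2}$, which is exactly the claimed relation $\frac{\mathcal{D}(m,t_1)}{\mathcal{D}(n,t_1)} < \frac{\mathcal{D}(m,t_2)}{\mathcal{D}(n,t_2)}$.

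In short, both parts become immediate once one notices that $Z(t)$ plays no role in the comparisons: it is a common positive factor in the first statement and cancels completely in the second. I do not expect any genuine analytic obstacle here; the only point requiring care is the strict-versus-nonstrict distinction arising from possibly repeated eigenvalues, which I would handle by taking the compared modes to be distinct, i.e. $\lambda_n < \lambda_m$, as the surrounding discussion implicitly assumes.
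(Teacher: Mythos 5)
Your proof is correct and follows essentially the same route as the paper's: the first inequality from the common positive denominator $Z(t)$ and the monotonicity of $\lambda\mapsto e^{-\lambda t}$, and the second from the cancellation $\mathcal{D}(m,t)/\mathcal{D}(n,t)=e^{-(\lambda_m-\lambda_n)t}$. Your added remark about repeated eigenvalues is a fair point of care that the paper glosses over by implicitly assuming $\lambda_n<\lambda_m$.
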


The first inequality of the lemma indicates that the lower modes in the spectrum have more influence on the value of $Z(t)$ than the higher modes. The second inequality shows that the influence of the
higher modes become more prominent as $t$ decreases. Considering that
for lower $t$ values $Z(t)$ is more informative with regards to shape
details at finer scales, Lemma~\ref{lemma:influenceratio} suggests
that the higher modes are more important for finer scales than for coarser scales. We illustrate this observation on a synthetic example shown in Figure~\ref{fig:multi_scale} with the pair (a) + (b) being an example showing coarser shape differences and the pair (a) + (c) showing finer differences. The plots given in Figure~\ref{fig:multi_scale} (d) and (e) show the corresponding spectral differences observed at modes between 1 and 150. {Between (a) and (b) the shape differences are at the coarse level. According to Lemma~\ref{lemma:influenceratio} these differences should show up at the very first modes. On the other hand, between (a) and (c) the differences are at a finer scale and furthermore the objects are very similar at the coarse level. Lemma~\ref{lemma:influenceratio} states that these differences therefore, should show up at higher modes and the differences at the lower modes should be low. Satisfying these expectations, the differences at the first few modes shown in plot (d) have relatively large values compared to the ones in plot (e).} Furthermore, the amplitude of the differences at higher modes are generally larger in plot (e) than in plot(d), especially after 100.
\begin{figure}[!h]
  \begin{center}
    \subfigure[]{\includegraphics[width=0.23\linewidth]{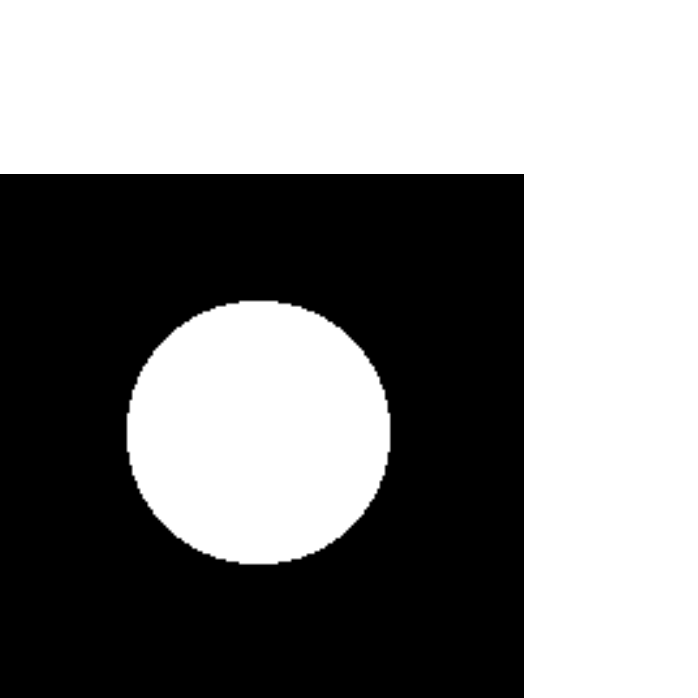}}
    \subfigure[]{\includegraphics[width=0.23\linewidth]{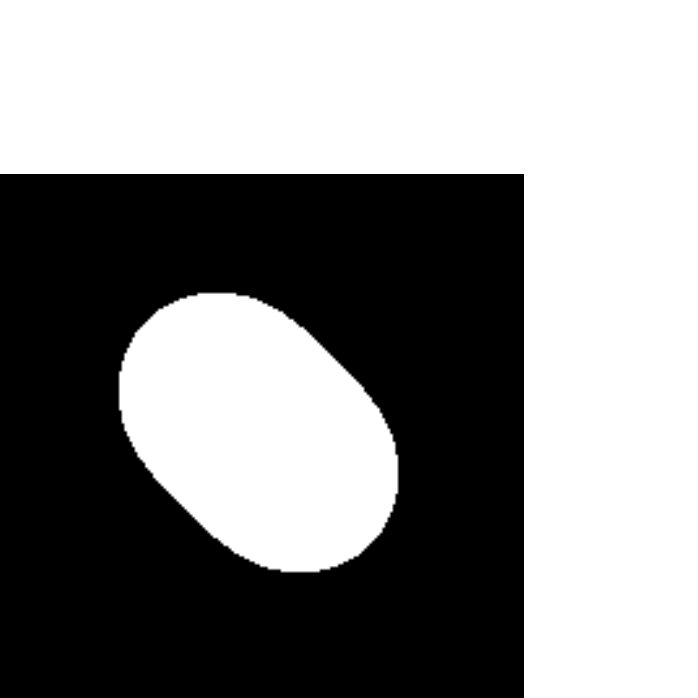}}
    \subfigure[]{\includegraphics[width=0.23\linewidth]{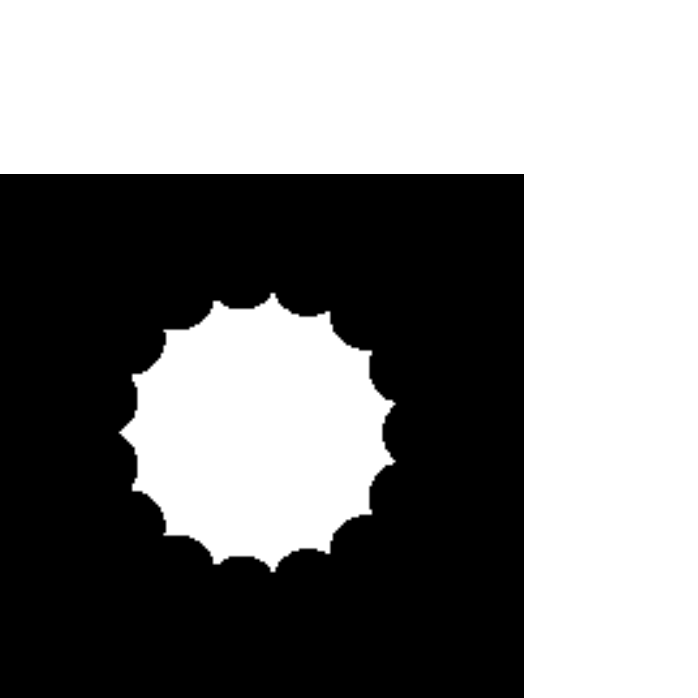}}
    \subfigure[between (a) and (b)] {\includegraphics[width=0.48\linewidth]{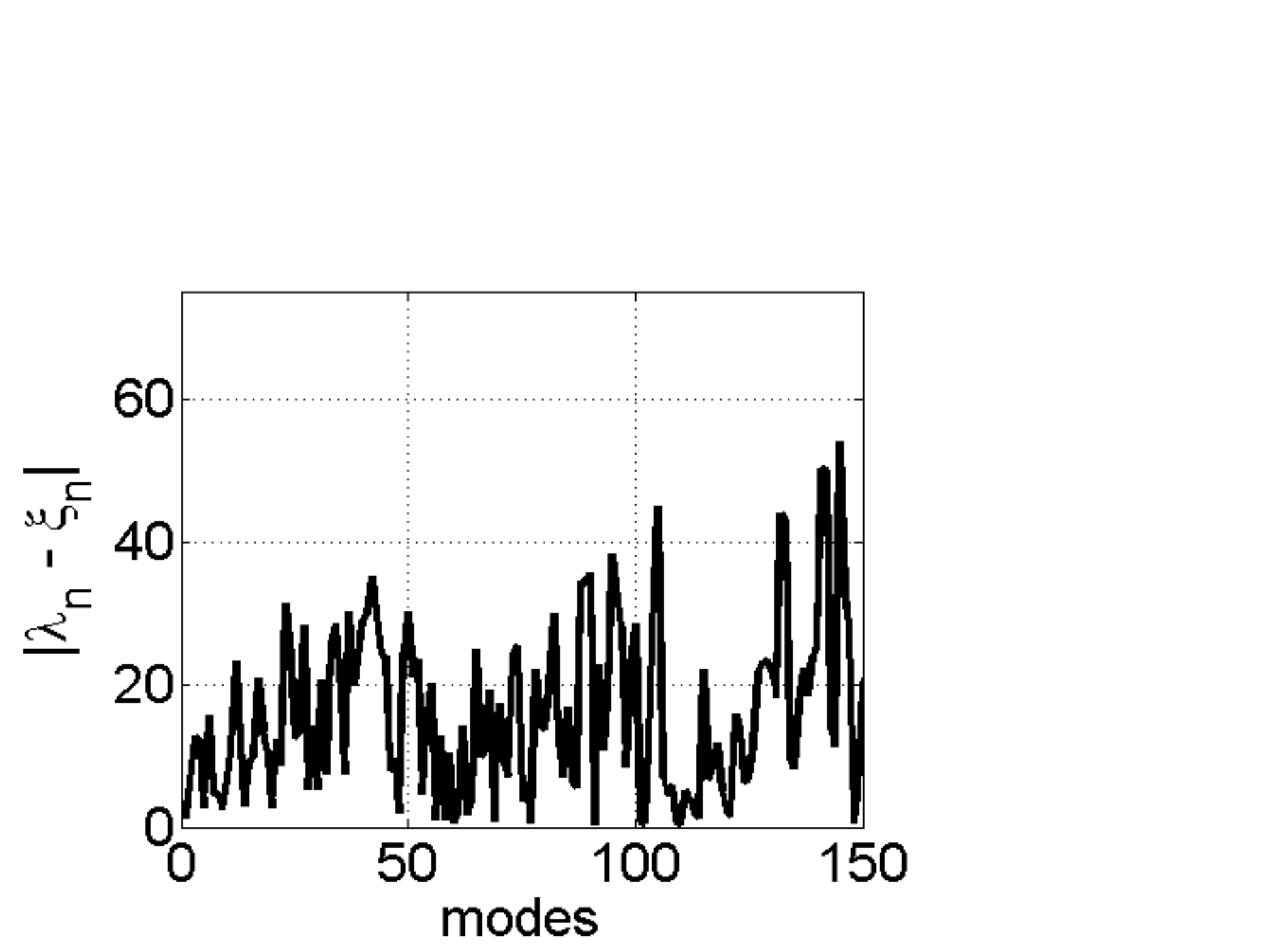}}
    \subfigure[between (a) and (c)] {\includegraphics[width=0.48\linewidth]{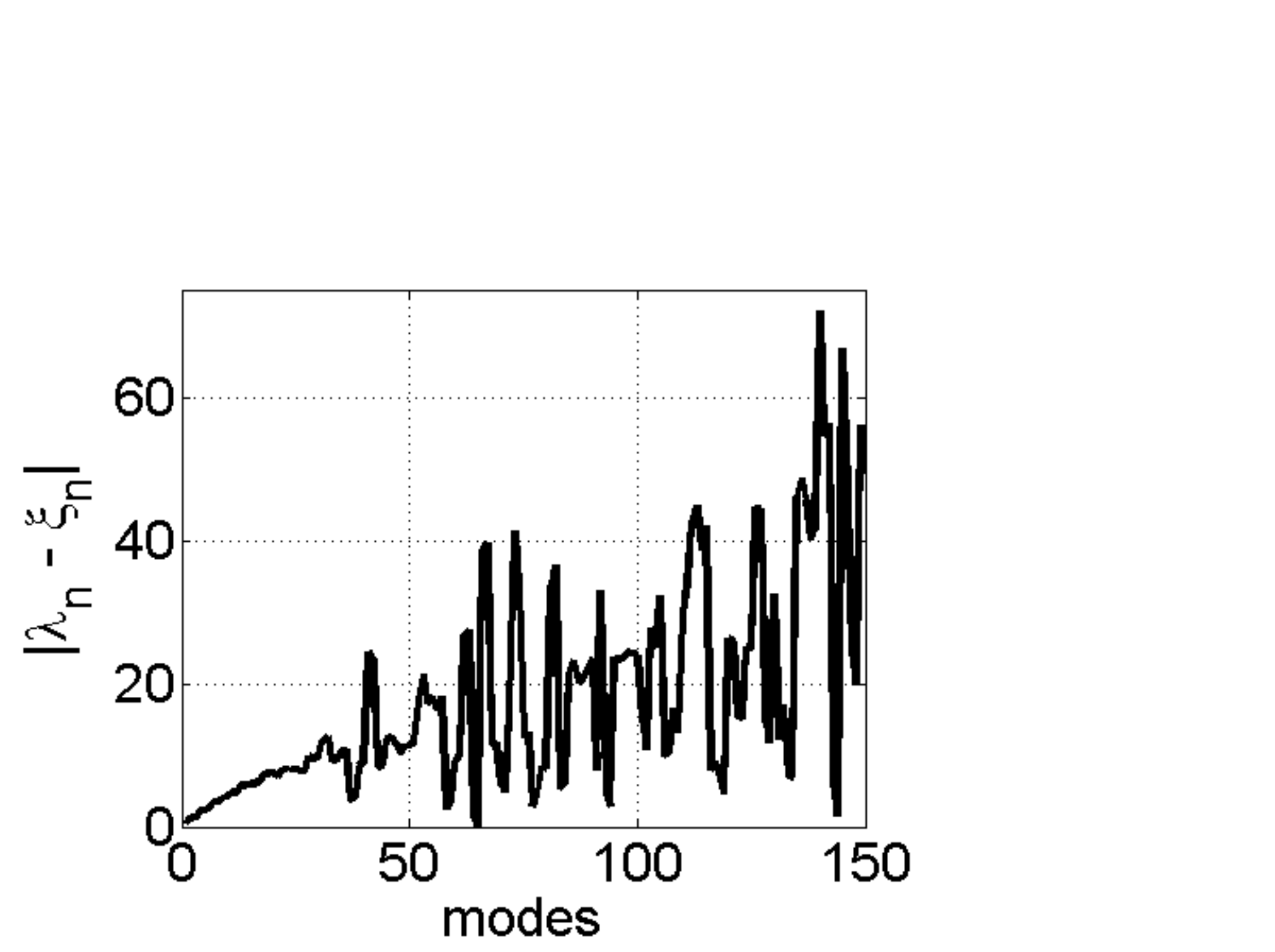}}
  \end{center}
  \caption{\label{fig:multi_scale}Multi-scale characteristics of different spectral modes:(a), (b) and (c) show three synthetic shapes. In (d) we plot the absolute differences between the corresponding modes of (a) and (b) with respect to the spectral index. In (e) we plot the same difference for the shapes in (a) and (c). The shape difference between (a) and (b), which is at a coarser level, is already captured at the lower spectral modes. The difference between (a) and (c) results in lower differences in lower spectral modes because these objects are more similar at a coarser level. At the higher spectral modes though the difference between (a) and (c) becomes more prominent since these objects differ more substantially at the finer scales. The plots in (d) and (e) demonstrate that the higher modes for a given object are more important for finer scale shape details.}
\end{figure}

In order now to connect these findings to WESD and $p$ let us present
the following corollary, which studies the influence of $p$ on the
components inside the infinite sum defining the distance.
\begin{corollary}~\label{corr:multires} Let $\Omega_\lambda$ and
  $\Omega_\xi$ be two objects with piecewise smooth boundaries. Then
  for any two scalars with with $p>d/2$, $q>d/2$, $p \geq q$ and for all $n$ with   $|\lambda_n - \xi_n|>0$ there exists a $M>n$ so that $\forall m\geq M$
  \begin{equation}
    \nonumber \frac{\left(\frac{|\lambda_m - \xi_m|}{\lambda_m\xi_m}\right)^p}{\left(\frac{|\lambda_n - \xi_n|}{\lambda_n\xi_n}\right)^p} \leq \frac{\left(\frac{|\lambda_m - \xi_m|}{\lambda_m\xi_m}\right)^q}{\left(\frac{|\lambda_n - \xi_n|}{\lambda_n\xi_n}\right)^q}
  \end{equation}
\end{corollary}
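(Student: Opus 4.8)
The plan is to reduce the stated inequality to an elementary fact about powers of a number in the unit interval, and then to use the convergence established in Corollary~\ref{corr:conv} to control the tail of the sequence. Throughout I write $r_k \triangleq |\lambda_k - \xi_k|/(\lambda_k \xi_k)$ for the (unraised) $k$-th summand in the definition of WESD. The hypothesis $|\lambda_n - \xi_n| > 0$ guarantees $r_n > 0$, so the quotients appearing in the statement are well defined, and the claimed inequality is exactly
\begin{equation}
\nonumber \left(\frac{r_m}{r_n}\right)^p \le \left(\frac{r_m}{r_n}\right)^q.
\end{equation}

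First I would observe that, setting $x \triangleq r_m/r_n \ge 0$, the target inequality is simply $x^p \le x^q$. Since $p \ge q > 0$, this holds precisely when $x \le 1$: for $0 < x \le 1$ one has $x^p = x^q \cdot x^{p-q}$ with $x^{p-q} \le 1$, while for $x = 0$ both sides vanish. Hence the whole corollary is equivalent to producing an index $M > n$ beyond which $r_m \le r_n$.

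Second, I would show $r_m \to 0$ as $m \to \infty$. The quickest route is to invoke Corollary~\ref{corr:conv}: since $q > d/2$, the series $\sum_m r_m^q$ converges, and the terms of a convergent series of nonnegative numbers must tend to zero, so $r_m^q \to 0$ and therefore $r_m \to 0$. (Alternatively one reads this off Weyl's asymptotic law, which gives $\lambda_m, \xi_m \sim m^{2/d}$, so that $r_m$ decays like $m^{-2/d}$.) Because $r_n > 0$ is a fixed positive number, the convergence $r_m \to 0$ furnishes an $M > n$ with $r_m \le r_n$ for all $m \ge M$, which is exactly what the first step requires.

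I do not expect a genuine obstacle here, as both ingredients are elementary once Corollary~\ref{corr:conv} is in hand. The only point demanding care is the direction of the monotonicity $x^p \le x^q$, which reverses according to whether $x \le 1$ or $x > 1$; it is precisely the tail behaviour $r_m \to 0$ that places us in the regime $x \le 1$, where the inequality holds, so the statement could not be strengthened to \emph{all} $m > n$ without the asymptotic decay of $r_m$.
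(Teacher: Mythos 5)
Your proposal is correct and follows essentially the same route as the paper: both arguments use the convergence of $\sum_m r_m^q$ from Corollary~\ref{corr:conv} to conclude that the terms tend to zero, hence that $r_m/r_n \le 1$ beyond some $M$, and then apply the elementary fact that $x \mapsto x^{p}$ is dominated by $x \mapsto x^{q}$ on $[0,1]$ when $p \ge q$ (the paper phrases this by writing $p = kq$ with $k \ge 1$ and raising a quantity less than one to the power $k$, which is the same manipulation as your factorization $x^p = x^q \cdot x^{p-q}$). Your additional remark that the statement cannot hold for all $m > n$ without the tail decay is a correct and worthwhile observation, but the substance of the argument is identical.
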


Thus, the relative contributions of the higher spectral modes on $\rho(\Omega_\lambda,\Omega_\xi)$ with respect to the contributions of the lower modes depend on the value of $p$.  Specifically, the higher spectral modes become more influential as $p$ decreases. Combining this finding with the result of Lemma~\ref{lemma:influenceratio}, we follow that as $p$ increases WESD gives less importance to differences at higher spectral modes and therefore becomes less sensitive to the shape differences at finer scales. This provides WESD with a multi-scale aspect with respect to $p$ and also provides us the intuition for choosing a proper value for $p$.
\subsection{Finite Approximations of WESD and
  nWESD}\label{sec:choosingN}
One of the important practical questions regarding spectral distances is the number of modes to be included in the calculation of the distance. The computation of eigenvalues and eigenfunctions can be expensive and inaccurate especially for the higher modes. Therefore, spectral distances require the user to set a finite number of modes to be used. {This parameter is often referred to as the signature size. Having defined the distance over the entire sequence, we refer to it as the {\em truncation parameter}. This actually provides a different perspective on the number of modes used to compute the distance. In previous works, such as~\cite{Reuter2006, Rustamov2007, Lai2009}, the value of this parameter, viewed as the signature size, is often set arbitrarily and its effect on the distances have not been carefully analysed. Here, viewing it as a truncation parameter, we study its influence. Specifically, we formulate the difference between using the entire spectra to only using a finite number of modes as an {\em approximation/truncation error}. So we analyse how this error changes with respect to the truncation parameter.} We specifically show in the next corollary that the errors in approximating WESD and nWESD by the first $N$ modes converges to zero as $N$ increases. Furthermore, we provide an upper bound for both errors as a function of $N$.
\begin{corollary}\label{corr:trunc}
  Let $\rho^N(\Omega_\lambda,\Omega_\xi)$ be the truncated
  approximation of $\rho(\Omega_\lambda,\Omega_\xi)$ based on the first $N$ modes and $\overline{\rho}^N(\Omega_\lambda, \Omega_\xi)$ of $\overline{\rho}(\Omega_\lambda,\Omega_\xi)$. Then $\forall p>d/2$
  \begin{equation}\nonumber\lim_{N\rightarrow\infty}|\rho -\rho^N| =
    0\end{equation}and
  \begin{equation}\nonumber\lim_{N\rightarrow\infty}|\overline{\rho} -
    \overline{\rho}^N| = 0.\end{equation} Furthermore, for a given
  $N\geq 3$ the truncation errors $|\rho-\rho^N|$ and
  $|\overline{\rho}-\overline{\rho}^N|$ can be bounded by
  \begin{eqnarray}
    \label{eqn:errbound_wesd} \left|\rho - \rho^N\right| &<& \left\{C + K\cdot\left[ \zeta\left(\frac{2p}{d}\right) - 1 - \left(\frac{1}{2}\right)^\frac{2p}{d}\right]\right\}^\frac{1}{p}\\
    \nonumber & & - \left\{C + K\cdot\left[ \sum_{n=3}^N \left(\frac{1}{n}\right)^\frac{2p}{d} \right]\right\}^\frac{1}{p}\\
    \label{eqn:errbound_nwesd} \left|\overline{\rho} - \overline{\rho}_N\right| &<& 1 - \left\{\frac{ C + K\cdot\left[ \sum_{n=3}^N \left(\frac{1}{n}\right)^\frac{2p}{d} \right] }{C + K\cdot\left[ \zeta\left(\frac{2p}{d}\right) - 1 - \left(\frac{1}{2}\right)^\frac{2p}{d}\right]}\right\}^\frac{1}{p}
  \end{eqnarray}
\end{corollary}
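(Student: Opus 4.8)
The plan is to treat the two claims---the two limits and the explicit bounds---separately, after one simplifying observation. Writing $a_n \triangleq \left(\tfrac{|\lambda_n-\xi_n|}{\lambda_n\xi_n}\right)^p$, note that $\tfrac{|\lambda_n-\xi_n|}{\lambda_n\xi_n}=\left|\tfrac{1}{\lambda_n}-\tfrac{1}{\xi_n}\right|$, so $\rho=\bigl(\sum_{n\ge 1}a_n\bigr)^{1/p}$ and $\rho^N=\bigl(\sum_{n=1}^N a_n\bigr)^{1/p}$ are, respectively, the full and the first-$N$-coordinate $\ell^p$ distances between the reciprocal spectra $\{1/\lambda_n\}$ and $\{1/\xi_n\}$. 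Corollary~\ref{corr:conv} already guarantees $A\triangleq\sum_{n\ge1}a_n=\rho^p<\infty$ and supplies the per-mode estimates used below, which I would reuse rather than re-derive.

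For the convergence statement, the partial sums $\sum_{n=1}^N a_n$ form a nondecreasing sequence of nonnegative reals increasing to $A<\infty$. Since $t\mapsto t^{1/p}$ is continuous on $[0,\infty)$, we get $\rho^N\to A^{1/p}=\rho$, hence $\lim_{N\to\infty}|\rho-\rho^N|=0$. Because the normalisation factor $\mathbf{W}(\Omega_\lambda,\Omega_\xi)$ is strictly positive and independent of $N$, dividing gives $|\overline{\rho}-\overline{\rho}^N|=|\rho-\rho^N|/\mathbf{W}\to 0$, establishing the second limit.

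For the explicit bound, the nonnegativity of the $a_n$ forces $\rho\ge\rho^N$, so $|\rho-\rho^N|=\rho-\rho^N$. The triangle inequality in $\ell^p$ (equivalently, subadditivity of $t\mapsto t^{1/p}$, valid since $p>d/2\ge1$) gives $\rho\le\rho^N+\bigl(\sum_{n>N}a_n\bigr)^{1/p}$, hence $\rho-\rho^N\le\bigl(\sum_{n>N}a_n\bigr)^{1/p}$. I would then majorise the tail using exactly the per-mode bounds from the proof of Corollary~\ref{corr:conv}: for $n\ge3$ one has $a_n\le K\,n^{-2p/d}$, so (recalling $N\ge3$) $\sum_{n>N}a_n\le K\sum_{n>N}n^{-2p/d}$. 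Rewriting this tail of the zeta series through $\zeta(2p/d)-1-(1/2)^{2p/d}=\sum_{n\ge3}n^{-2p/d}$ identifies $K\sum_{n>N}n^{-2p/d}$ with $U-L_N$, the difference of the full majorant $U\triangleq C+K[\zeta(2p/d)-1-(1/2)^{2p/d}]$ and the truncated majorant $L_N\triangleq C+K\sum_{n=3}^N n^{-2p/d}$ appearing in Corollary~\ref{corr:conv}.

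The main obstacle is the final step: passing from this clean estimate $\rho-\rho^N\le(U-L_N)^{1/p}$ to the sharper displayed form $U^{1/p}-L_N^{1/p}$ in Inequality~(\ref{eqn:errbound_wesd}). Because $t\mapsto t^{1/p}$ is concave and vanishes at the origin, one only has $U^{1/p}-L_N^{1/p}\le(U-L_N)^{1/p}$ in general, so the displayed bound is the tighter of the two and does not follow from the tail estimate alone. I would obtain it by combining the full upper bound $\rho<U^{1/p}$ from Corollary~\ref{corr:conv} with the matching truncated majorant $L_N$ for the first $N$ modes and tracking the difference of $p$-th roots directly, using the explicit constants $C,K$ to control the comparison; this bookkeeping, rather than any single inequality, is the delicate part. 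Once Inequality~(\ref{eqn:errbound_wesd}) is established, the nWESD bound is immediate: since $\mathbf{W}=U^{1/p}$, dividing yields $|\overline{\rho}-\overline{\rho}^N|<1-(L_N/U)^{1/p}$, which is precisely Inequality~(\ref{eqn:errbound_nwesd}).
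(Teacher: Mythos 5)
Your proof of the two limit statements is correct and is essentially the paper's argument: the paper passes from convergence of $\sum_n \varrho_n^p$ to convergence of the $p$-th roots via the elementary implication $a^p-b^p\rightarrow 0 \Rightarrow a-b\rightarrow 0$, and your continuity-of-$t^{1/p}$ phrasing is the same step stated more cleanly; the reduction of the nWESD limit to the WESD one by dividing by the $N$-independent factor $\mathbf{W}$ also matches.

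The genuine gap is exactly the one you flag at the end, and it is not ``bookkeeping.'' Writing $S\triangleq\sum_{n\ge1}a_n$, $S_N\triangleq\sum_{n=1}^N a_n$, the componentwise majorization from Corollary~\ref{corr:conv} gives only $S<U$ and $S_N<L_N$, and these two facts push the difference $S^{1/p}-S_N^{1/p}$ in \emph{opposite} directions: because $t\mapsto t^{1/p}$ is concave, the map $x\mapsto (x+\delta)^{1/p}-x^{1/p}$ is decreasing in $x$, so enlarging $S_N$ to $L_N$ shrinks the difference rather than enlarging it. Concretely, if $S_N$ is much smaller than $L_N$ (e.g. $\lambda_n=\xi_n$ for $n\le N$) while the tail $\sum_{n>N}a_n$ is comparable to $U-L_N$, then $S^{1/p}-S_N^{1/p}$ can exceed $U^{1/p}-L_N^{1/p}$; since there are no nontrivial lower bounds on the $a_n$ (they may vanish), no tracking of the constants $C,K$ will close this. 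What your argument genuinely establishes is the weaker bound $\left|\rho-\rho^N\right|\le\left(U-L_N\right)^{1/p}=\left[K\sum_{n>N}n^{-2p/d}\right]^{1/p}$ (and, after dividing by $\mathbf{W}=U^{1/p}$, the corresponding bound $\left[(U-L_N)/U\right]^{1/p}$ for nWESD), and you should state that instead. You should also be aware that the paper's own proof does not supply the missing ingredient: it asserts the chain $\bigl[\sum_{n=1}^\infty\varrho_n^p\bigr]^{1/p}-\bigl[\sum_{n=1}^N\varrho_n^p\bigr]^{1/p}<U^{1/p}-L_N^{1/p}$ as ``a direct result of Corollary~\ref{corr:conv},'' which is precisely the invalid two-sided substitution described above. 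So your instinct that the final step is the delicate one is correct; the defensible version of the corollary replaces Inequalities~(\ref{eqn:errbound_wesd}) and~(\ref{eqn:errbound_nwesd}) by the $(U-L_N)^{1/p}$ forms.
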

The above corollary has important practical implications. First of all, the sensitivities of $\rho^N$ and $\overline{\rho}^N$ with respect to $N$ decreases as $N$ increases. For any application relying on the shape distances, such as constructing low dimensional embeddings, this reduced sensitivity is particularly important as it provides stability with respect to $N$ both for the distance and for the application using the distance. We note that the opposite is true for $\rho_{SD}^N$, which is one of the main disadvantages of this distance.

{In addition, Corollary~\ref{corr:trunc} can guide the choice for the number of modes $N$ and the norm type $p$. Specifically, the error upper bounds given in Equations~\ref{eqn:errbound_wesd} and ~\ref{eqn:errbound_nwesd} provide the worst case errors for a given pair of shapes without the need to compute the eigenvalues. So for instance, once a number of modes are computed then based on the distance value obtained so far and the worst case error computed using the upper bounds, one can decide whether to compute more modes or not. Furthermore. these upper bounds are shape-specific as they depend on $C$ and $K$. One can go one step further and define a shape-independent {\em residual ratio} for $N\geq3$ and $p>d/2$ as}
\begin{equation}\label{eqn:residual_ratio}
  R(N,p) \triangleq 1 - \left[\frac{\sum_{n=3}^N\left(\frac{1}{n}\right)^\frac{2p}{d}}{\zeta\left( \frac{2p}{d}\right) - 1 - \left( \frac{1}{2}\right)^\frac{2p}{d}}\right]^\frac{1}{p}.
\end{equation}
that satisfies
$R(N,p) > \overline{\rho} - \overline{\rho}^N$, for which the
proof is given in Proposition 1 in Appendix B. Based on this,
$R(N,p)$ can be used to select the parameters $N$ and $p$ as it
quantifies the quality of the approximation for a given pair of
$(N,p)$ in terms of the error upper bounds.

\begin{figure}[!tb]
  \begin{center}
    \subfigure{\includegraphics[width=0.40\linewidth]{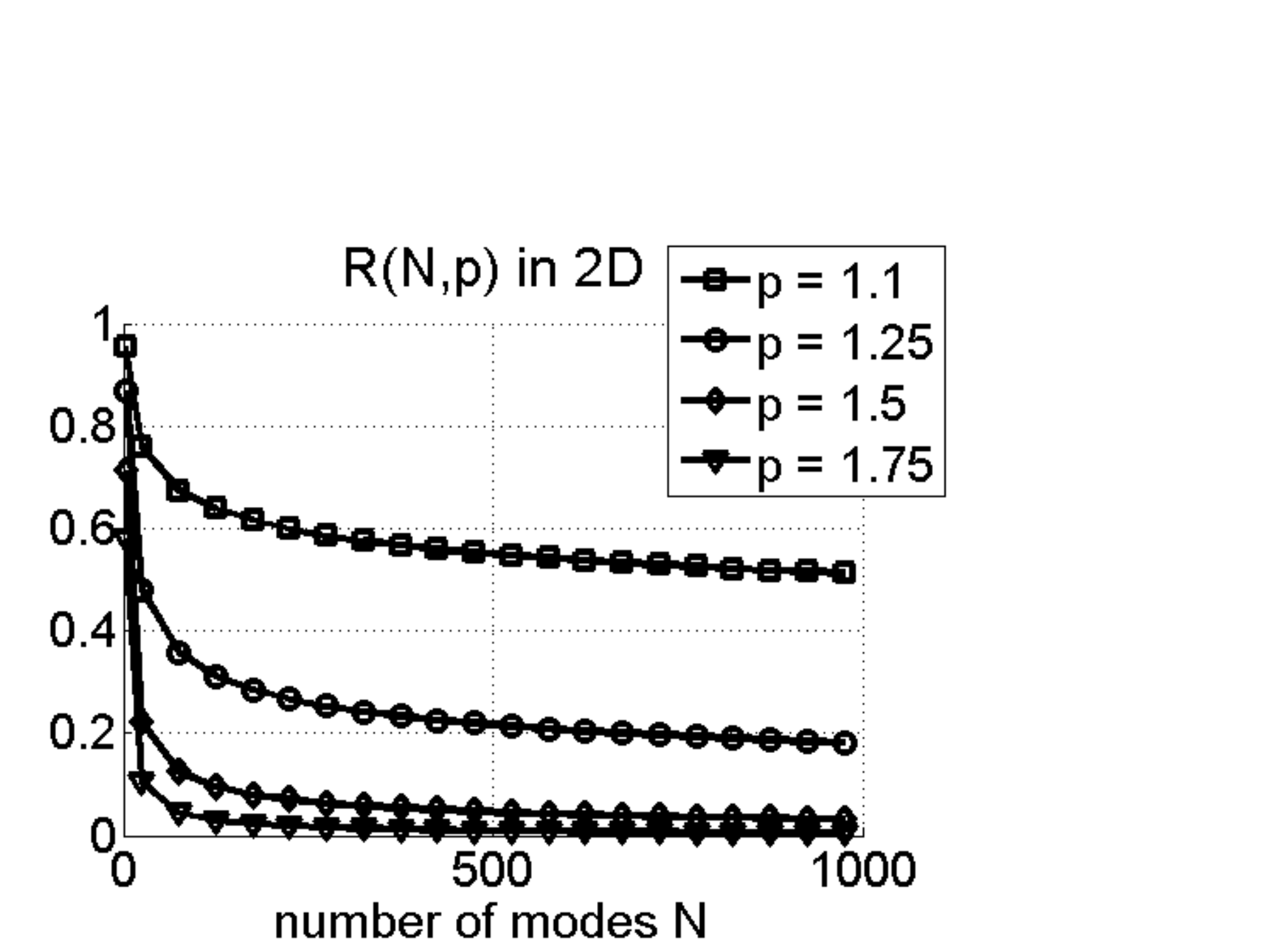}}
    \subfigure{\includegraphics[width=0.40\linewidth]{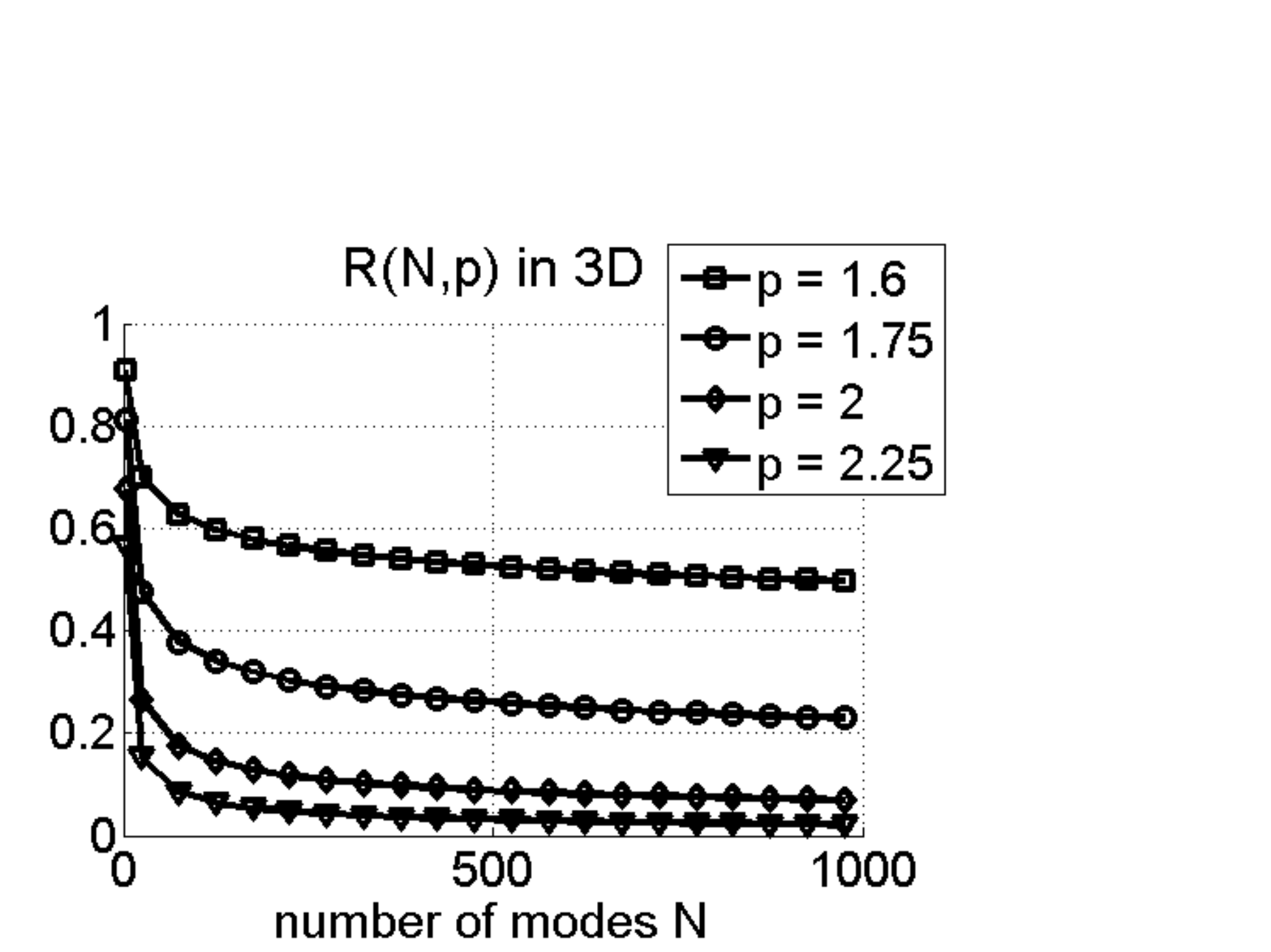}}
  \end{center}
  \caption{\label{fig:upp_bounds}Choosing $N$: The figures plot the residual ratio $R(N,p)$ versus $N$ for different $p$ values in 2D (left) and in 3D (right). As expected the error upper bound drops with increasing $N$. The rate of decrease also becomes faster with increasing $p$. This inverse relation suggests the trade-off between $N$ and the sensitivity of WESD to finer scale shape differences since WESD becomes less sensitive as $p$ increases, see Section~\ref{sec:multi}.}
\end{figure}
In Figure~\ref{fig:upp_bounds}, we plot $R(N,p)$ versus $N$
for different settings of $p$ and $d = 2,3$. Besides the obvious point
that the error upper bound decreases for increasing $N$ we also notice
that i) the behavior in 2D and 3D are similar and ii) the rate of
decrease of the error upper bound is much faster for higher
$p$. Considering the multi-scale aspect of WESD captured in $p$, this
behavior is interesting. It demonstrates that the choice of $p$ and
$N$ are correlated and suggests a trade-off between the rate of
decrease of the truncation error and the sensitivity of WESD to shape
differences at finer scales. In theory, the choice of these parameters
depends on the application and the expected shape differences. If one
expects coarse scale differences then choosing a large $p$ and small
$N$ might be sufficient. However, if one is interested in finer scale
differences then a small $p$ value will be required, which in turn will require a large $N$ value to have a decent approximation. {The important aspect of $R(N,p)$ is that it is universal, i.e. it does not depend on the objects. So it can be used in any type of application to choose the parameter pair $N,p$ and to have a rough estimate of the computational costs for computing the distance WESD. We note once again, the specific values should be chosen based on the application and the shapes at hand.}
\subsection{Invariance to global scale differences}~\label{sec:scale}
We end this section studying the impact of global scale differences on WESD and how invariance to such differences can be attained. We would like to note that the notion of global scale in this section refers to the relative size of an object, which is not to be confused with the notion of multi-scale used in Section~\ref{sec:multi}. The spectrum of an object depends on the object's size, i.e. a global scale change alters all the eigenvalues by a constant multiplicative factor~\cite{CourantHilbert}. As a result, the global scale difference between two objects contributes to the spectral shape distance WESD. In some applications this contribution might not be desirable, for instance in an object recognition task, where objects in the same category have varying sizes. Therefore, it is a useful property of a shape distance to allow invariance to global scale differences.  

{Reuter~\etal~\cite{Reuter2006} proposed different approximations for normalising the effects of scale differences on the spectrum. In particular, the authors use two different normalisations in their experiments in \cite{Niethammer2007,Reuter2009,Lian2011}. Both normalisations directly act on the eigenvalues. The first one normalises the eigenvalues with respect to the volume (area in 2D or surface area for Riemannian manifolds) and is given as $\lambda_n\rightarrow\lambda_nV_{\Omega_\lambda}^{2/d}$. The second one normalises the eigenvalue with respect to the first eigenvalue in the sequence as $\lambda_n \rightarrow \lambda_n / \lambda_1$. Both of these strategies can be used when computing distances with WESD. Furthermore, since these strategies do not alter the mathematical characteristics of the entire spectrum the theoretical properties of WESD and nWESD hold either way. For our experiments we adopt the first strategy, volume normalisation, using the volume as defined in Euclidean geometry. When using the volume normalised eigenvalues, the only change that applies to the technical details presented so far is $\hat{V}$ in Equation~\ref{eqn:n_factor} becomes $\hat{V} = 1$. The rest applies directly without any modification.}

We would also like to note that estimating the global scale difference between two arbitrary objects is not always a well-posed problem. It is especially hard when the objects are of different category, e.g. an octopus and a submarine. Furthermore, the scale normalisation is application dependent and it might not be desirable for all applications. In Section~\ref{sec:cardiac} we present such an example where we analyse the temporal change of the left ventricle shape during a heart cycle. In this case, the volume change is essential for analysing the heart of the same patient so that scale invariance is not appropriate. 

\section{Experiments}\label{sec:wsd_experiments}
This section presents a variety of experiments on synthetic and real data highlighting the strengths and weaknesses of WESD and nWESD. We start by briefly explaining the details of the numerical implementation of WESD used in the experiments presented here. Then in Section~\ref{sec:wsd_synthetic}, the proposed distances are applied to synthetically generated objects demonstrating that
\begin{enumerate}
\item[(i)] Ordering objects with respect to their shapes using nWESD results in a visually coherent series (Section~\ref{sec:wsd_synthetic_1}), 
\item[(ii)] WESD is useful for constructing low dimensional embeddings, in particular it yields stable embeddings with respect to the signature size $N$, (Section~\ref{sec:wsd_synthetic_2}) and 
\item[(iii)] WESD is a suitable distance for shape retrieval, which is shown through experiments on the SHREC dataset~\cite{Lian2011} (Section~\ref{sec:shrec}). 
\end{enumerate}
Lastly, in Section \ref{sec:wsd_med_experiments} WESD is applied to real objects extracted from 3D medical images. We focus on two examples from a wide variety of applications WESD and nWESD can be beneficial to: population studies of brain structures and analysis of 4D cardiac images. 
\noindent{\subsection{Implementation Details}
There are two different aspects in the implementation of WESD: the numerical computation of the Laplace spectra and the parameter settings. First, any numerical method tailored towards computing the eigenvalues of the Laplace operator can be used to compute WESD. Examples of such method are listed in \cite{Ames,Reuter2006}. Our specific implementation represents objects simply as binary images with the foreground defining $\Omega$. Using the Cartesian grid of the image, it discretizes $\Delta_\Omega$ through finite difference scheme (see also Chapter 2 of \cite{Ames}). This step yields a sparse matrix of which we compute the eigenvalues via Arnoldi's method presented in \cite{Arnoldi1954} and implemented in MATLAB\textsuperscript{\textregistered}. We choose this specific implementation as 1) it is simple 2) it does not introduce any additional parameters and 3) when working with images it avoids any extra preprocessing steps, such as surface extraction or mesh construction.}

{With regards to the second implementation aspect, we set the parameters $N$ and $p$ empirically. Based on Section~\ref{sec:choosingN}, we set $p=1.5$ in 2D and $p=2$ in 3D. These values result in a relatively fast diminishing upper bound of the truncation error with respect to $N$ (see Figure~\ref{fig:upp_bounds}) while being sensitive to shape differences at finer scales. In both 2D and 3D, we chose $N=200$ for the number of modes as the truncation error seemed to vanish at that point. Furthermore, in addition to the theoretical considerations on the effects of $N$ and $p$ on WESD given in Section~\ref{sec:choosingN}, in Sections~\ref{sec:wsd_synthetic_2} and \ref{sec:shrec} we experimentally study the effects of these parameters on applications using WESD, specifically on constructing low dimensional embeddings and shape retrieval.}
\subsection{Synthetic Data}\label{sec:wsd_synthetic}
We conduct three experiments: first two are on 2D objects and the last one is on 3D objects. {For all of the experiments, we use the scale invariant versions of the spectra obtained by normalising the eigenvalues with the object's volume as described in Section~\ref{sec:scale}. As a result the distances WESD and nWESD become ``almost'' invariant to global scale differences.}
\subsubsection{Ordering of Shapes}\label{sec:wsd_synthetic_1}
\begin{figure}[!t]
\begin{center}
\subfigure[]{\includegraphics[width=0.70\linewidth]{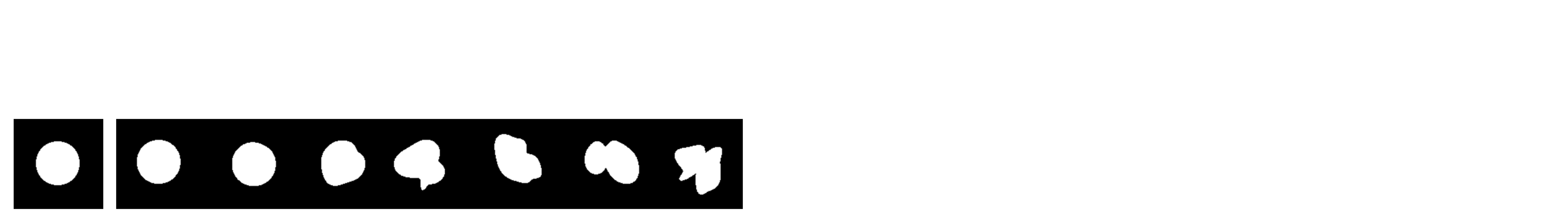}}
\subfigure[]{\includegraphics[width=0.70\linewidth]{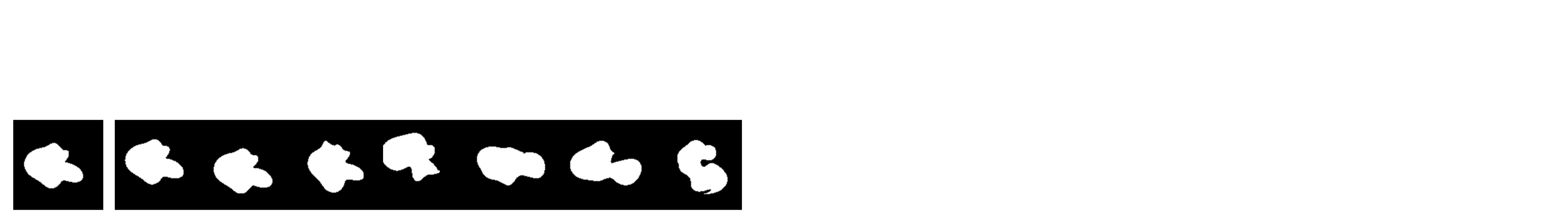}}
\subfigure[]{\includegraphics[width=0.98\linewidth]{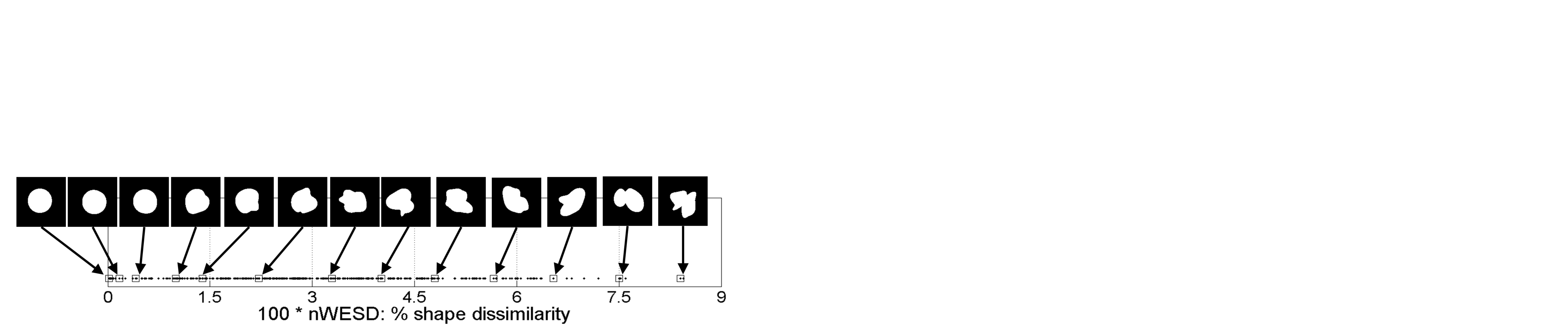}}
\subfigure[]{\includegraphics[width=0.98\linewidth]{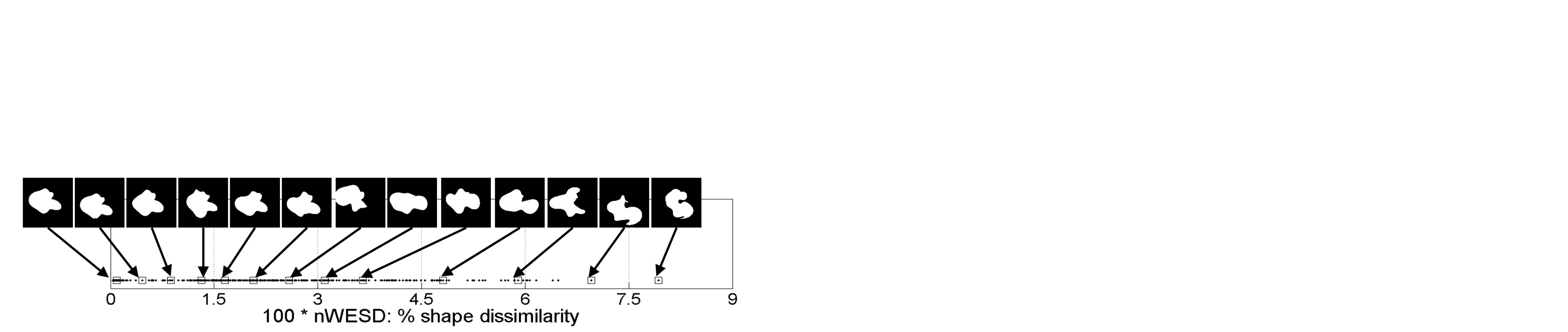}}
\end{center}
\caption{\label{fig:artif_1}Shape-based ordering of objects: We generate two artificial datasets each consisting of a reference object and its random deformations. Samples from the datasets are shown in (a) and (b). The binary images to the very left show the reference objects for each dataset. We then ordered all the deformed objects with respect to the nWESD scores between the object and the reference. The graphs in (c) and (d) plot these orderings. Based on visual inspection the ordering is quite reasonable.}
\end{figure}
For the first experiment we created two synthetic datasets. Each dataset consists of a reference object and random deformations of this reference. These deformed versions are generated by transforming the reference via random deformations of varying magnitude and amount of nonlinearity. As a result the datasets contain objects that are very similar to the reference ones and objects that are substantially different. Figures~\ref{fig:artif_1}(a) and (b) show some examples from these datasets where the binary images to the very left show the reference objects. In the first dataset, the reference object is a disc and in total there are 500 random deformations of this reference disc. The first 400 are generated via non-linear deformations while the last 100 are isometric transformations. In the second dataset, the reference is a slightly more complicated object (see Figure~\ref{fig:artif_1}(b)) and in total there are 400 random transformations of this reference. The first 300 are generated by non-linear deformations and the last 100 produced via isometric transformations. All objects are discretized as binary maps with a size of $200\times200$ pixels. The numerical computations are performed on these image grids as discussed earlier. 

We computed the nWESD scores ($\overline{\rho}^N$ with $p=1.5$ and $N=200$) between the reference and the deformed objects in each dataset. Based on these scores, we then ordered the deformed objects according to their similarity in shape to the reference. Figures~\ref{fig:artif_1} (c) and (d) show examples of the resulting orderings. We notice that the orderings are visually meaningful , i.e. the further the deformed objects visually deviate from the references, the higher their nWESD score is. Furthermore, all the objects generated via isometric transformations yielded scores close to zero as a result of the invariance of the proposed scores to this type of transformation. 
\subsubsection{Low Dimensional Embeddings}\label{sec:wsd_synthetic_2}
In the second experiment we focus on creating low dimensional embeddings. We compare the embeddings constructed by WESD with the ones constructed using $\rho_{SD}^N$ (Equation~(\ref{eqn:r_distance})), the distance proposed in~\cite{Reuter2006}. We do so based on the TOSCA dataset (toolbox for surface comparison and analysis),~\cite{Bronstein2008,Bronstein2008a}. This dataset contains binary segmentations of 5 human, 5 centaurs and 5 horses as shown in Figure~\ref{fig:hmc}(a). We compute the pairwise affinity matrices between objects via $\rho_{SD}^N$ and WESD ($\rho^N$ with $p=1.5$). We then apply the ISOMAP algorithm~\cite{Tenenbaum2000} to these matrices, which maps the 15 objects to a 2D plane based on the pairwise shape distances. We repeat this experiment for affinity matrices computed using different number of spectral modes, i.e. $N=50,100,200$, to demonstrate the effect of the signature size (truncation parameter) on both distances. 
\begin{figure}[!h]
\center
\subfigure[]{\includegraphics[width=0.45\linewidth]{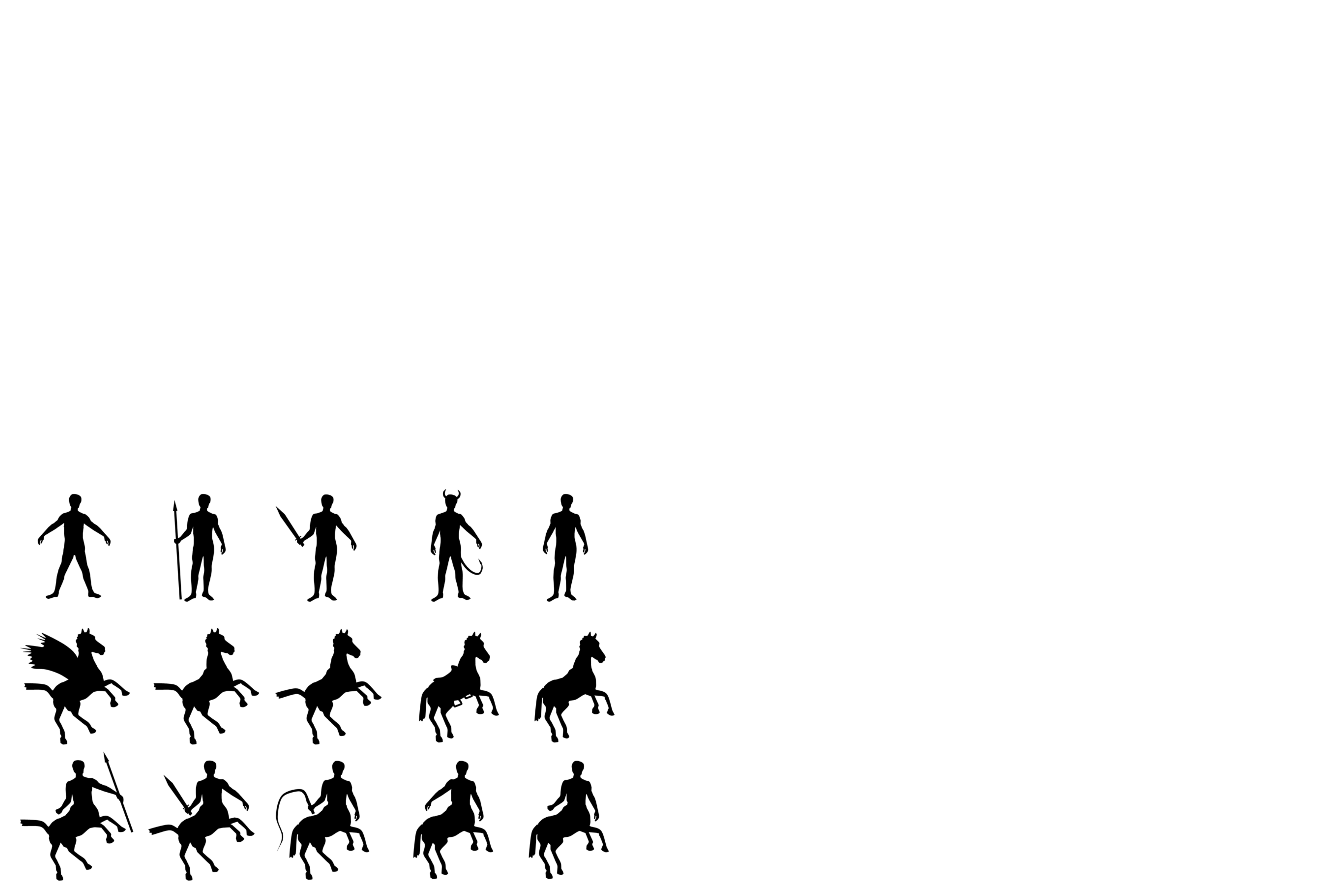}}\\
\subfigure[$\rho_{SD}^N$, N=50]{\includegraphics[width=0.35\linewidth]{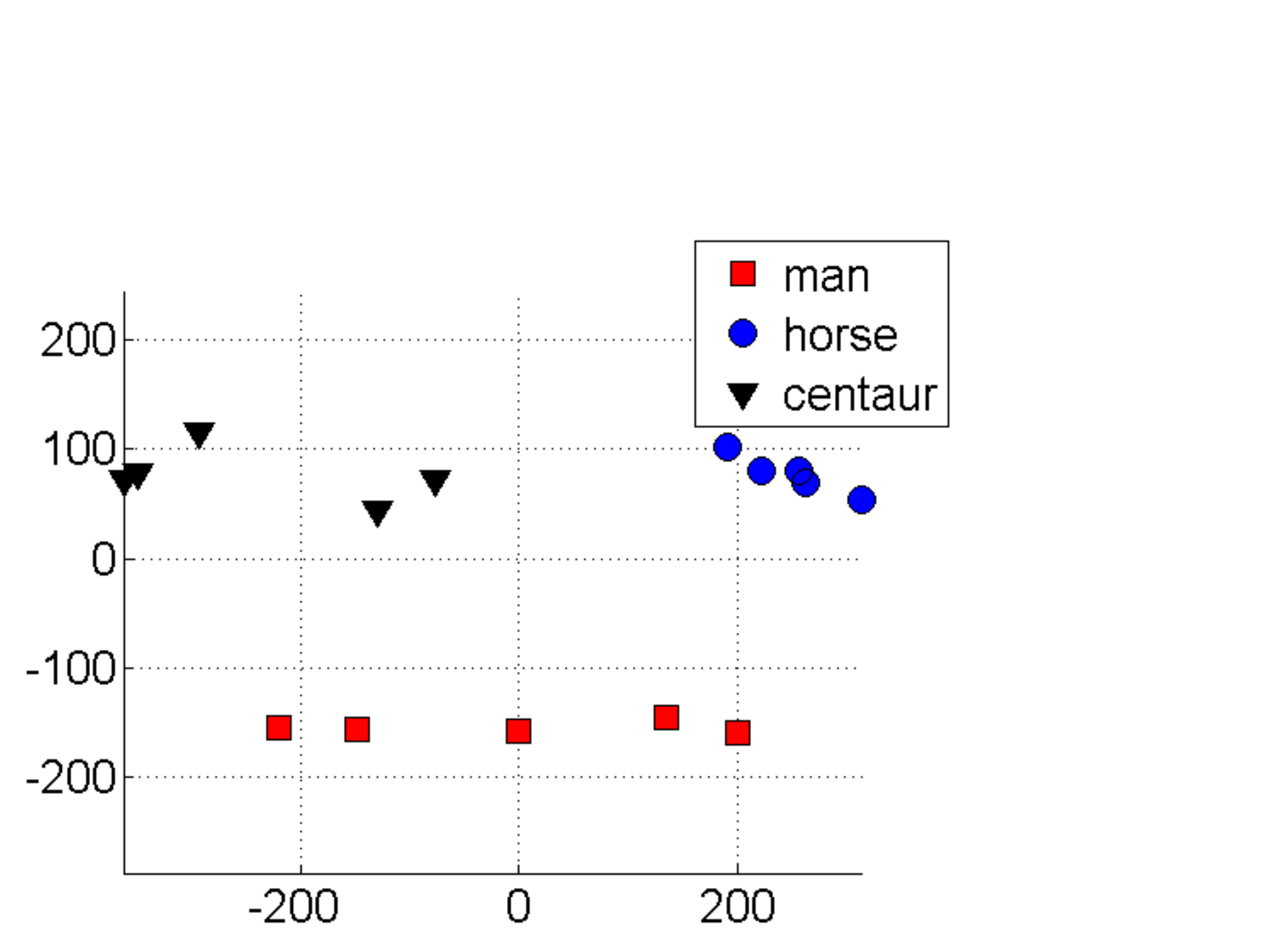}}
\subfigure[$\rho^N$, N=50]{\includegraphics[width=0.35\linewidth]{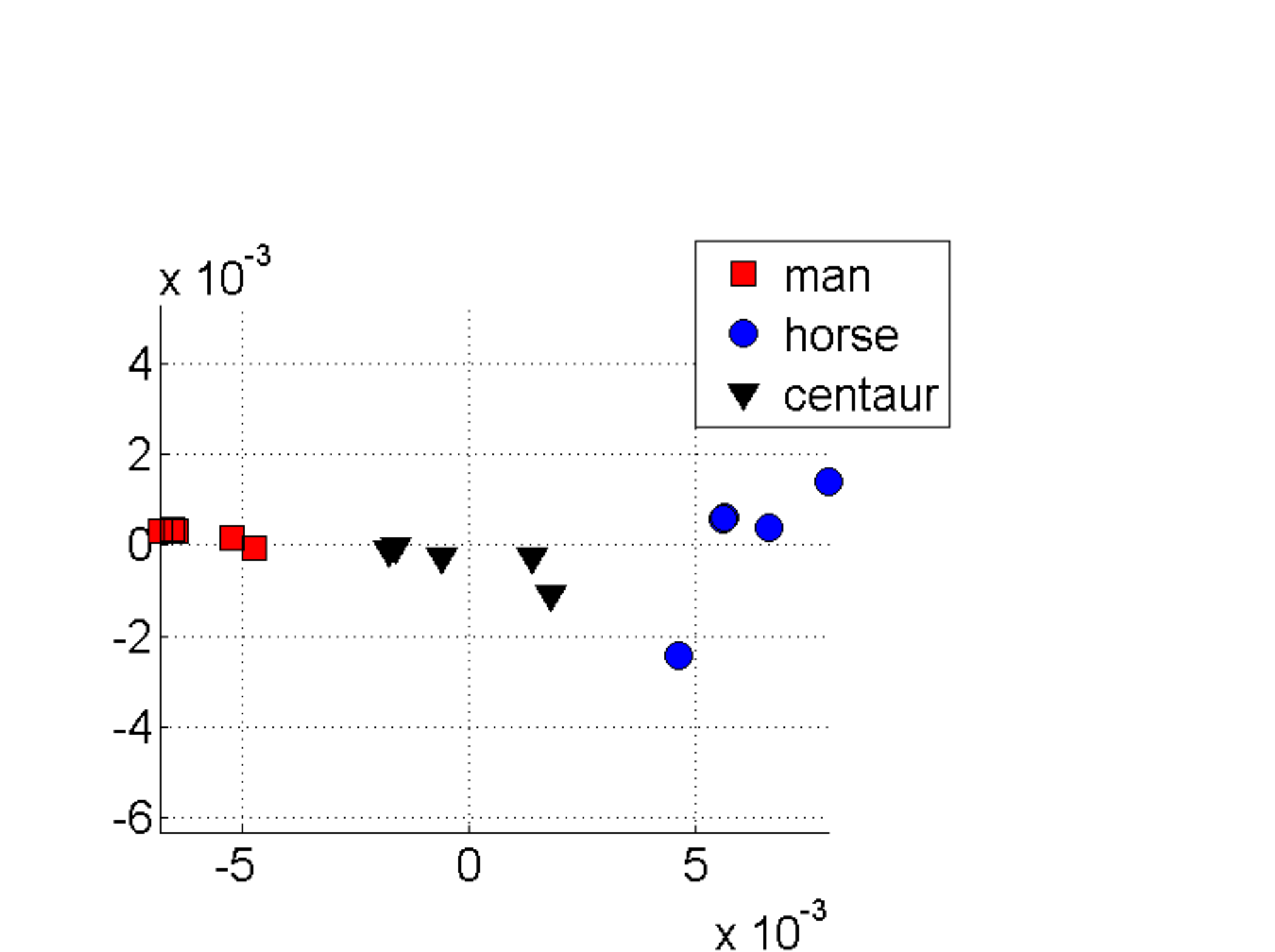}}
\subfigure[$\rho_{SD}^N$, N=100]{\includegraphics[width=0.35\linewidth]{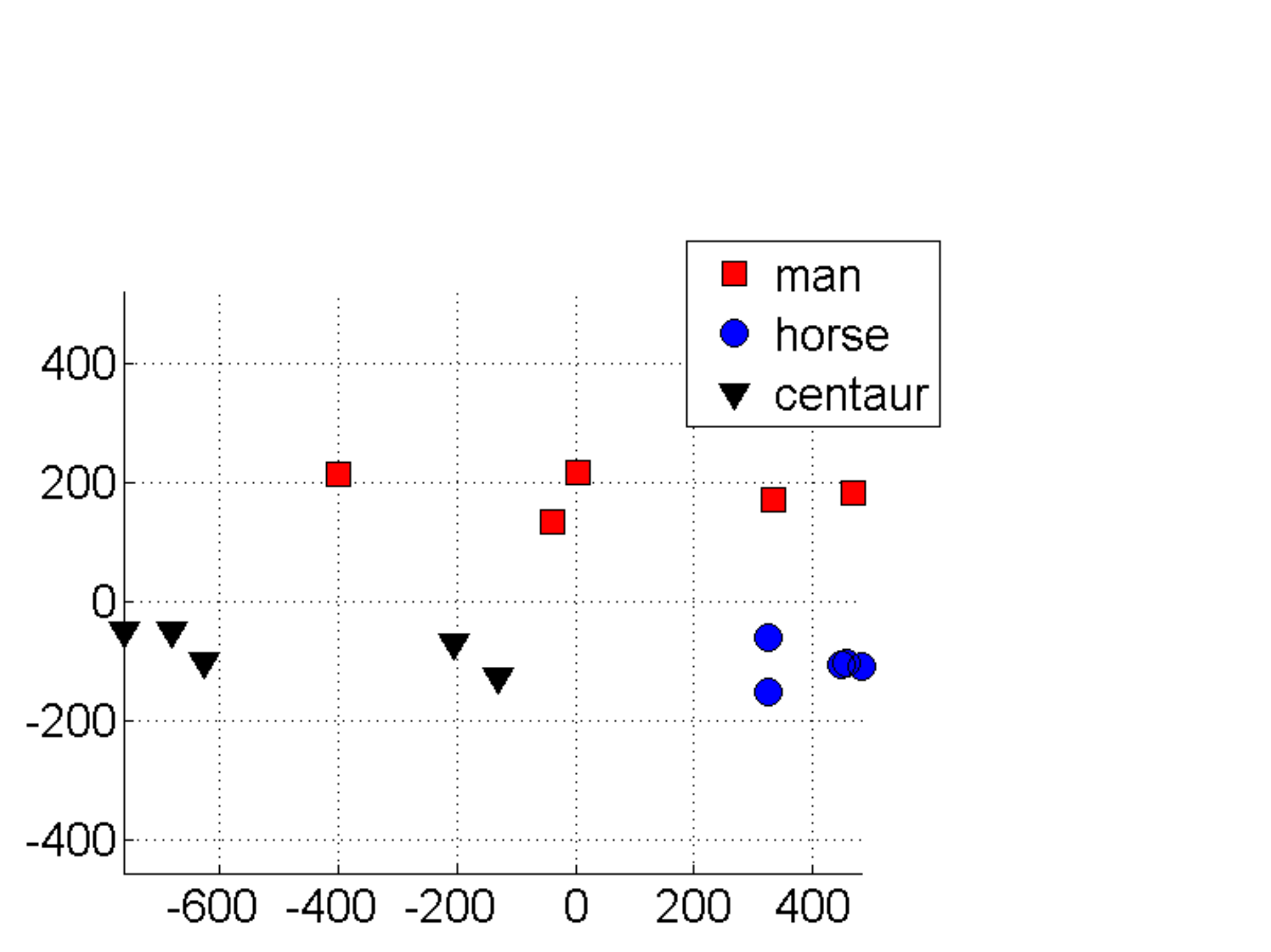}}
\subfigure[$\rho^N$, N=100]{\includegraphics[width=0.35\linewidth]{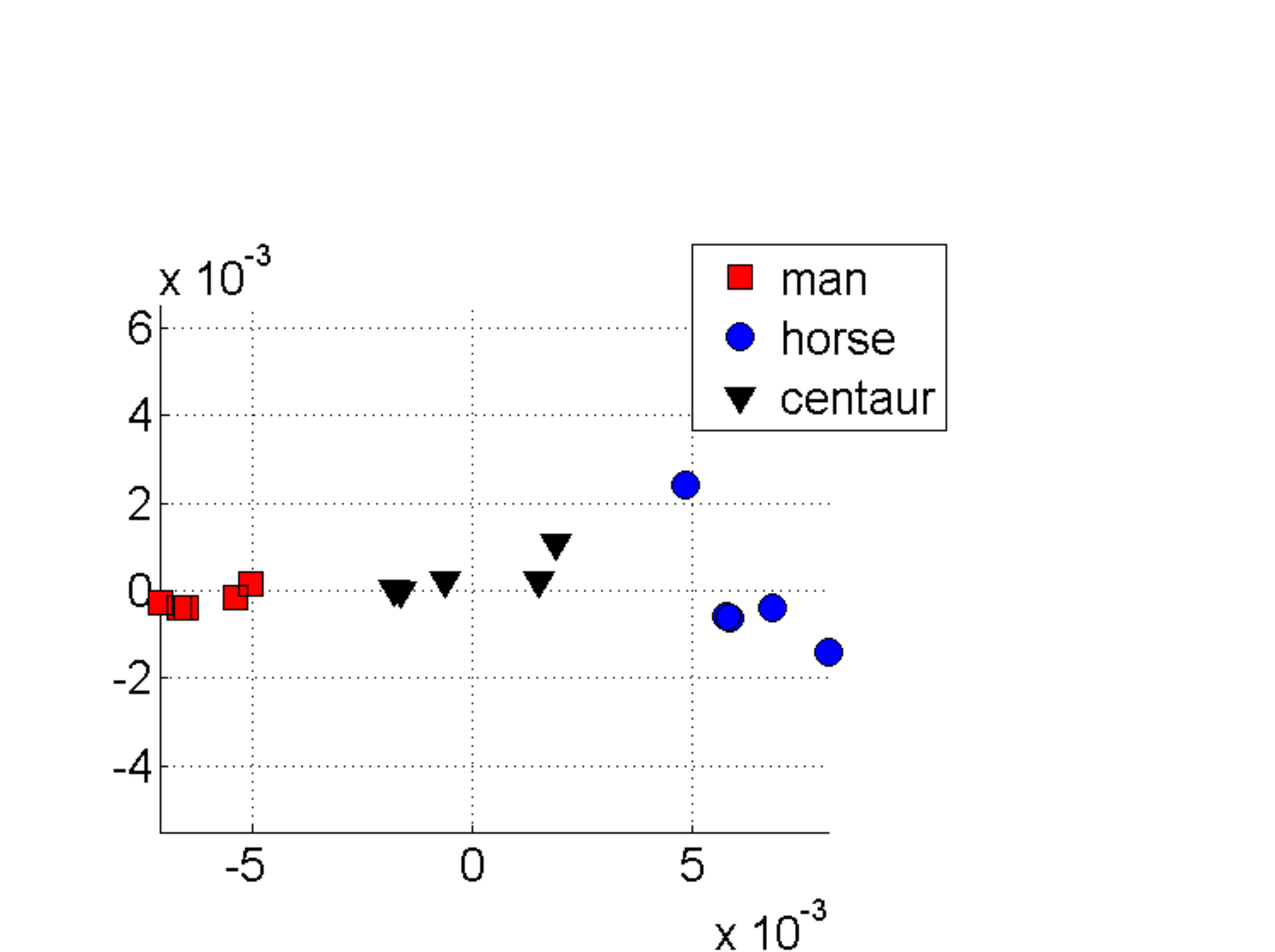}}
\subfigure[$\rho_{SD}^N$, N=200]{\includegraphics[width=0.35\linewidth]{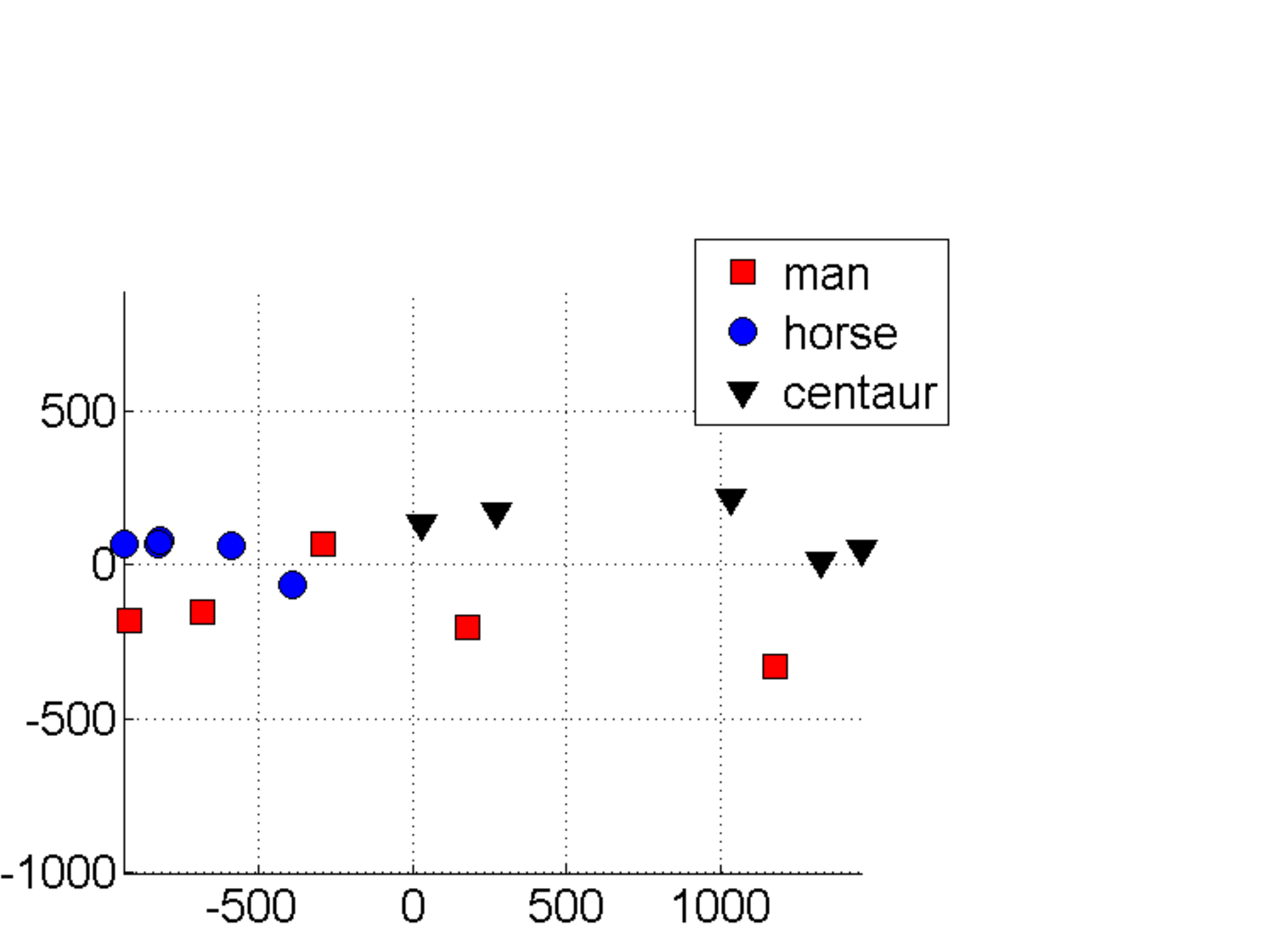}}
\subfigure[$\rho^N$, N=200]{\includegraphics[width=0.35\linewidth]{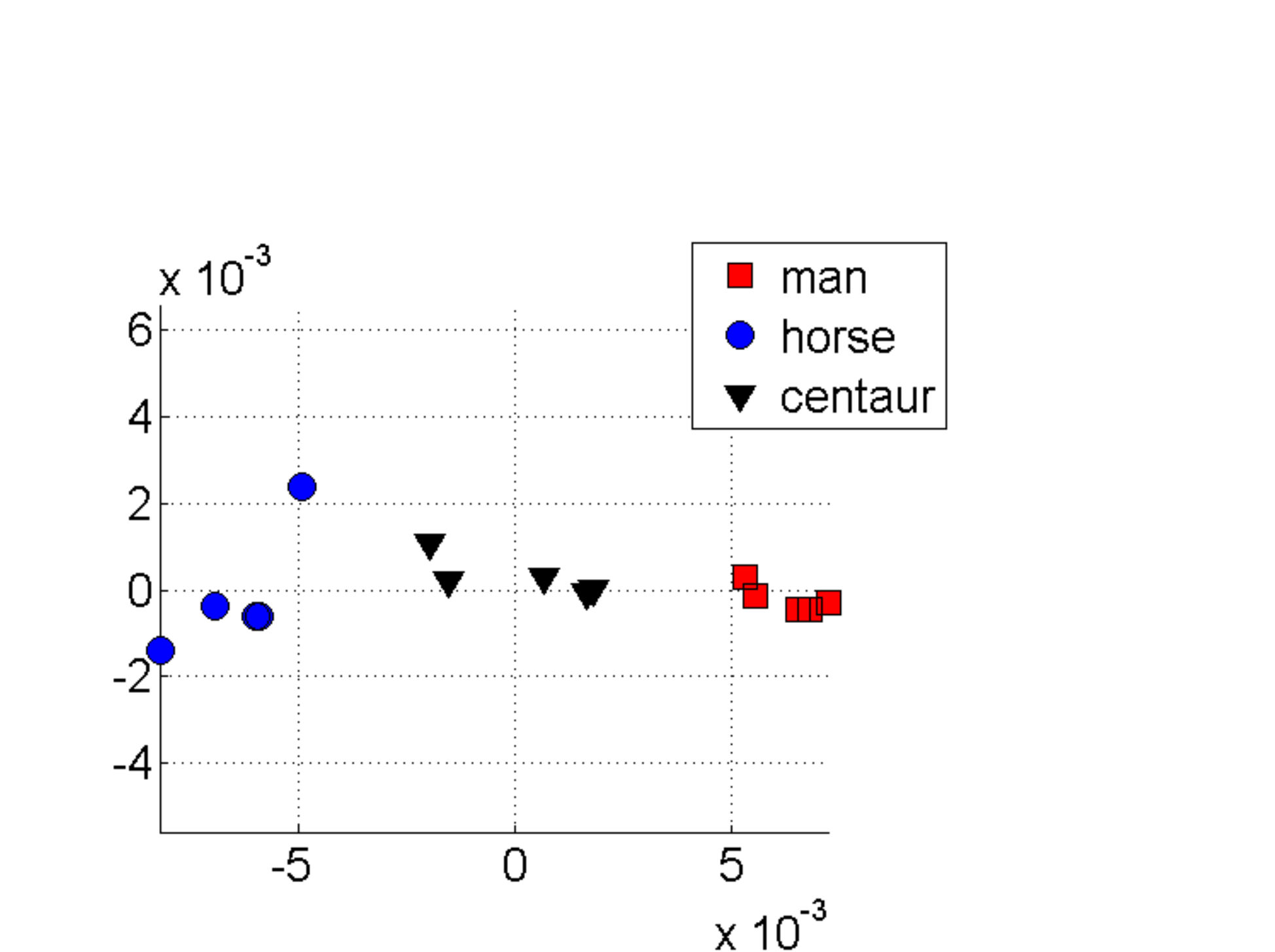}}
\caption{\label{fig:hmc}Low dimensional embeddings: (a) The 15 objects used in this experiment. The graphs plot the 2D embeddings of the objects based on the affinity matrices constructed by $\rho_{SD}^N$ and WESD ($\rho^N$). Each row presents the results based on different number of eigenvalues: 50, 100 and 200 from top to bottom respectively. The structures of the 2D embedding based on $\rho_{SD}^N$ are quite different for different $N$. WESD however, produces embeddings that are similar. This demonstrates the stability of the embedding with respect to $N$ when WESD is used.}
\end{figure}

The plots in Figures~\ref{fig:hmc}(b),(d) and (f) present the resulting 2D embeddings of the dataset using $\rho_{SD}^N$. The embeddings are substantially different for different $N$. This variation arises due to high sensitivity of $\rho_{SD}^N$ towards the signature size $N$. Moreover, the embeddings obtained using higher $N$ are less satisfactory in terms of separating the three different object classes. This is actually as expected since the spectral modes with higher indices dominate the value of $\rho_{SD}^N$ even though they are not informative with regards to the overall geometry and thus, negatively impact the outcome. The plots in Figures~\ref{fig:hmc}(c), (e) and (g) present the embeddings obtained using WESD. Apart from simple coordinate flips (arising from ISOMAP's indifference to signs) the embeddings obtained at different $N$ are very similar. This shows that the construction of the low dimensional embedding is stable with respect to $N$ when WESD is used. This is a direct consequence of the convergent behavior of WESD discussed in Sections~\ref{sec:theo_analysis} and~\ref{sec:choosingN}. As illustrated by this experiment, this property has very important practical implications. 
\subsubsection{{Shape-based Retrieval of 3D Objects}}\label{sec:shrec}

\begin{figure*}[!t]
\begin{center}
\subfigure[]{
{\footnotesize
\begin{tabular}{c|c|c|c|c|c|}
\cline{2-6}
& NN & FT & ST & E & DCG \\ \hline
\multicolumn{1}{|c|}{WESD ($\rho^N$)} & \multirow{2}{*}{0.9933} & \multirow{2}{*}{0.9020} & \multirow{2}{*}{0.9305} & \multirow{2}{*}{0.6900} & \multirow{2}{*}{0.9706} \\
\multicolumn{1}{|c|}{p=3.15, N=100} & & & & & \\ \hline
\multicolumn{1}{|c|}{WESD ($\rho^N$)} & \multirow{2}{*}{0.9933} & \multirow{2}{*}{0.8923} & \multirow{2}{*}{0.9238} & \multirow{2}{*}{0.6824} & \multirow{2}{*}{0.9691} \\
\multicolumn{1}{|c|}{p=2.0, N=100} & & & & & \\ \hline
\multicolumn{1}{|c|}{$\rho_{SD}^N$} & \multirow{2}{*}{0.9967} & \multirow{2}{*}{0.8896} & \multirow{2}{*}{0.9521} & \multirow{2}{*}{0.6959} & \multirow{2}{*}{0.9748}\\
\multicolumn{1}{|c|}{N=12, norm1} & & & & & \\\hline
\multicolumn{1}{|c|}{$\rho_{SD}^N$} & \multirow{2}{*}{0.9917} & \multirow{2}{*}{0.9153} & \multirow{2}{*}{0.9569} & \multirow{2}{*}{0.7047} & \multirow{2}{*}{0.9783} \\ 
\multicolumn{1}{|c|}{N=12, normA} & & & & & \\\hline
\multicolumn{1}{|c|}{$\rho_{SD}^N$} & \multirow{2}{*}{0.9933} & \multirow{2}{*}{0.8683} & \multirow{2}{*}{0.9431} & \multirow{2}{*}{0.6895} & \multirow{2}{*}{0.9705} \\ 
\multicolumn{1}{|c|}{N=15, norm1} & & & & & \\\hline
\end{tabular}}}
\subfigure[$p=3.15$ and $N=100$]{\includegraphics[width=0.65\linewidth]{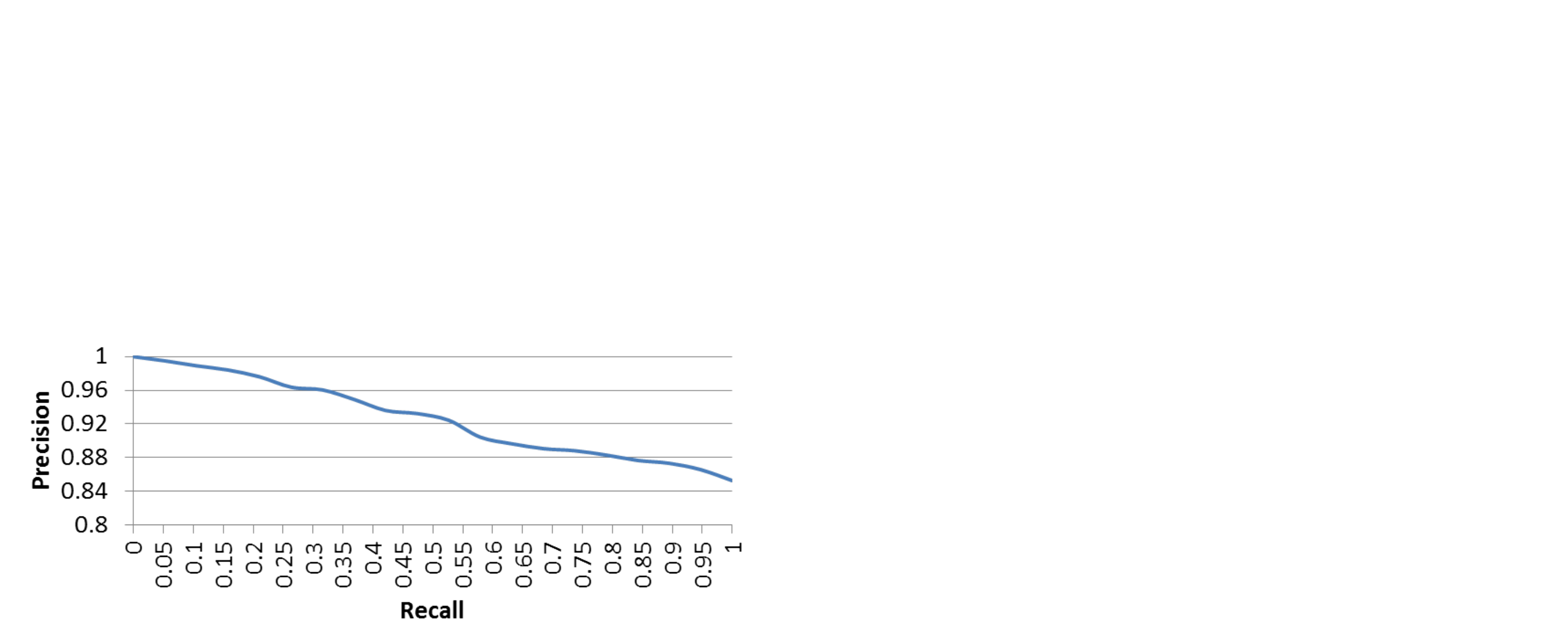}}
\subfigure[$p=3.15$]{\includegraphics[width=0.37\linewidth]{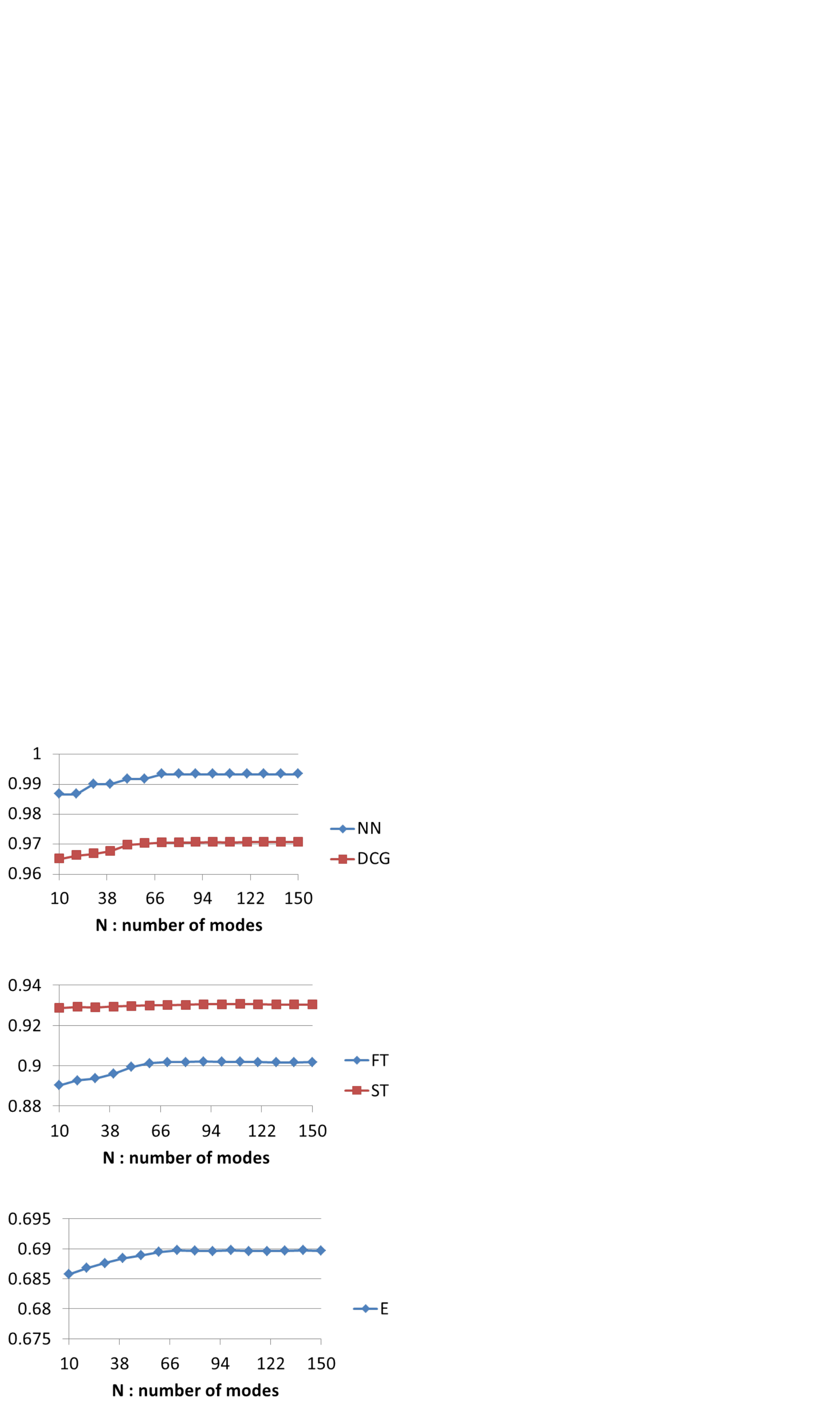}}
\subfigure[$N=100$]{\includegraphics[width=0.37\linewidth]{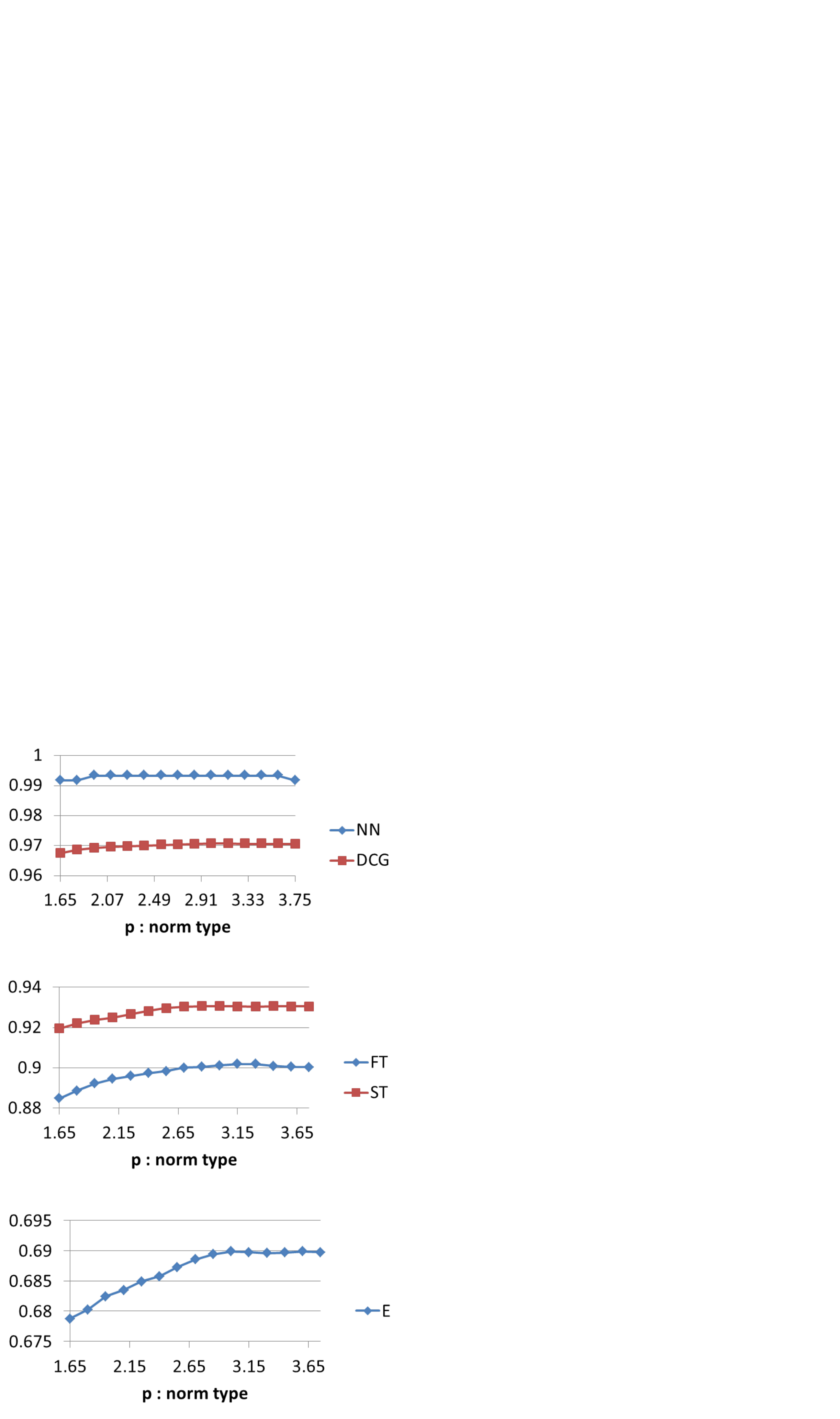}}
\end{center}
\caption{\label{fig:shrec}{Shape-based Object Retrieval Results on SHREC Dataset. a) Retrieval scores obtained by WESD for two different sets of $N$ and $p$ values along with the scores obtained by the distance proposed in \cite{Reuter2006} (values taken from \cite{Lian2011}.) b) Precision-Recall curves obtained for shape retrieval via WESD for the entire dataset. c) Effect of the signature size $N$ on the retrieval scores obtained by WESD for a fixed $p=3.15$. d) Effect of the norm type $p$ on the same scores for a fixed $N=100$.}}
\end{figure*}

{In this last experiment with synthetic data, we focus on the application of {\em shape-based object retrieval}, i.e. given a test object identifying other ``similar'' objects within a dataset using shape information. Similarity in this context can be defined in various ways but the definition used here is semantic similarity, meaning that objects that are of the same semantic category (e.g. human bodies, aeroplanes, etc) are similar and objects of different categories are not. Shapes of similar objects have similar traits and properties. Shape distances used for retrieval purposes should be able to capture these traits yielding the lowest values between similar object pairs. Here, WESD's value for shape-based retrieval is evaluated using the publicly available dataset SHREC presented in \cite{Lian2011}\footnoteremember{shrecfootnote}{Available at  http://www.itl.nist.gov/iad/vug/sharp/contest/2011/NonRigid/}.}

{SHREC dataset consists of 600 3D non-rigid objects from 30 different categories, i.e. 20 objects per category. Objects from the same category differ with substantial non-linear deformations, which makes retrieval in this dataset challenging. To evaluate the retrieval accuracy of WESD, first each object was converted from its original watertight surface mesh discretization to a 3D binary image using the Iso2mesh software package\footnote{http://iso2mesh.sourceforge.net/cgi-bin/index.cgi}.
Then pairwise shape distances across the entire dataset were computed using WESD and the $600\times600$ affinity matrix was constructed, where each entry is a pairwise distance. This affinity matrix was then evaluated using the software provided with the dataset\footnoterecall{shrecfootnote}. The evaluation consists of a variety of retrieval accuracy scores such as Nearest Neighbor (NN), First-Tier (FT), Second-Tier (ST), E-Measure (E), Discounted Cumulative Gain (DCG) and Precision-Recall curve. The first two rows of the table in Figure~\ref{fig:shrec}(a) list these scores obtained using WESD for two different settings of the $p$ and $N$ values. Additionally, the last three rows of the same table show the results obtained using $\rho_{SD}^N$ (Equation~\ref{eqn:r_distance},~\cite{Reuter2006}), as listed in \cite{Lian2011}\footnote{We note that for these latter results a slightly different notation is used here than in \cite{Lian2011} to conform to the overall notation of this article.}, using two different types of scale normalisation (norm1: normalising with respect to the first eigenvalue, normA: area normalisation, see Section~\ref{sec:scale} for further details). These accuracy scores show that WESD and $\rho_{SD}^N$ perform very similar in retrieval from the SHREC dataset. Furthermore, Figure~\ref{fig:shrec}(b) shows the precision-recall curve of WESD ($p=3.15$ and $N=100$) for the entire dataset. The curve is very similar to the best curve obtained using $\rho_{SD}^N$ shown in \cite{Lian2011}. Once again, this confirms that both distances perform similarly.}

{Lastly, the graphs shown in Figure\ref{fig:shrec}(c) and (d) provide an analysis of the retrieval results with respect to the parameters $N$ and $p$. Graphs in Figure~\ref{fig:shrec}(c) plot the change of different retrieval scores with respect to the number of modes used $N$, i.e. signature size, keeping $p$ fixed at $3.15$. Graphs in Figure~\ref{fig:shrec}(d) plot the changes with respect to the norm type $p$ keeping $N$ fixed at $100$. These graphs show that as $N$ increases the scores seem to increase slowly and then converge. On the other hand, $p$ has a stronger effect on the results than $N$, particularly on FT, ST and E scores. However, the changes in the scores with respect to changes in $N$ or $p$ are rather small especially compared to the relatively larger fluctuation of the FT score of $\rho_{SD}^N$ with respect to the two sample $N$ values provided in the table in Figure~\ref{fig:shrec}(a).}
 
{The experiment presented above showed that the retrieval power of WESD is similar to that of the distance $\rho_{SD}^N$ proposed by Reuter \etal \cite{Reuter2006}. The soundness and theoretical properties of WESD do not come at the expense of lower retrieval power. On the contrary, WESD is able to leverage the descriptive power of the spectra while its properties guarantee that it does not suffer from similar drawbacks as other distances, such as sensitivity to signature size.}
\subsection{Real Data}~\label{sec:wsd_med_experiments}
\begin{figure*}[!h]
\begin{minipage}[b]{0.48\linewidth}
\subfigure[Four Hippocampi]{\includegraphics[width=0.9\linewidth]{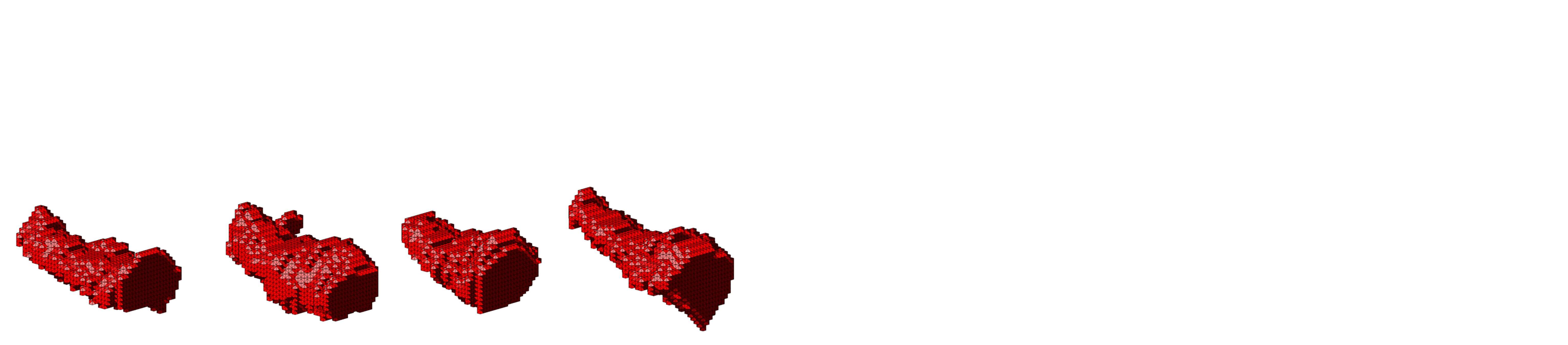}}
\subfigure[Four Caudate Nuclei]{\includegraphics[width=0.9\linewidth]{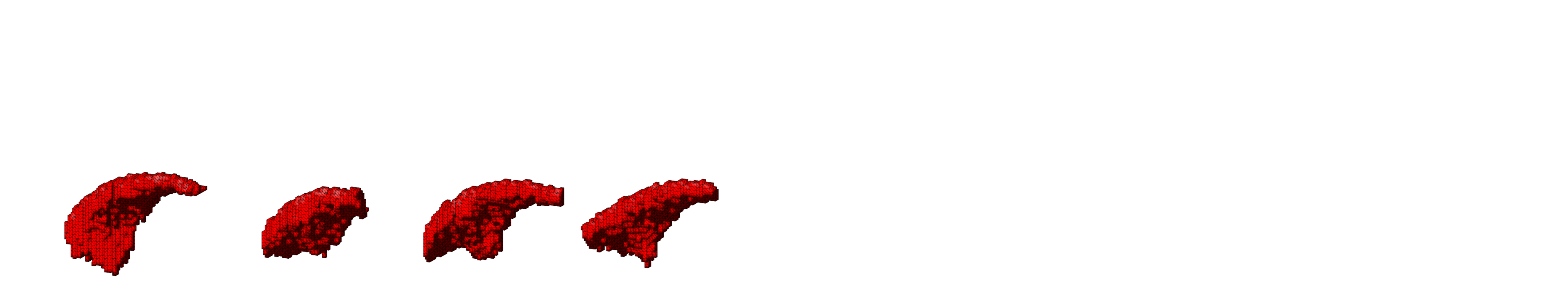}}
\subfigure[Four Putamen]{\includegraphics[width=0.9\linewidth]{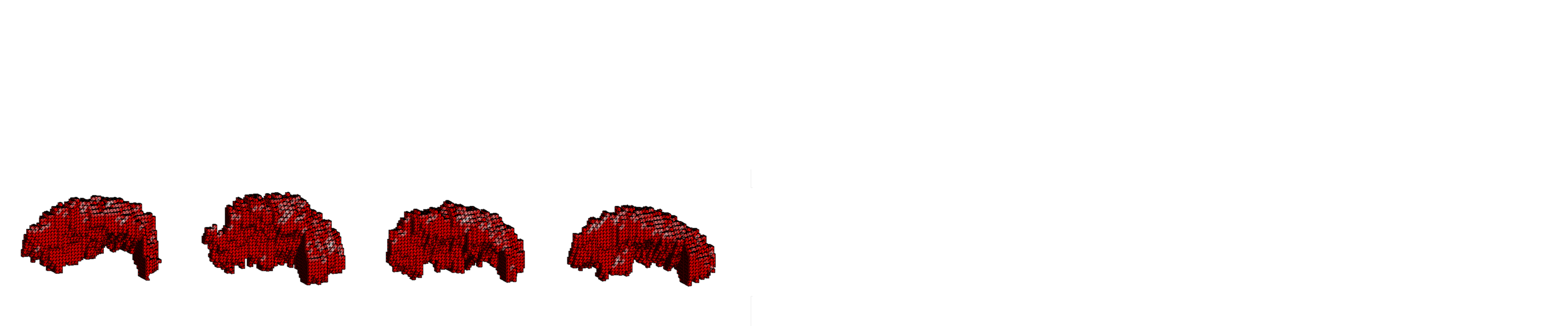}}
\end{minipage}
\begin{minipage}[b]{0.52\linewidth}
\subfigure[$\rho_{SD}^N$ - no preprocessing]{\includegraphics[width=0.48\linewidth]{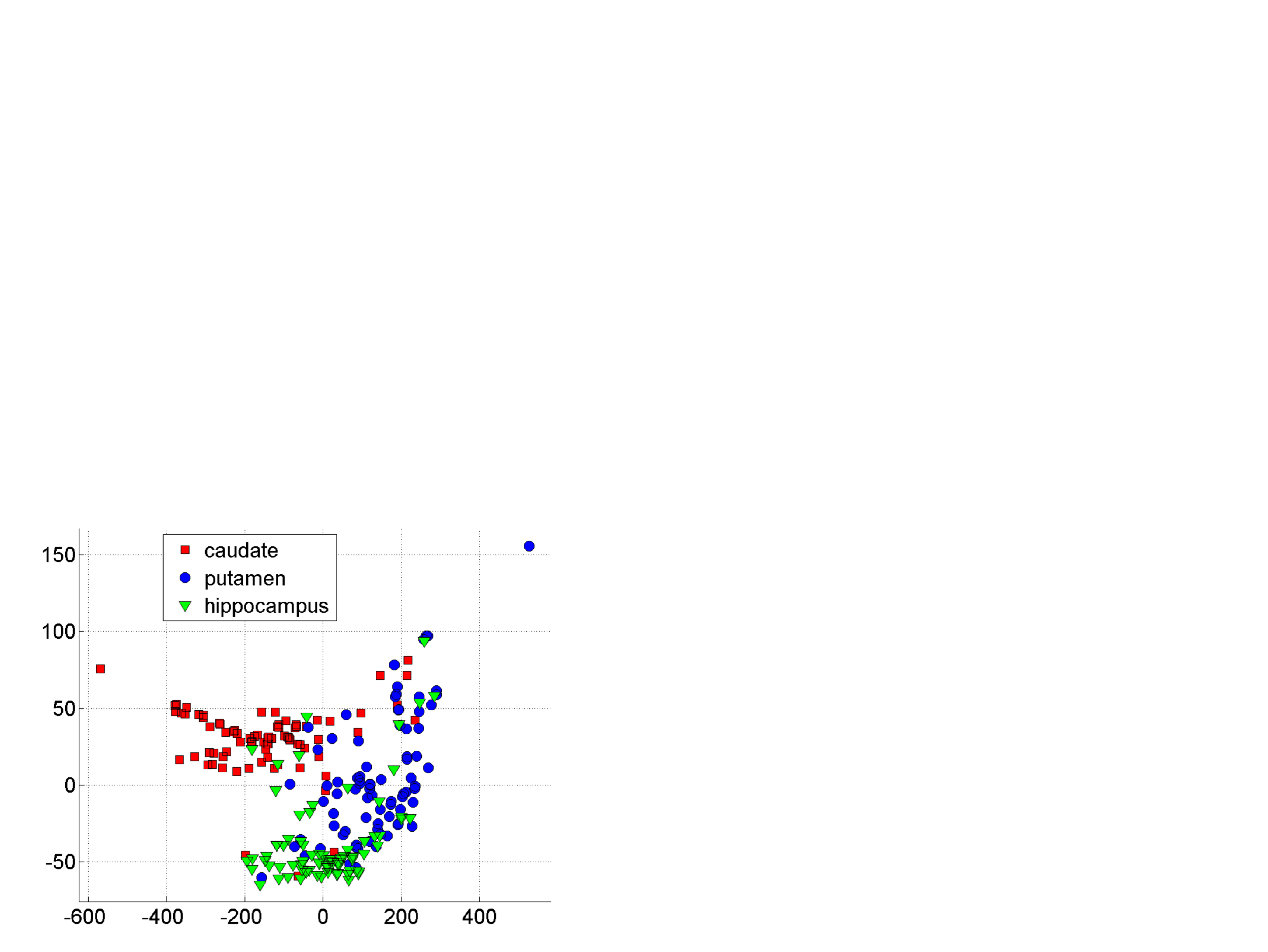}}
\subfigure[WESD - no preprocessing]{\includegraphics[width=0.48\linewidth]{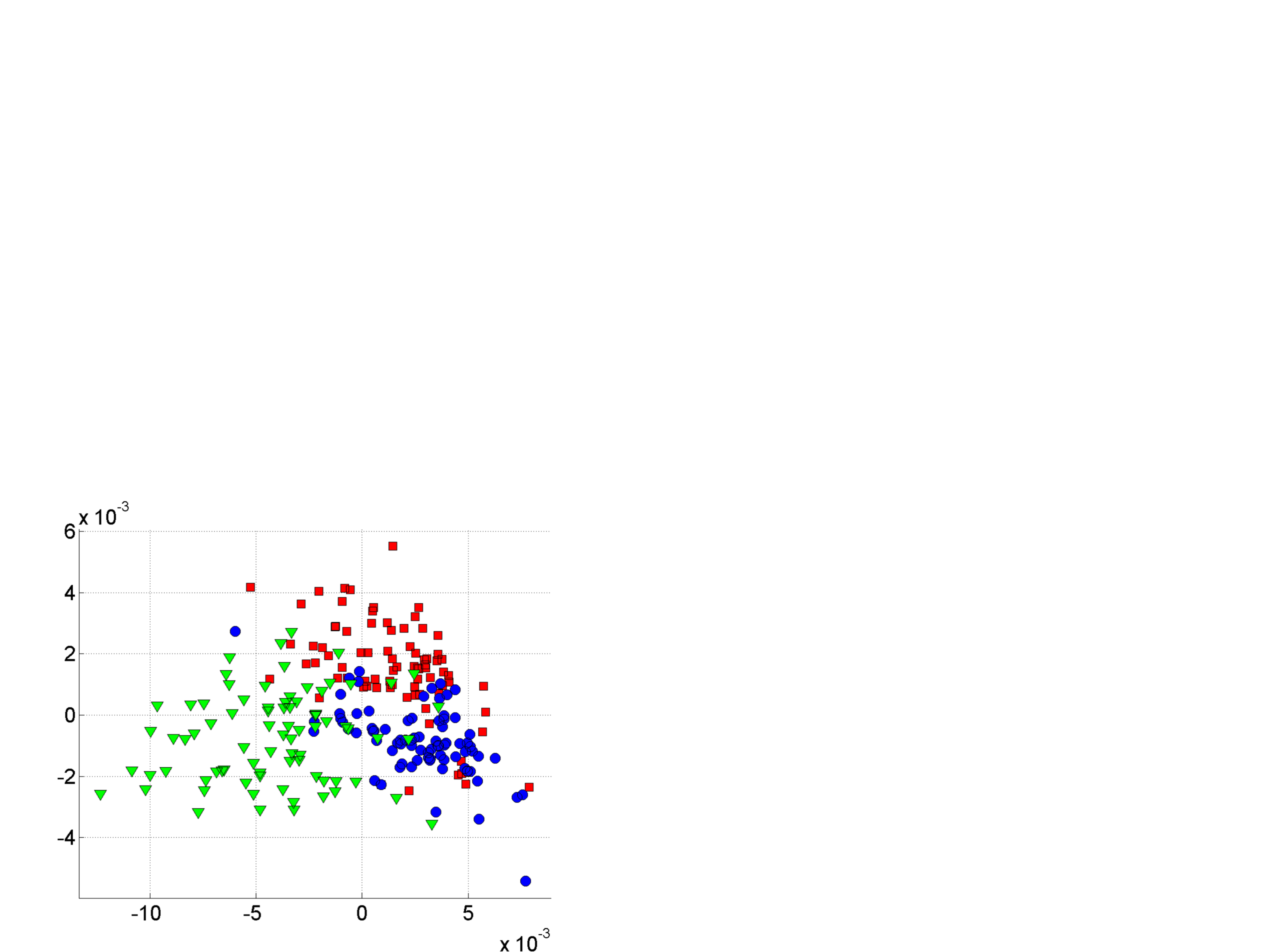}}
\subfigure[$\rho_{SD}^N$ - surface smoothing]{\includegraphics[width=0.48\linewidth]{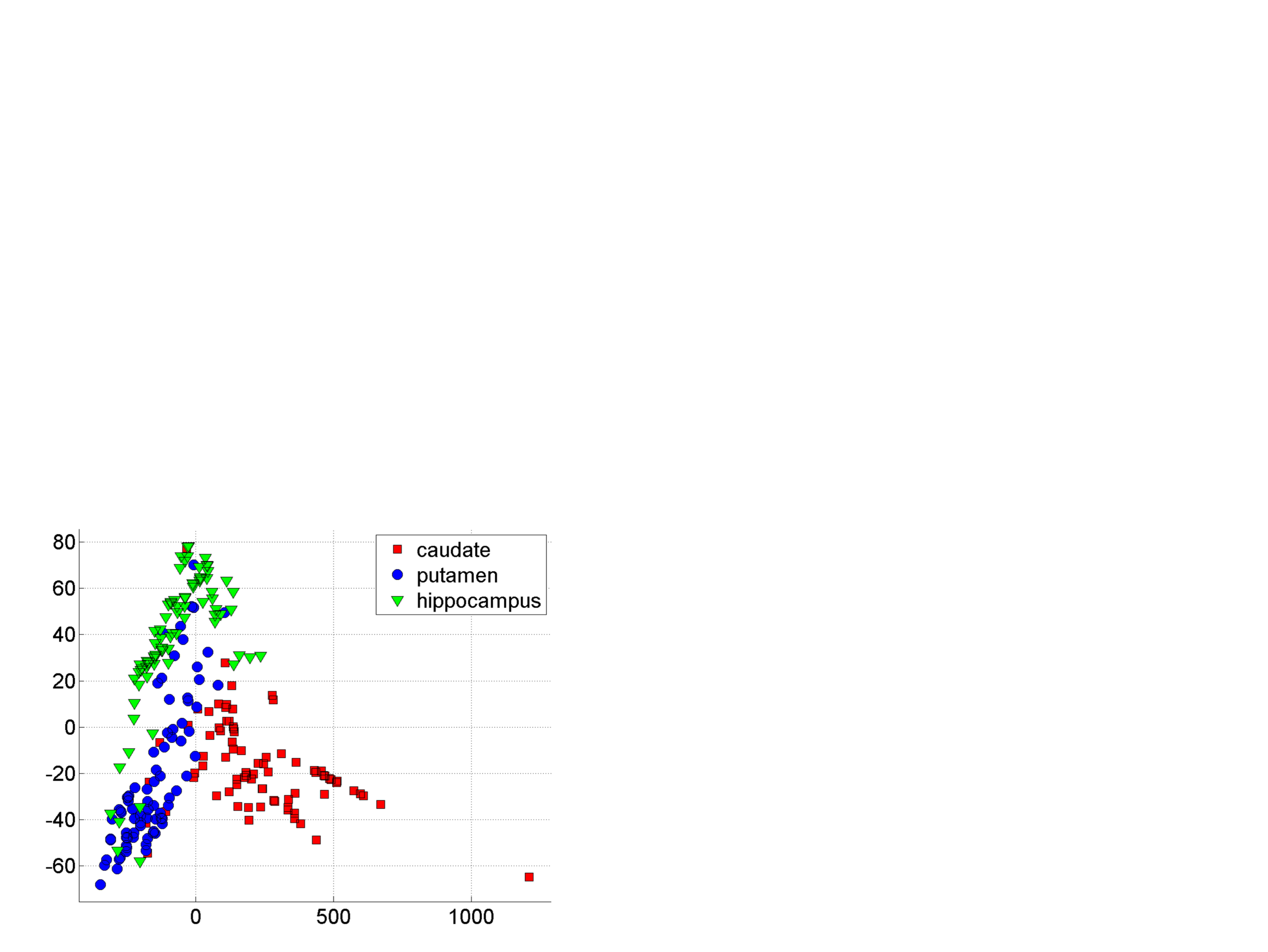}}
\subfigure[WESD - surface smoothing]{\includegraphics[width=0.48\linewidth]{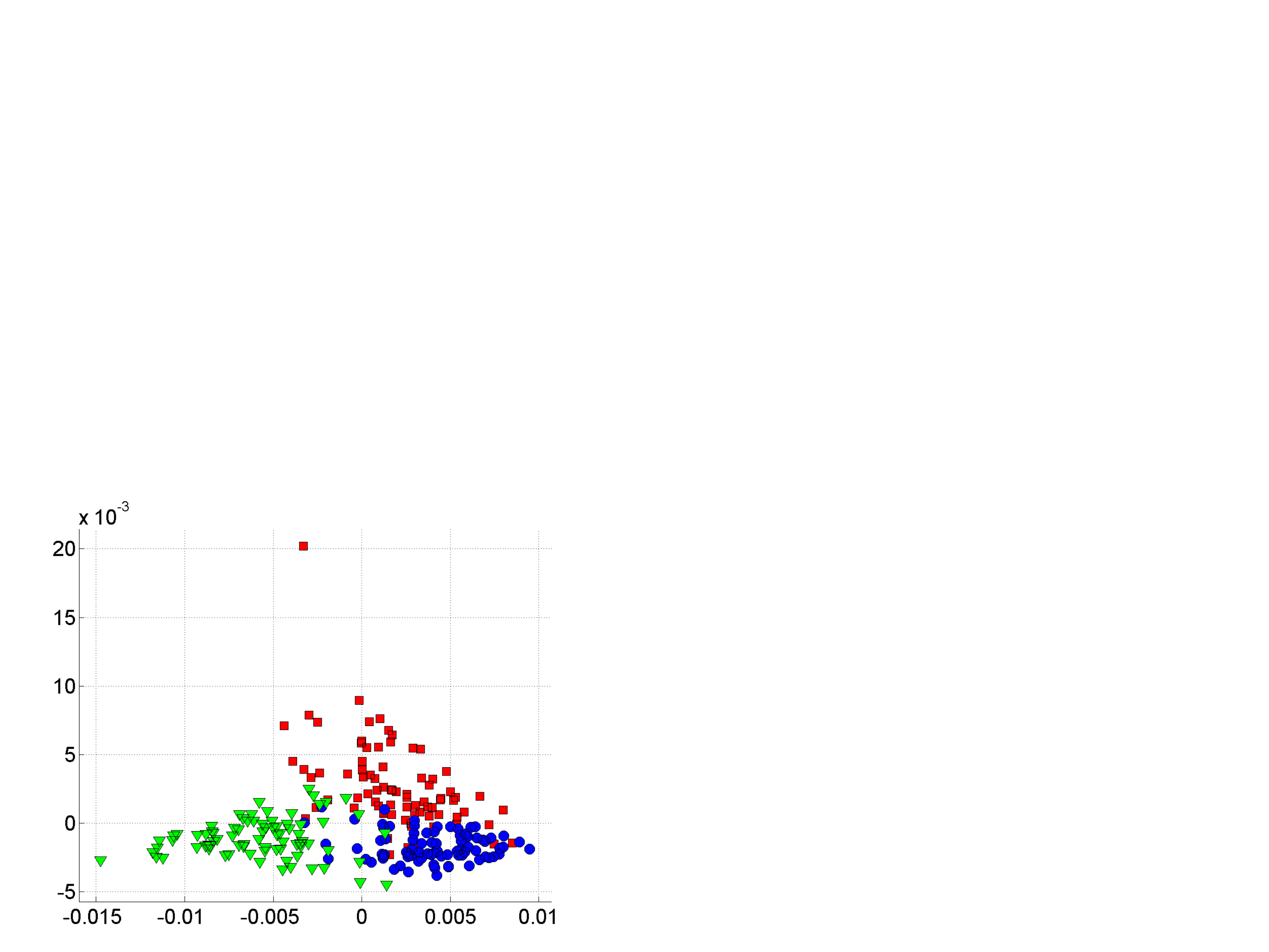}}
\end{minipage}
\caption{\label{fig:subcortical}2D embedding of subcortical structures: 240 structures (80 caudate nucleus, 80 putamen and 80 hippocampus) are extracted from MR scans of 40 different individuals. (a),(b) and (c) show some example structures from this dataset. Note the high intra-class variability and the artefacts due to finite resolution and manual segmentations. (d) and (e) plot 2D embeddings of these 240 structures obtained based on the affinity matrices computed via $\rho_{SD}^N$ and WESD respectively. These embeddings are computed without any preprocessing applied to the structures. The embedding obtained with WESD distinctly clusters the objects with respect to the anatomical structures. The embedding in (d) however, shows some ambiguities in the separation. Graphs in (f) and (g) plot the similar embeddings obtained after smoothing the surfaces of the structures to remove artefacts. The embedding obtained by $\rho_{SD}^N$, although better than (d), still suffer from similar problems. The embedding based on WESD on the other hand, now even between better separates the groups.}
\end{figure*}
The experiments on real data are conducted on segmentations of 3D structures obtained from magnetic resonance images (MRI). First, we apply WESD to subcortical brain structures. The experiment demonstrates WESD's capabilities to differentiate categories of objects even in the presence of high intra-class variability. In the second experiment, we focus on temporal analysis of cardiac images. We apply nWESD to delineations of the blood pool of the left ventricle obtained from 3D + time cardiac MRI. The experiment shows that the shape dissimilarity measurements between time points correlates with the dynamic processes of the beating heart.

\subsubsection{Clustering Sub-Cortical Structures}~\label{sec:subcortical}
Medical research frequently relies on morphometric studies analysing anatomical shapes from medical images~\cite{Bookstein2001a}. In this experiment we construct a low dimensional embedding of subcortical structures extracted from Magnetic Resonance Image (MRI) scans of different individuals based on WESD as well as shape-DNA based distance, $\rho_{SD}^N$, as proposed in~\cite{Reuter2006}. 

For this experiment, we use the publicly available LPBA40 dataset~\cite{Shattuck2008}\footnote{website:http://www.loni.ucla.edu/Atlases/LPBA40}. The dataset contains manual segmentations of various subcortical structures from MRI brain scans of 40 healthy subjects. Figures~\ref{fig:subcortical}(a), (b) and (c) show some examples from these structures. The are two main difficulties associated with such datasets. First, the structures have very large intra-class (inter-subject) variability, i.e. the shape of an anatomical structure is often very different across subjects. Second, the segmentations were obtained by manually delineating the 3D objects on successive 2D slices. This creates inconsistencies between segmentations in two successive slices. Such inconsistencies in the end manifest themselves as local artefacts on the object. The protrusion that can be seen on the top of the second hippocampus in Figure~\ref{fig:subcortical}(a) is an example of such an artefact. These artefacts can influence shape distances negatively. 

We select six structures for each patient: left/right caudate nucleus, left/right putamen and left/right hippocampus, resulting in 240 structures in total. We then create pairwise affinity matrices of the 240 structures first using $\rho_{SD}^N$ with $N = 200$, as proposed in~\cite{Reuter2006}, and then WESD ($\rho^N$ with $p=2$ and $N=200$). Finally, we use the ISOMAP algorithm~\cite{Tenenbaum2000} to construct 2D embeddings of the structures. Figures~\ref{fig:subcortical}(d) and (e) show the resulting embeddings. We observe that the embedding obtained via WESD well clusters the data with respect to the anatomical structures. The separation of the clusters for the SD case, however, is more ambiguous, especially between putamen and hippocampus.

The embeddings presented above were obtained by directly using the manual segmentations without any preprocessing. A natural question is how do these embeddings change if the effects of various artefacts are reduced say via surface smoothing. To answer this question, we smooth the surface of the anatomical 3D models and recomputed the embeddings, which are shown in Figures~\ref{fig:subcortical}(f) and (g). The embedding obtained with $\rho_{SD}^N$, although to a lesser extent, still suffers from similar ambiguity as in Figure~\ref{fig:subcortical}(d). The new embedding based on WESD on the other hand, compared to Figure~\ref{fig:subcortical}(e), even more clearly separates different anatomical structures. However, we also note that this type of preprocessing can also produce undesirable artefacts such as altering the topology of the anatomical object. This is the case for one caudate in Figures~\ref{fig:subcortical}(f) and (g), which ends up as an outlier that is clearly separated from the other data points. Considering this, the fact that WESD is able to produce visually pleasing embeddings (see Figure~\ref{fig:subcortical}(e)) without the need of preprocessing is an advantage. 
\subsubsection{Analysing Heart Function in 4D MRI}~\label{sec:cardiac}
Four-dimensional imaging of patient anatomy is gaining interest in the medical community. The temporal analysis of anatomical structures is used to extract the characteristics of related dynamic processes, which often indicate certain pathologies~\cite{Gotardo2006,Mansi2011,Bernardis2012}. In this section, we apply nWESD to the shapes of the hearts extracted from four dimensional cardiac images of five different patients. The scan of each patient captures a full cycle of one heartbeat as a series of 20 3D images. Each image shows the left ventricle (LV) at a specific point in the cycle, from which we manually segment the corresponding blood pool. Our reference is the blood pool extracted from the first frame (diastole). We compute the nWESD scores between this reference and all other shapes extracted from the series of images. Here, we do not normalise the eigenvalues with respect to the global scale since size change is an important aspect of the heartbeat dynamics. The graph given in Figure~\ref{fig:cardiac} shows the results of these measurements over time across the five patients. The figure also shows some exemplary images and shapes. We observe that the symmetry of the heartbeat along the systolic (as the blood pumps out of the LV pool) and the diastolic phases (as the blood fills in the pool) is well captured with the nWESD score. Furthermore, the end-systolic phase (the time point with the largest distance w.r.t. the reference) is at different time points for different patients, which is to be expected since the different patient scans are not synchronized in time. In summary, WESD well captures the dynamics of the beating heart, which is to be expected given the continuous link between the differences in eigenvalues and the difference in shape (see Section \ref{sec:solo}).  
\begin{figure}[!h]
\center
\subfigure{
 \includegraphics[width=0.13\linewidth]{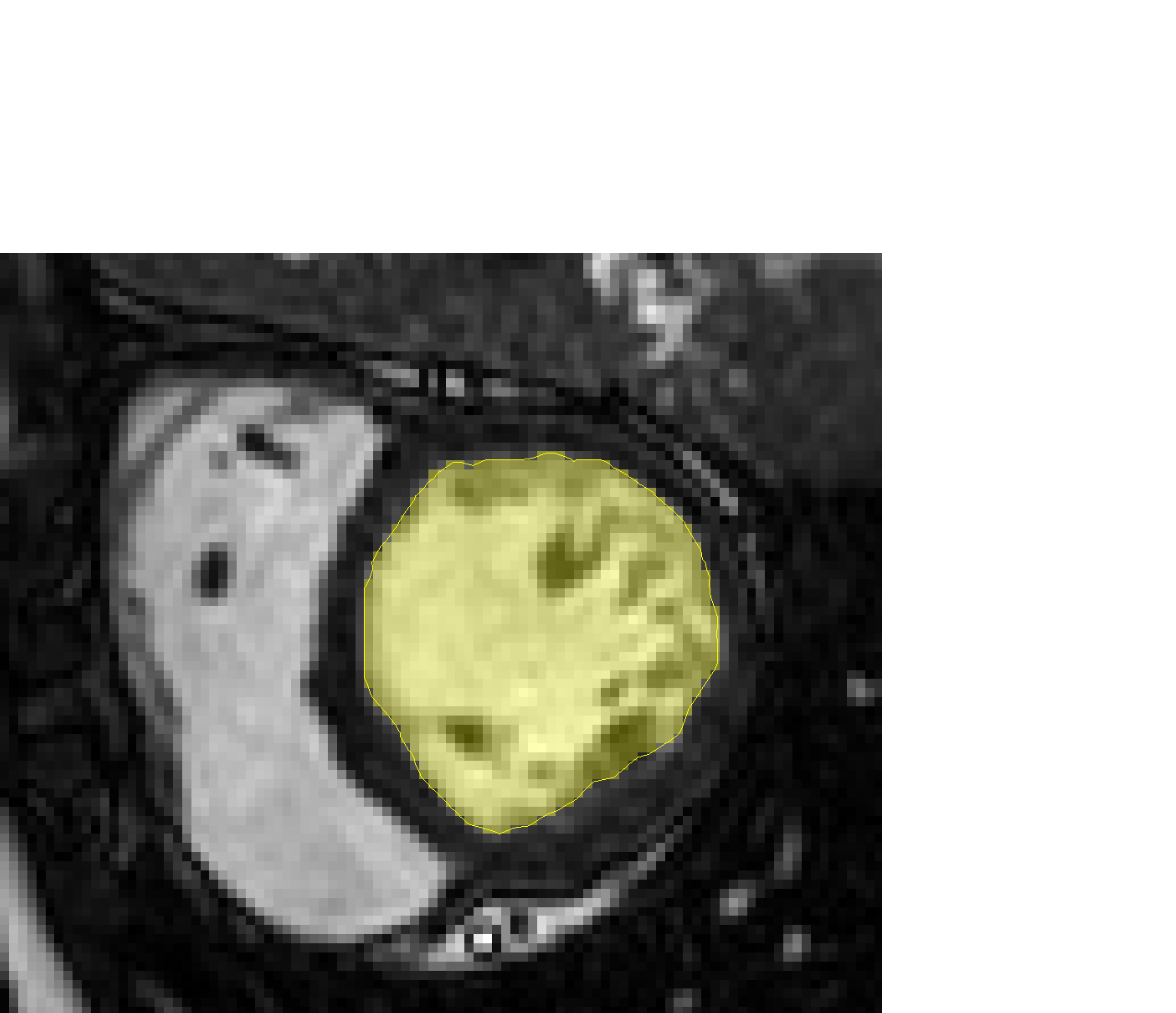}}
\subfigure{
 \includegraphics[width=0.13\linewidth]{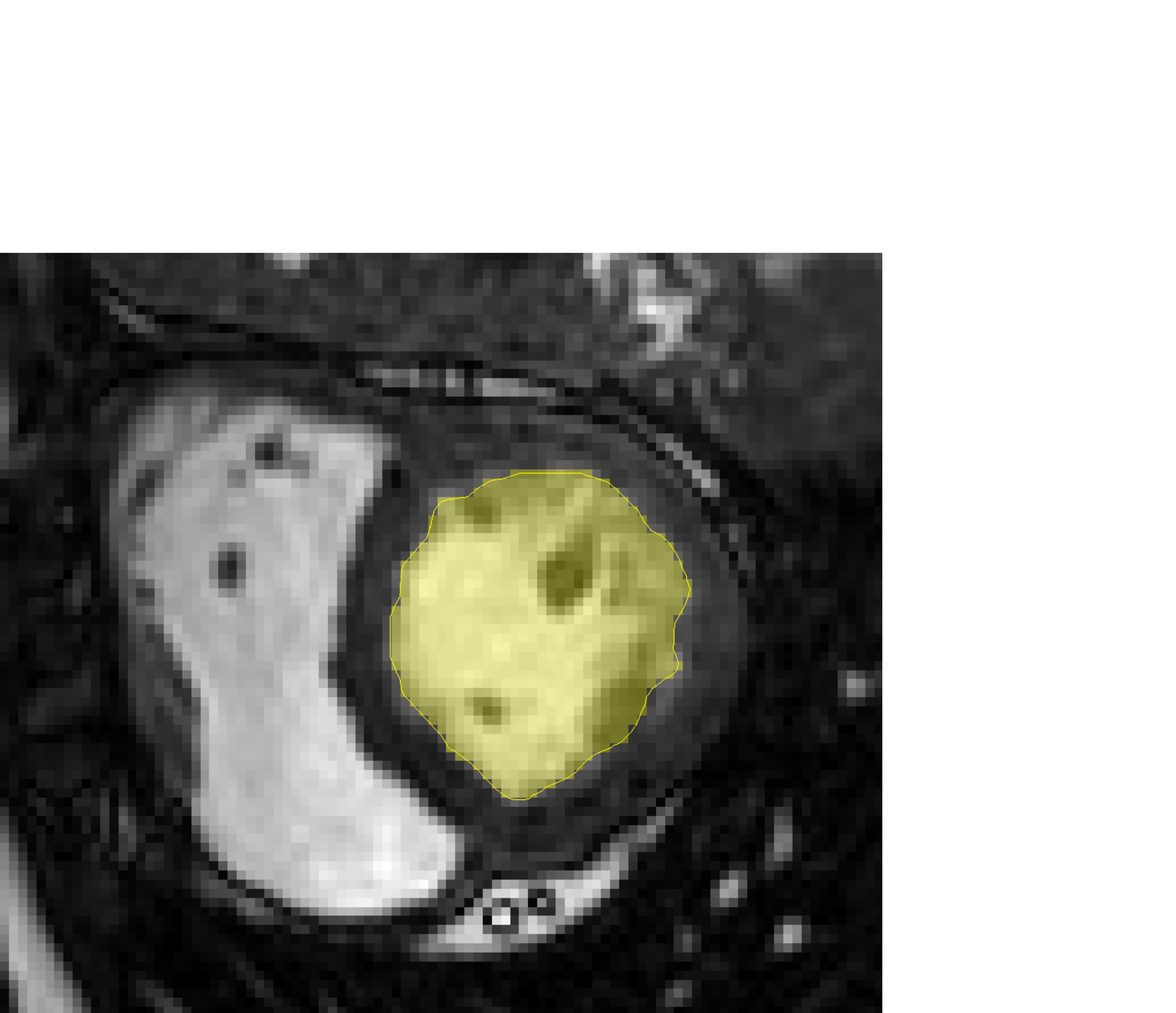}}
\subfigure{
 \includegraphics[width=0.13\linewidth]{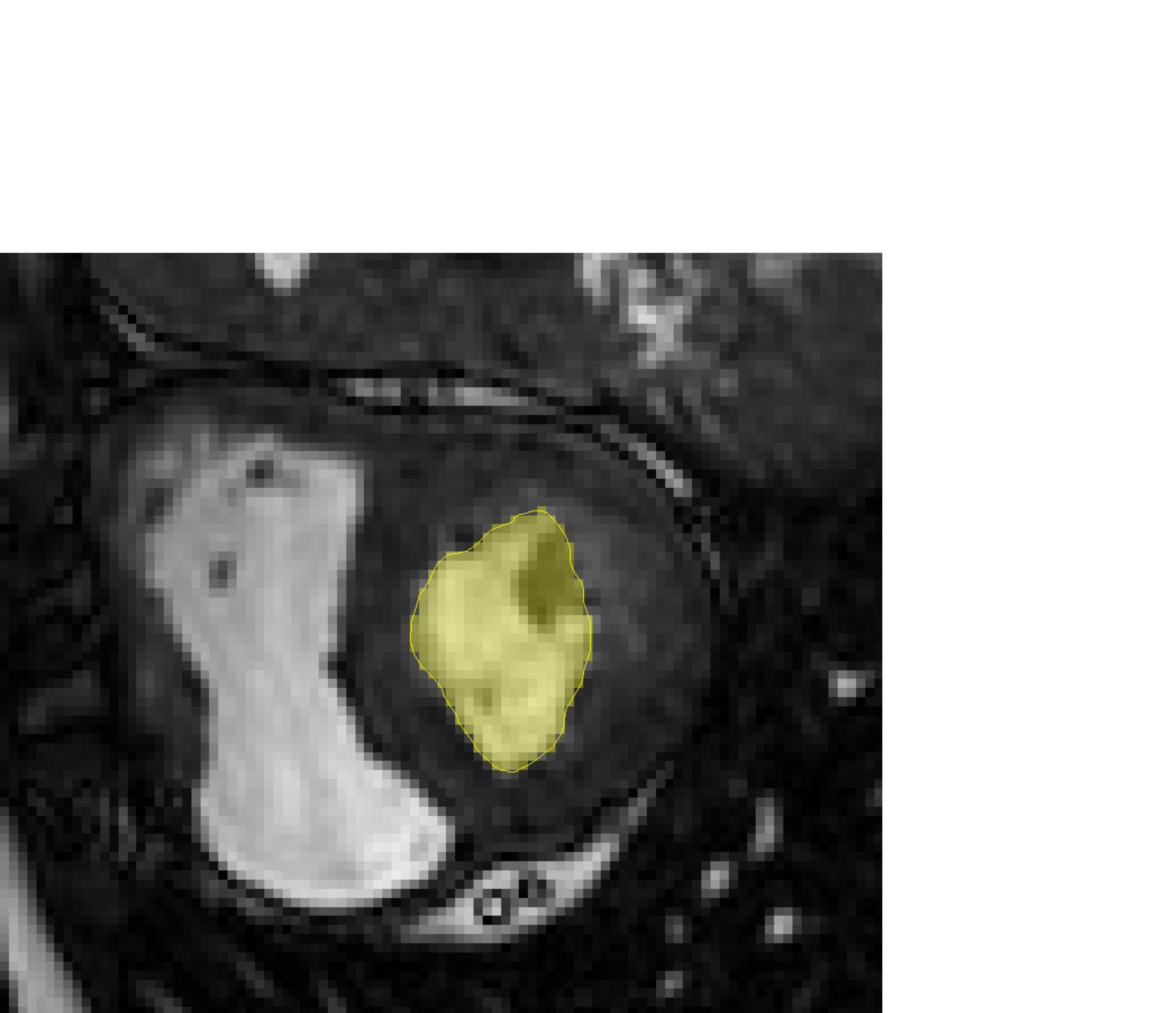}}
\subfigure{
 \includegraphics[width=0.13\linewidth]{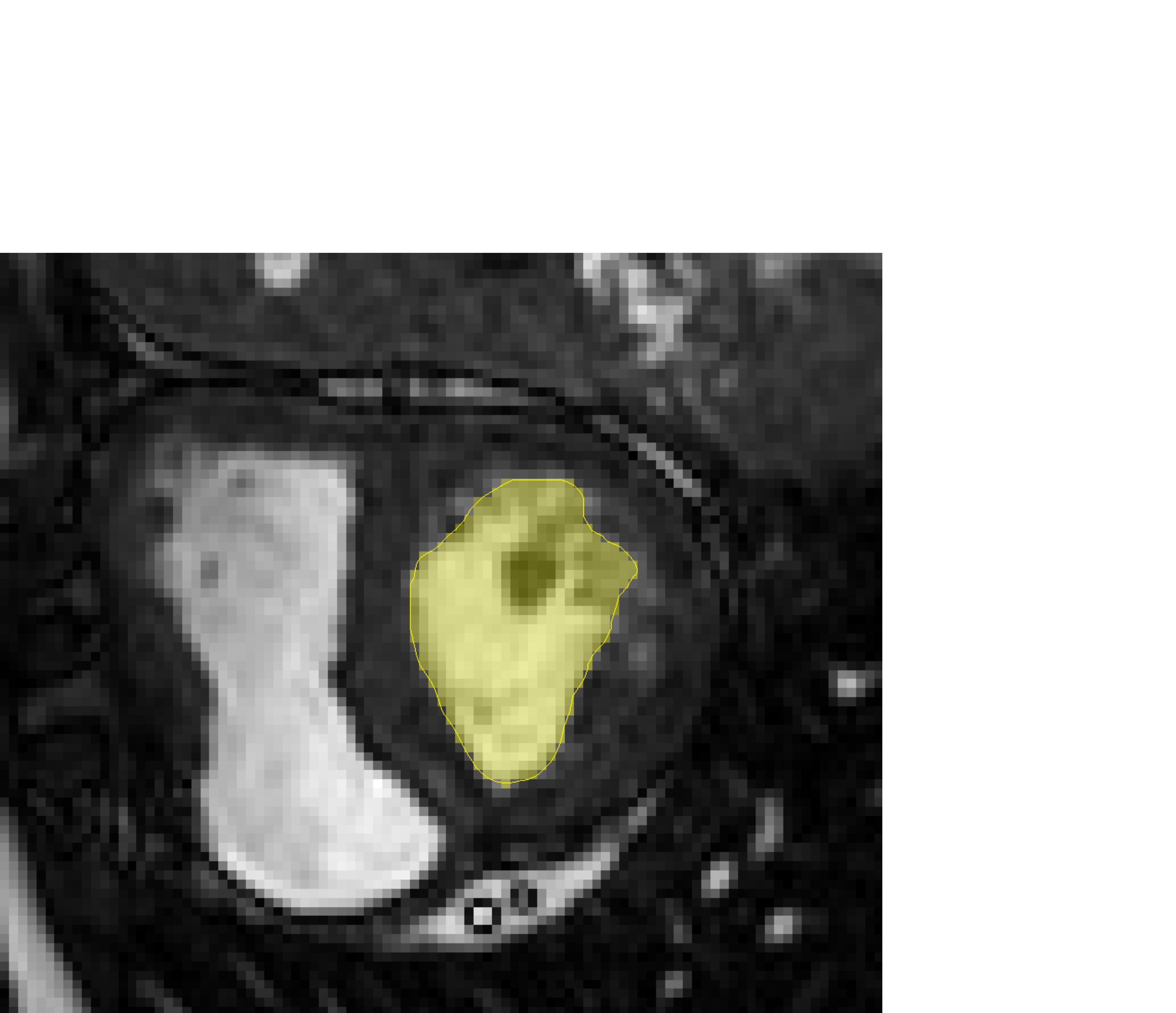}}
\subfigure{
 \includegraphics[width=0.13\linewidth]{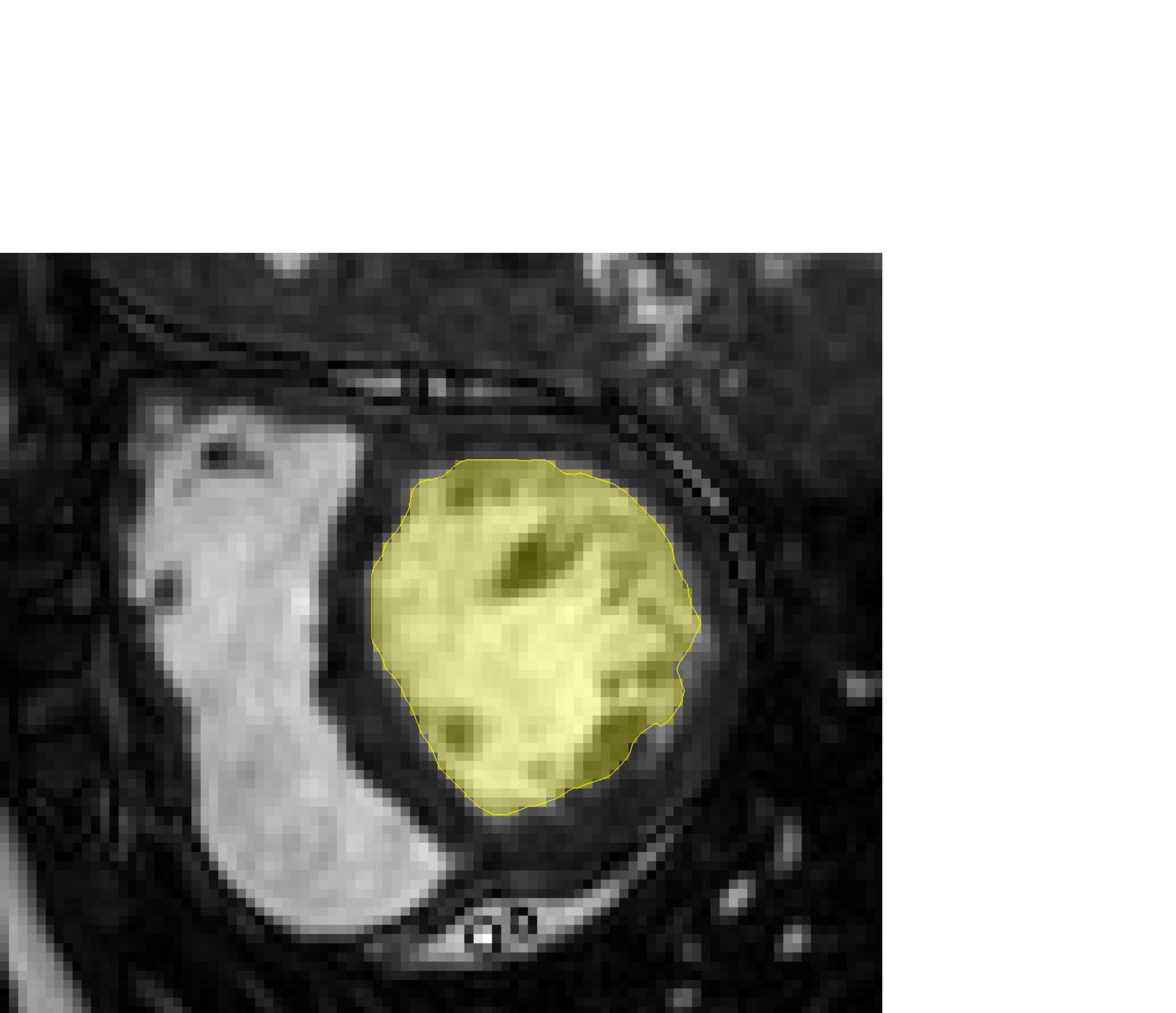}}\\
\subfigure{
 \includegraphics[width=0.13\linewidth]{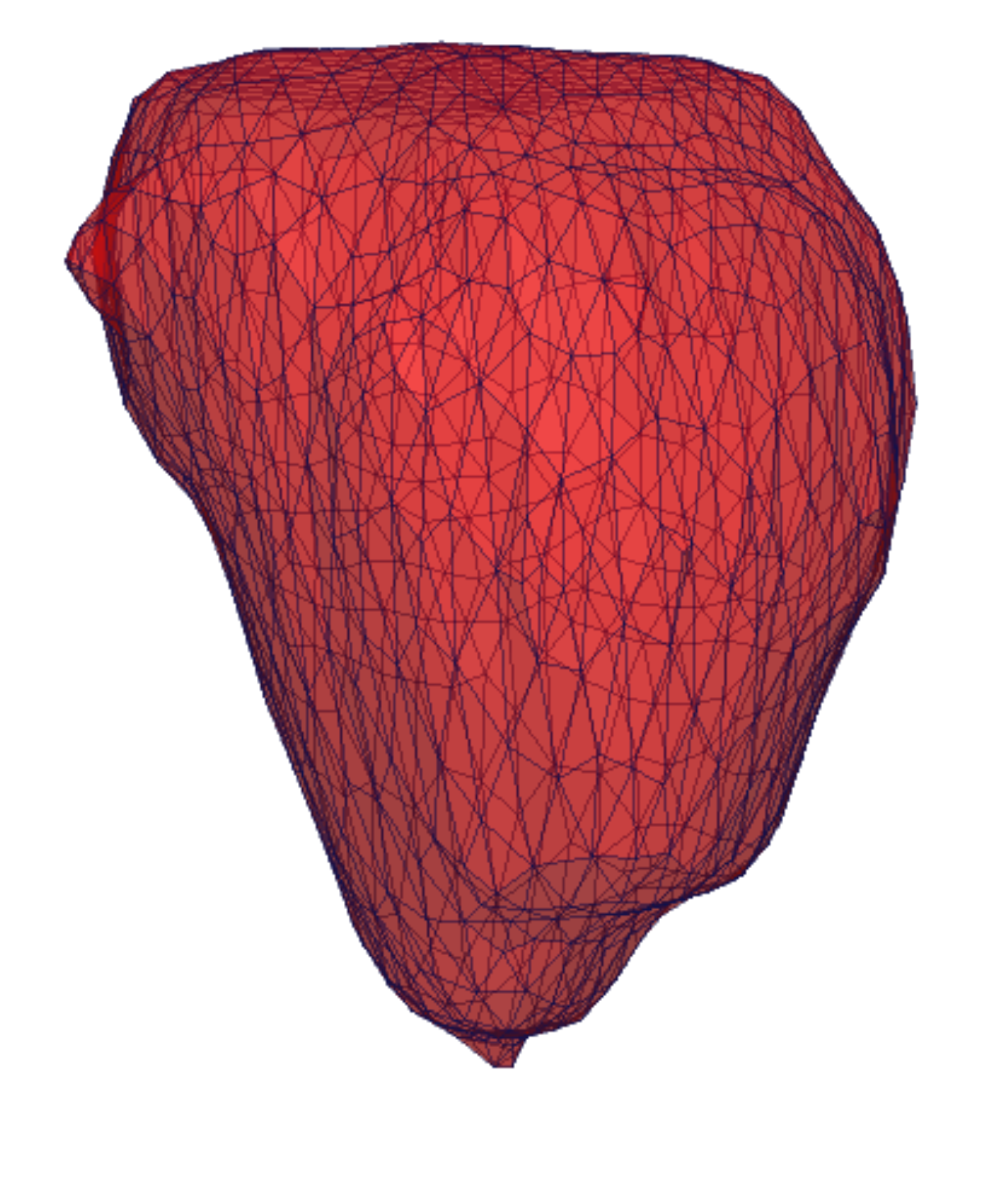}}
\subfigure{
 \includegraphics[width=0.13\linewidth]{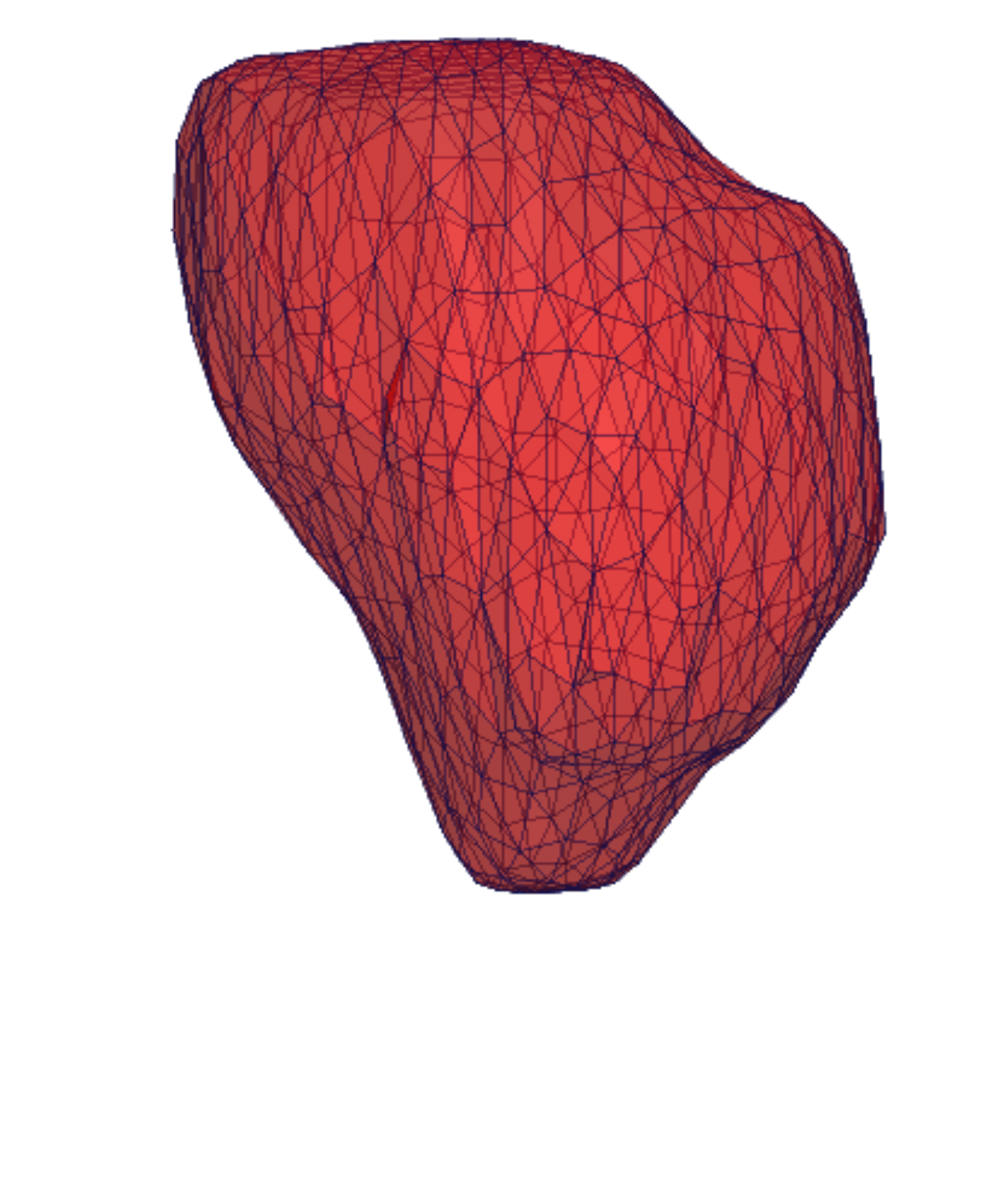}}
\subfigure{
 \includegraphics[width=0.13\linewidth]{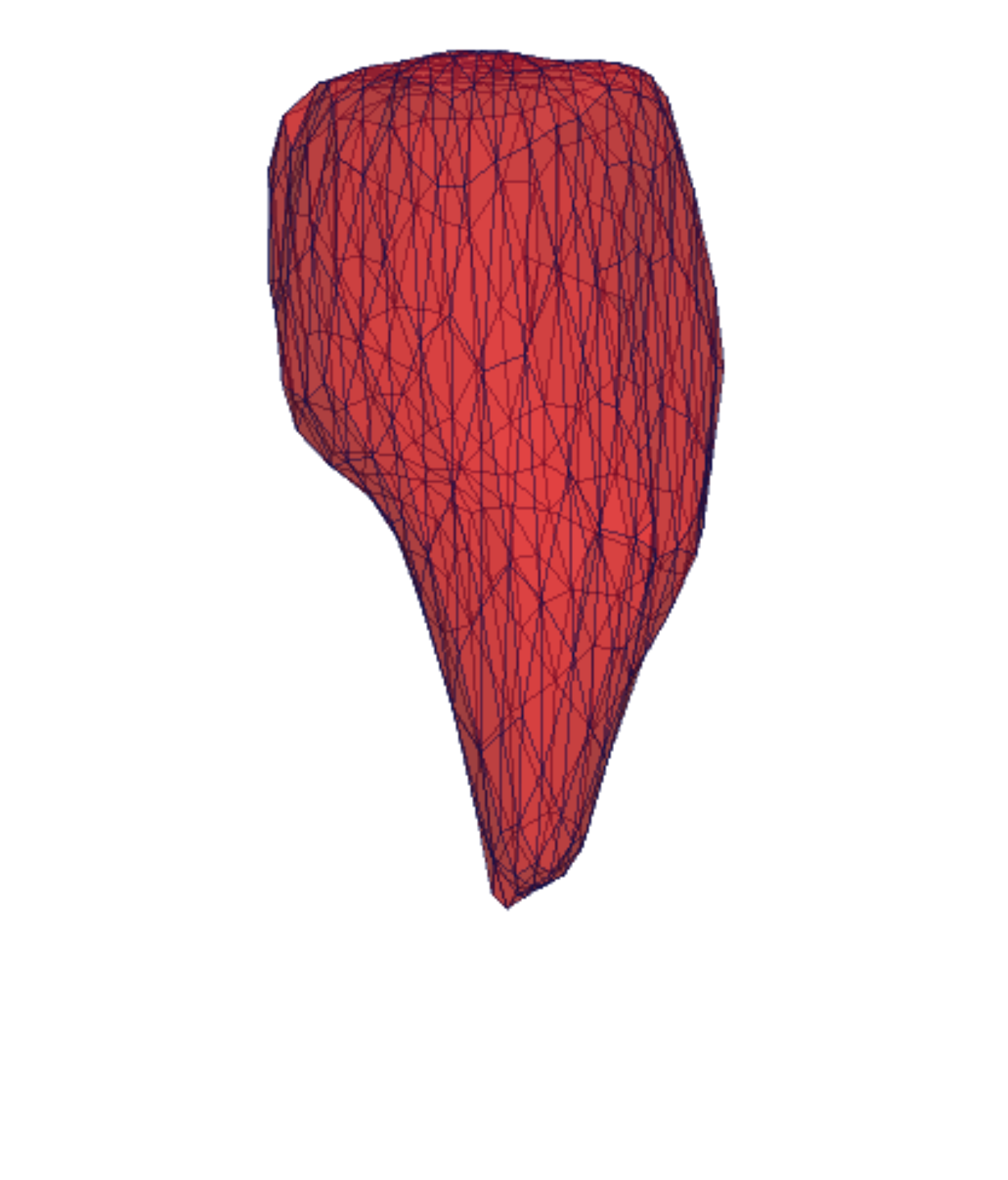}}
\subfigure{
 \includegraphics[width=0.13\linewidth]{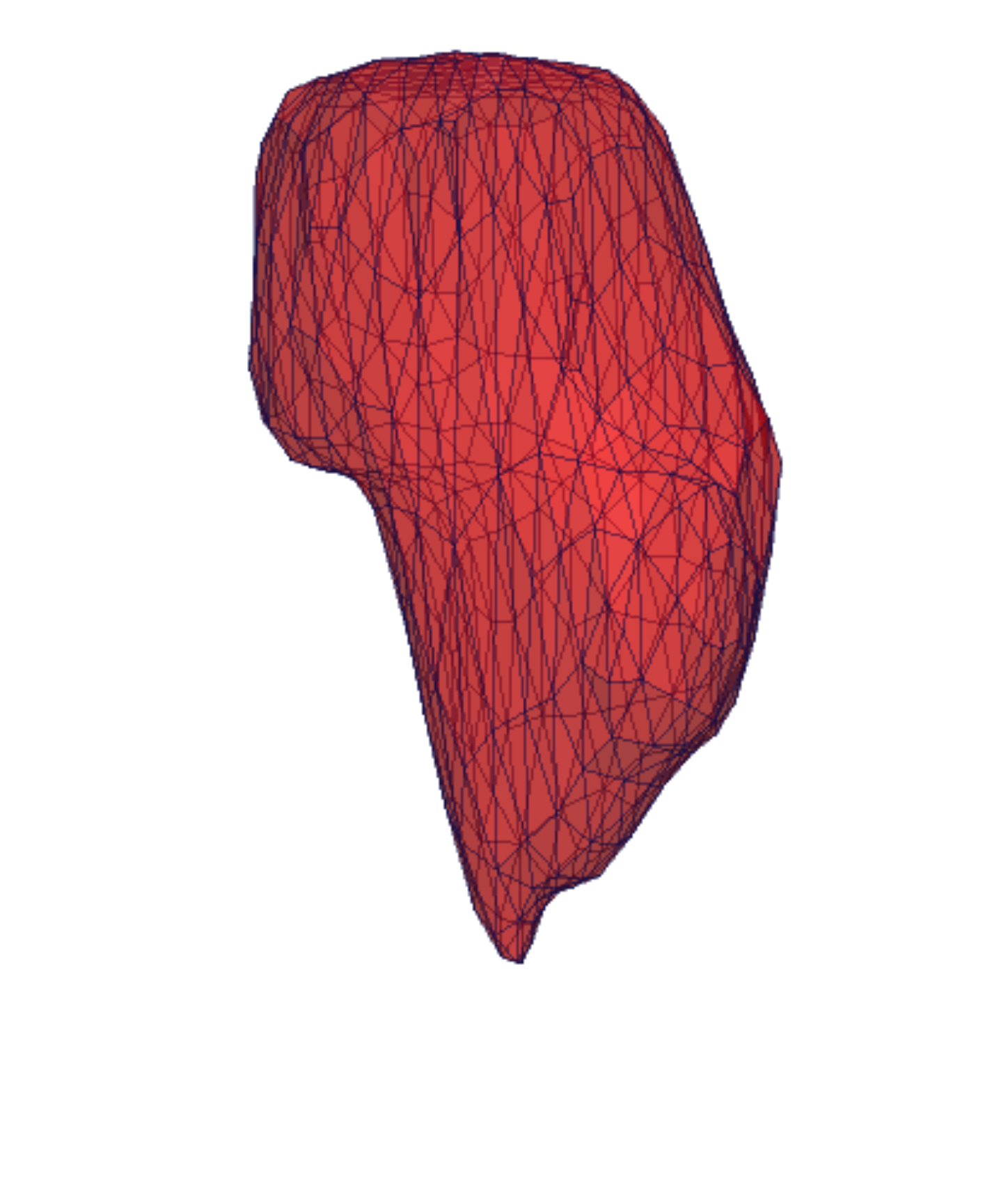}}
\subfigure{
 \includegraphics[width=0.13\linewidth]{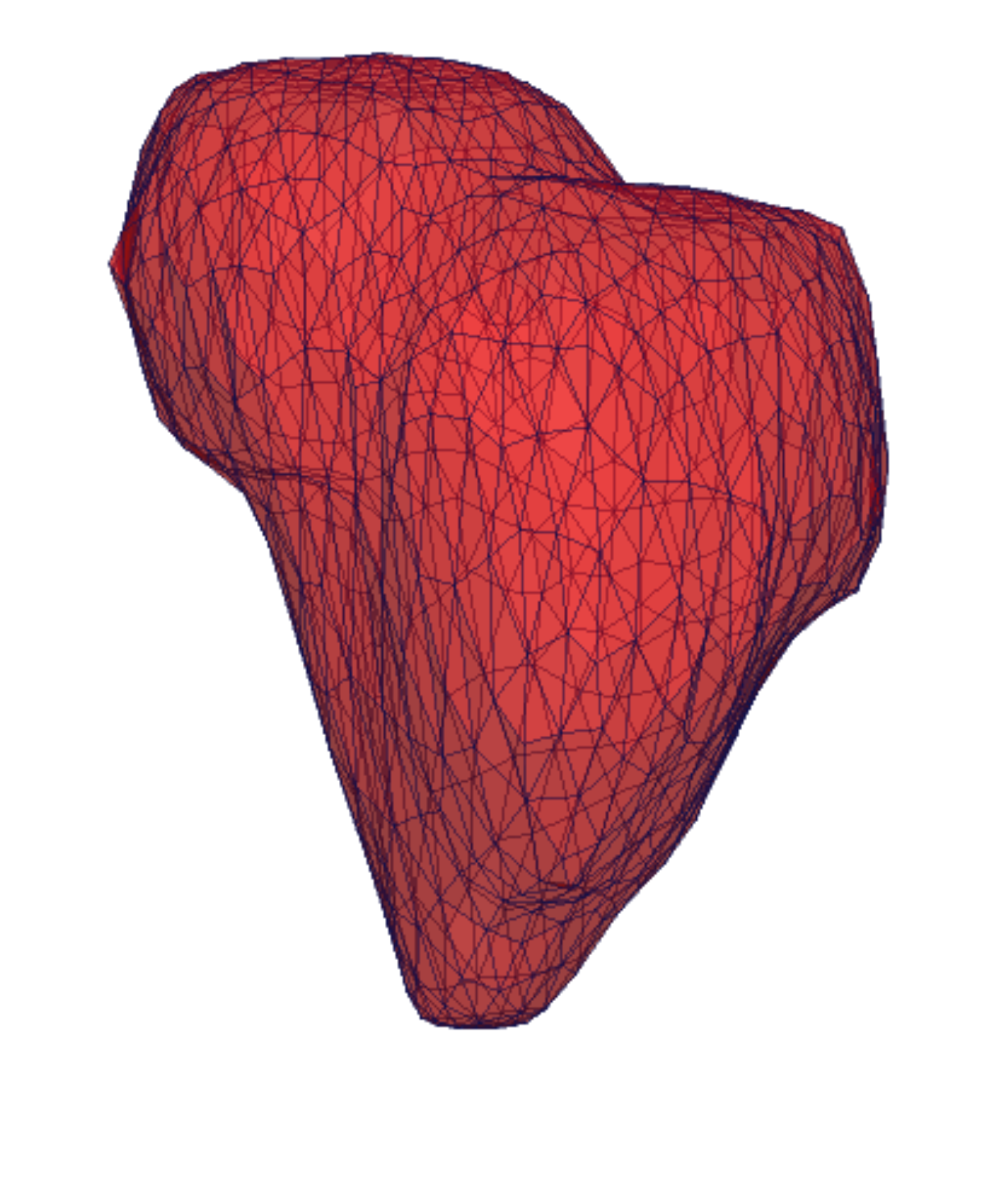}}
\subfigure{
 \includegraphics[width=0.70\linewidth]{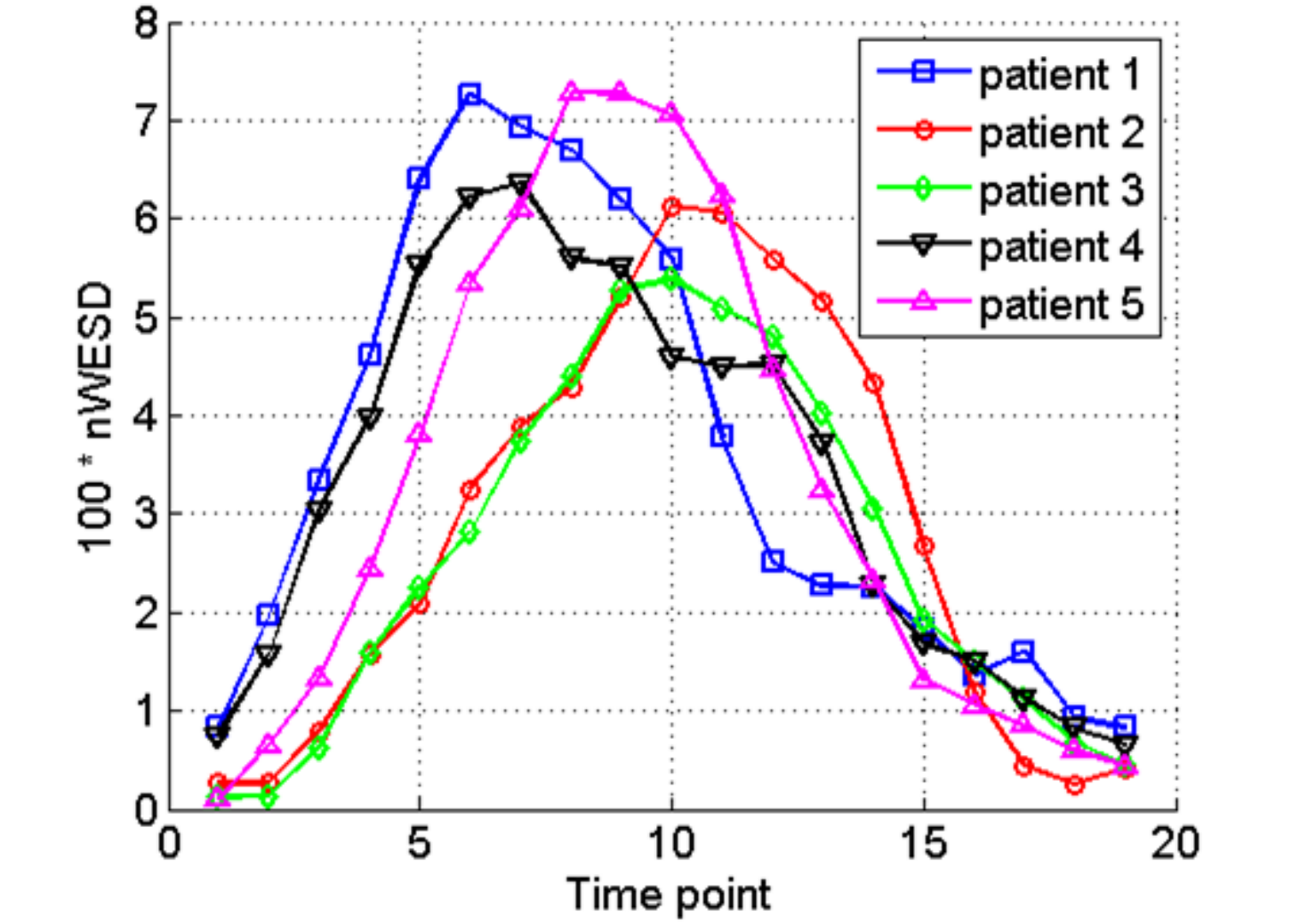}}
\caption{\label{fig:cardiac}Analysing 3D + time (4D) cardiac images: Top row shows corresponding 2D slices of a 4D MRI dataset at time points $t=\{0,3,6,9,12\}$. In the middle row, 3D shapes extracted at each of the time points. For five patients, we compute the nWESD shape dissimilarity score of the LV blood pool at each time point with respect to its shape at $t = 0$. The graph plots these scores. We note that the proposed shape distance is able to capture the dynamic process of the LV shape changes and furthermore, the symmetry between the two phases of an heart beat: diastole and systole.}
\end{figure}
\section{Conclusion}\label{sec:conc}
This article proposed WESD, a new spectral shape distance defined over the eigenvalues of the Laplace operator. WESD is a theoretically sound shape metric that is derived from the heat-trace. The theoretical analysis given in this article presented and proved the properties of WESD related to its existence, computability and multi-scale aspect. The presented experiments showed that the theoretical properties of WESD have many practical advantages over previous works. These experiments further highlighted that WESD is beneficial for various applications. 
\begin{appendices}
\section{Bounds on the Laplace spectrum}\label{sec:bounds}
Li and Yau in~\cite{li_1983} proved that the Laplace spectrum has the following universal lower bound 
\begin{equation}
\lambda_n \geq \frac{d}{d+2}4\pi^2\left(\frac{n}{B_dV}\right)^{2/d} ~~~\forall n > 0.
\end{equation}
We notice that this lower bound does not depend on the shape. 
\subsubsection{Upper bounds}\label{sec:upp_bound}
Several authors have investigated the upper bounds and the relative growth rate of the eigenvalues of the Dirichlet spectrum~\cite{Protter1987}. In~\cite{yang_1991}, Yang provides an upper bound for the growth rate of the components for $n\geq 1$ as
\begin{equation}
\lambda_{n+1} < \left[ 1 + \frac{4}{d}\right]\frac{1}{n}\sum_{m=1}^n \lambda_m.
\end{equation}This equation can be transformed into a sequence of upper bounds by only knowing the first eigenvalue $\lambda_1$. Although sharp for the first few eigenvalues, the upper bound is too relaxed for the remaining modes. Cheng and Yang in~\cite{cheng_2007} provides a much sharper upper bound for larger values of $n$ and is valid for $n\geq 2$. 
\begin{eqnarray}
\lambda_{n+1} \leq C_0(d,n)n^\frac{2}{d}\lambda_1,
\end{eqnarray}where
\begin{eqnarray}
\nonumber C_0(d,n) = 1 + \frac{a(\min(d,n-1))}{d}\\
\nonumber a(1) \leq 2.64\ a(2)\leq2.27\\ 
\nonumber \textrm{and}\ a(p) = 2.2 - 4 \log(1 + \frac{p-3}{50})\ \textrm{for}\ p\geq 3,
\end{eqnarray}where the bound only depends on the first eigenvalue and furthermore it is consistent with Weyl's asymptotic growth law. 
\section{Proofs of Lemmas and Corollaries}\label{sec:proofs}
Before presenting the proofs for the corollaries let us provide a lemma that will be useful throughout this section. 
\begin{lemma}\label{lemma}
For any $a,b\in\mathbb{R}$ such that $a>b>0$ the function $f(a,b) = \frac{a - b}{ab}$ increases monotonously with increasing $a$ and decreases monotonously with increasing $b$.
\end{lemma}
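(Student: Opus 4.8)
The plan is to reduce the claim to an elementary sign computation on partial derivatives, since the statement concerns monotonicity in one argument at a time. The key preliminary step is an algebraic simplification that makes everything transparent: because $a,b>0$, the denominator never vanishes and we may split the fraction as
\[
f(a,b) = \frac{a-b}{ab} = \frac{a}{ab} - \frac{b}{ab} = \frac{1}{b} - \frac{1}{a}.
\]
In this form the two monotonicity statements are almost immediate, and I would carry out the rest as two short differentiations.

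Holding $b$ fixed, I would compute $\frac{\partial f}{\partial a} = \frac{1}{a^2}$, which is strictly positive for every $a>0$; hence $f$ is strictly increasing in $a$ on the admissible region. Holding $a$ fixed, I would compute $\frac{\partial f}{\partial b} = -\frac{1}{b^2}$, which is strictly negative for every $b>0$; hence $f$ is strictly decreasing in $b$. Since each partial derivative has constant sign throughout $a>b>0$, the standard one-variable fact that the sign of the derivative dictates monotonicity on an interval delivers both conclusions directly, with no case analysis required. As a consistency check I would also differentiate the unsimplified quotient $\frac{a-b}{ab}$ by the quotient rule, obtaining $\frac{b^2}{(ab)^2} = \frac{1}{a^2}$ and $\frac{-a^2}{(ab)^2} = -\frac{1}{b^2}$, which matches and confirms the signs.

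There is essentially no substantive obstacle here: this is an elementary calculus lemma, and the only point demanding any care is ensuring the function is well defined and smooth on the relevant region. That is guaranteed by the hypothesis $a>b>0$, which keeps $ab$ bounded away from zero, so $f$ is $C^\infty$ and the termwise differentiation above is legitimate everywhere on the domain. The value of stating the lemma separately is purely to streamline its later reuse, where the monotonicity of $\frac{|\lambda_n-\xi_n|}{\lambda_n\xi_n}$ in the individual eigenvalues is invoked when bounding the WESD series against the universal spectral bounds of Appendix~\ref{sec:bounds}.
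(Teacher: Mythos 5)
Your proof is correct and follows essentially the same route as the paper, which likewise observes that $\frac{\partial f}{\partial a} = \frac{1}{a^2} > 0$ and $\frac{\partial f}{\partial b} = -\frac{1}{b^2} < 0$ and concludes from the signs of the partial derivatives. The rewriting $f(a,b) = \frac{1}{b} - \frac{1}{a}$ is a pleasant shortcut but does not change the substance of the argument.
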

\begin{proof}Since $f$ is differentiable it suffices to look at its partial derivatives $\frac{\partial f} {\partial a} = \frac{1}{ a^2}$ and $\frac{\partial f}{\partial b} = - \frac{1}{b^2}$.\end{proof}
\begin{repcorollary}{corr:conv}
Let $\Omega_\lambda\subset\mathbb{R}^d$ and $\Omega_\xi\subset\mathbb{R}^d$ be any two closed domains with piecewise smooth boundaries and $\{\lambda\}_{n=1}^\infty$ and $\{\xi\}_{n=1}^\infty$ be their Laplace spectrum. Then the weighted spectral distance
\begin{equation}
\nonumber\rho(\Omega_\lambda, \Omega_\xi) = \left[ \sum_{n=1}^\infty  \left(\frac{|\lambda_n - \xi_n|}{\lambda_n \xi_n}\right)^p\right]^{1/p}
\end{equation}converges for $p>\frac{d}{2}$. Furthermore, 
\begin{equation}
\rho(\Omega_\lambda, \Omega_\xi) < \left\{ C + K\cdot\left[ \zeta\left(\frac{2p}{d}\right) - 1 - \left(\frac{1}{2}\right)^\frac{2p}{d}\right] \right\}^\frac{1}{p},
\end{equation}where $\zeta(\cdot)$ is the Riemann zeta function and the coefficients $C$ and $K$ are given as
\begin{eqnarray}
\nonumber C &\triangleq& \sum_{i=1,2} \left[\frac{d+2}{d\cdot4\pi^2}\cdot\left(\frac{B_d\hat{V}}{i}\right)^\frac{2}{d} - \frac{1}{\mu}\cdot\left(\frac{d}{d+4}\right)^{i-1}\right]^p\\
\nonumber K &\triangleq& \left[ \frac{d+2}{d\cdot4\pi^2}\cdot\left(B_d\hat{V}\right)^\frac{2}{d} - \frac{1}{\mu}\cdot\frac{d}{d + 2.64}\right]^p\\
\nonumber \hat{V} &\triangleq& \max(V(\Omega_\lambda),V(\Omega_\xi)),\ \ \mu\triangleq\max(\lambda_1,\xi_1),
\end{eqnarray}
where $V(\cdot)$ denotes the volume (or area in 2D) of an object.
\end{repcorollary}
\begin{proof}
The following inequality results from combining the bounds specified in Section~\ref{sec:bounds} with Lemma~\ref{lemma} 
\begin{eqnarray}
\nonumber\frac{|\lambda_n - \xi_n|}{\lambda_n\xi_n} < \frac{d+2}{d\cdot4\pi^2}\cdot\left(\frac{B_d\hat{V}}{n}\right)^\frac{2}{d}-\frac{1}{\mu}\cdot\left(\frac{d}{d + 4}\right)^{n-1}
\end{eqnarray}for $n=1,2$ and for $n \geq 3$
\begin{eqnarray}
\nonumber\frac{|\lambda_n - \xi_n|}{\lambda_n\xi_n} &<& \frac{d+2}{d\cdot4\pi^2}\cdot\left(\frac{B_d\hat{V}}{n}\right)^\frac{2}{d} - \frac{1}{\mu}\cdot\frac{1}{C_0(d,n)n^\frac{2}{d}}\\
\nonumber&\leq&\frac{d+2}{d\cdot4\pi^2}\cdot\left(\frac{B_d\hat{V}}{n}\right)^\frac{2}{d} - \frac{1}{\mu}\cdot\frac{d}{(d + 2.64)n^\frac{2}{d}},
\end{eqnarray}Based on this component-wise bound we can write the infinite sum without the first two terms as
\begin{equation}
\nonumber\sum_{n=3}^\infty\left(\frac{|\lambda_n - \xi_n|}{\lambda_n\xi_n}\right)^p < K\sum_{n=3}^\infty \left(\frac{1}{n}\right)^\frac{2p}{d},
\end{equation}which for $p>\frac{d}{2}$ converges to 
\begin{equation}
\nonumber K\sum_{n=3}^\infty \left(\frac{1}{n}\right)^\frac{2p}{d} = \zeta\left(\frac{2p}{d}\right) - 1 - \left(\frac{1}{2}\right)^\frac{2p}{d}
\end{equation} and diverges for $p\leq \frac{2}{d}$. Consequently, $\rho(\Omega_\lambda, \Omega_\xi)$ converges for $p>\frac{d}{2}$. Furthermore, extending the sum with the upper bounds for $n=1,2$ the following upper bound for the distance between $\Omega_\lambda$ and $\Omega_\xi$ holds
\begin{equation}
\nonumber\rho(\Omega_\lambda, \Omega_\xi) < \left\{ C + K\cdot\left[ \zeta\left(\frac{2p}{d}\right) - 1 - \left(\frac{1}{2}\right)^\frac{2p}{d}\right] \right\}^\frac{1}{p}.\end{equation}
\end{proof}
\begin{repcorollary}{corr:pseudometric}
$\rho(\Omega_\lambda,\Omega_\xi)$ is a pseudometric for $d\geq2$.
\end{repcorollary}
To ease notation we define
\begin{equation}\nonumber\varrho_n(\Omega_\lambda,\Omega_\xi) \triangleq \frac{|\lambda_n - \xi_n|}{\lambda_n\xi_n}.\end{equation} This leads to \begin{equation}\nonumber\rho(\Omega_\lambda,\Omega_\xi) = \left[\sum_{n=1}^\infty\varrho^p_n(\Omega_\lambda,\Omega_\xi)\right]^\frac{1}{p}.\end{equation}
\begin{proof}
$\forall \Omega_\lambda\subset\mathbb{R}^d,\ \Omega_\xi\subset\mathbb{R}^d$
The first three points for this proof are trivial:
\begin{enumerate}
\item[-] $\rho(\Omega_\lambda,\Omega_\xi) > 0$ since $\varrho_n(\Omega_\lambda,\Omega_\xi) >0\ \forall n$.
\item[-] $\rho(\Omega_\lambda, \Omega_\lambda) = 0$ since $|\lambda_n - \lambda_n| = 0\ \forall n$.
\item[-] $\rho(\Omega_\lambda, \Omega_\xi) = \rho(\Omega_\xi,\Omega_\lambda)$ since $|\lambda_n-\xi_n| = |\xi_n - \lambda_n|\ \forall n$
\end{enumerate}
In order to prove the triangle inequality let us proceed with the case $\lambda_n \geq \xi_n$. The inverse case follows exactly the same way. Now, let $\Omega_\eta\subset\mathbb{R}^d$ be an arbitrary closed domain with piecewise smooth boundaries whose spectrum is given as $\{\eta_n\}_{n=1}^\infty$. Investigating \begin{equation}\nonumber \frac{|\lambda_n - \eta_n|}{\lambda_n\eta_n} + \frac{|\eta_n - \xi_n|}{\eta_n\xi_n}\end{equation} we notice that for each $n$ there are only three possible cases: 
\begin{enumerate}
\item $\lambda_n \geq \eta_n \geq \xi_n$, for which 
\begin{equation*}
\varrho_n(\Omega_\lambda,\Omega_\eta) + \varrho_n(\Omega_\eta,\Omega_\xi) = \varrho_n(\Omega_\lambda,\Omega_\xi),
\end{equation*} 
\item $\lambda_n \geq \xi_n \geq \eta_n$, for which $\varrho_n(\Omega_\lambda,\Omega_\eta) \geq \varrho_n(\Omega_\lambda,\Omega_\xi)$ as a result of the Lemma~\ref{lemma}. Due  $\varrho_n(\Omega_\xi,\Omega_\eta) \geq 0$: 
\begin{equation*}
\varrho_n(\Omega_\lambda,\Omega_\eta) + \varrho_n(\Omega_\eta,\Omega_\xi) \geq \varrho_n(\Omega_\lambda,\Omega_\xi)
\end{equation*}
\item $\eta_n \geq\lambda_n \geq \xi_n$, for which $\varrho_n(\Omega_\eta,\Omega_\xi) \geq \varrho_n(\Omega_\lambda,\Omega_\xi)$ as a result of the Lemma~\ref{lemma} once again. And as in the previous case, due to $\varrho_n(\Omega_\eta,\Omega_\lambda)\geq 0$ we have
\begin{equation*}
\varrho_n(\Omega_\lambda,\Omega_\eta) + \varrho_n(\Omega_\eta,\Omega_\xi) \geq \varrho_n(\Omega_\lambda,\Omega_\xi). 
\end{equation*}
\end{enumerate}
Thus $\forall n\ \varrho_n(\Omega_\lambda,\Omega_\eta) + \varrho_n(\Omega_\eta,\Omega_\xi) \geq \varrho_n(\Omega_\lambda,\Omega_\xi)$.  Since \mbox{$p > 1$} as $p>\frac{d}{2}$ for $d\geq2$ the Minkowski Inequality states
\begin{eqnarray}
\nonumber\rho(\Omega_\lambda,\Omega_\eta) &+& \rho(\Omega_\eta,\Omega_\xi) \\
\nonumber &=&\left[\sum_{n=1}^\infty\varrho^p_n(\Omega_\lambda,\Omega_\eta)\right]^\frac{1}{p} + \left[\sum_{n=1}^\infty\varrho^p_n(\Omega_\eta,\Omega_\xi)\right]^\frac{1}{p}\\
\nonumber &\geq& \left[ \sum_{n=1}^\infty \left(\varrho_n(\Omega_\lambda,\Omega_\eta) + \varrho_n(\Omega_\eta,\Omega_\xi)\right)^p\right]^{1/p}.
\end{eqnarray}When combined with the previous results, the outcome is the triangle inequality:
\begin{eqnarray}
\nonumber \nonumber\rho(\Omega_\lambda,\Omega_\eta) + \rho(\Omega_\eta,\Omega_\xi) \geq \rho(\Omega_\lambda,\Omega_\xi)
\end{eqnarray}
\end{proof}
\begin{replemma}{lemma:influenceratio}
Let $\Omega_\lambda\mathbb{R}^d$ represent an object with piecewise smooth boundary and $\mathcal{D}(l,t)\triangleq \frac{e^{-\lambda_l t}}{Z(t)}$ be the corresponding influence ratio with respect to mode $l$ and $t$. Then for any two spectral indices $m>n>0$
\begin{equation}\nonumber\mathcal{D}(n,t) > \mathcal{D}(m,t),\ \ \forall t>0\end{equation}
and particularly for two $t$ values such that $t_1 > t_2$
\begin{equation} \nonumber \frac{\mathcal{D}(m,t_1)}{\mathcal{D}(n,t_1)} < \frac{\mathcal{D}(m,t_2)}{\mathcal{D}(n,t_2)}.\end{equation}\end{replemma}
\begin{proof}
The proof follows the properties of the exponential function and the properties of the spectrum of the Laplace operator. For $n<m$ we know that $\lambda_n < \lambda_m$ which leads to \mbox{$e^{-\lambda_n t} > e^{-\lambda_m t}\ \forall t>0$.} Since the denominators are the same for both  $\mathcal{D}(n,t)$ and $\mathcal{D}(m,t)$ then  
\begin{equation*}
 \mathcal{D}(n,t) > \mathcal{D}(m,t)\ \forall t>0.
\end{equation*} 

For the second part of the lemma, we first compute the ratio
\begin{equation}\nonumber\frac{\mathcal{D}(m,t)}{\mathcal{D}(n,t)} = e^{-(\lambda_m - \lambda_n)t}.\end{equation}Now based on $\lambda_m > \lambda_n$ and $e^{-(\lambda_m - \lambda_n)t}$ is monotonously decreasing with increasing $t$, it follows for $t_1 > t_2$ that 
\begin{equation*}
\frac{\mathcal{D}(m,t_1)}{\mathcal{D}(n,t_1)} = e^{-(\lambda_m - \lambda_n)t_1} < e^{-(\lambda_m - \lambda_n)t_2} = \frac{\mathcal{D}(m,t_2)}{\mathcal{D}(n,t_2)}. 
\end{equation*}
\end{proof}
\begin{repcorollary}{corr:multires}
Let $\Omega_\lambda$ and
  $\Omega_\xi$ be two objects with piecewise smooth boundaries. Then
  for any two scalars with $p>d/2$, $q>d/2$, $p \geq q$ and for all $n$ with $|\lambda_n - \xi_n|>0$ there exists a $M>n$ so that $\forall m\geq
  M$
  \begin{equation}
    \nonumber \frac{\left(\frac{|\lambda_m - \xi_m|}{\lambda_m\xi_m}\right)^p}{\left(\frac{|\lambda_n - \xi_n|}{\lambda_n\xi_n}\right)^p} \leq \frac{\left(\frac{|\lambda_m - \xi_m|}{\lambda_m\xi_m}\right)^q}{\left(\frac{|\lambda_n - \xi_n|}{\lambda_n\xi_n}\right)^q}
  \end{equation}
\end{repcorollary}
\begin{proof}
From Corollary~\ref{corr:conv} we know that the series 
\begin{equation}
\nonumber\sum_{m=1}^\infty \left(\frac{|\lambda_m - \xi_m|}{\lambda_m\xi_m}\right)^q
\end{equation}converges. Then based on Cauchy's convergence criterion for series
\begin{equation}
\nonumber\lim_{n\rightarrow\infty}\left(\frac{|\lambda_m - \xi_m|}{\lambda_m\xi_m}\right)^q=0.\end{equation}In other words, $\forall \epsilon>0$ there exists a $M$ such that
\begin{equation}
\nonumber\left(\frac{|\lambda_m - \xi_m|}{\lambda_m\xi_m}\right)^q < \epsilon,\ \forall m>M.
\end{equation}Let $n$ be an arbitrary index such that $|\lambda_n - \xi_n|>0$. Consequently, also for $|\lambda_n - \xi_n|$, there exists a $M$ such that $\forall m>M$
\begin{equation}
\nonumber\frac{\left(\frac{|\lambda_m - \xi_m|}{\lambda_m\xi_m}\right)^q}{\left(\frac{|\lambda_n - \xi_n|}{\lambda_n\xi_n}\right)^q} <1.
\end{equation}Since $p\geq q$ we can find a $k\geq 1$ such that $p = kq$. Then based on the above inequality $\forall m>M$
\begin{eqnarray}
\nonumber\frac{\left(\frac{|\lambda_m - \xi_m|}{\lambda_m\xi_m}\right)^p}{\left(\frac{|\lambda_n - \xi_n|}{\lambda_n\xi_n}\right)^p} &=& \left[\frac{\left(\frac{|\lambda_m - \xi_m|}{\lambda_m\xi_m}\right)^q}{\left(\frac{|\lambda_n - \xi_n|}{\lambda_n\xi_n}\right)^q}\right]^k \\
\nonumber&\leq& \frac{\left(\frac{|\lambda_m - \xi_m|}{\lambda_m\xi_m}\right)^q}{\left(\frac{|\lambda_n - \xi_n|}{\lambda_n\xi_n}\right)^q}
\end{eqnarray}
\end{proof}
\begin{repcorollary}{corr:trunc}
Let $\rho^N(\Omega_\lambda,\Omega_\xi)$ and $\overline{\rho}^N(\Omega_\lambda, \Omega_\xi)$ be the truncated approximations of $\rho(\Omega_\lambda,\Omega_\xi)$ and $\overline{\rho}(\Omega_\lambda,\Omega_\xi)$ respectively, using the first $N$ modes. Then 
\begin{equation}\nonumber\lim_{N\rightarrow\infty}|\rho -\rho^N| = 0\end{equation}and
\begin{equation}\nonumber\lim_{N\rightarrow\infty}|\overline{\rho} - \overline{\rho}^N| = 0.\end{equation} Furthermore, for a given $N\geq 3$ the truncation errors $|\rho-\rho^N|$ and $|\overline{\rho}-\overline{\rho}^N|$ can be bounded by 
\begin{eqnarray}
\nonumber\left|\rho - \rho^N\right| &<& \left\{C + K\cdot\left[ \zeta\left(\frac{2p}{d}\right) - 1 - \left(\frac{1}{2}\right)^\frac{2p}{d}\right]\right\}^\frac{1}{p}\\
\nonumber & & - \left\{C + K\cdot\left[ \sum_{n=3}^N \left(\frac{1}{n}\right)^\frac{2p}{d} \right]\right\}^\frac{1}{p}\\
\nonumber\left|\overline{\rho} - \overline{\rho}_N\right| &<& 1 - \left\{\frac{ C + K\cdot\left[ \sum_{n=3}^N \left(\frac{1}{n}\right)^\frac{2p}{d} \right] }{C + K\cdot\left[ \zeta\left(\frac{2p}{d}\right) - 1 - \left(\frac{1}{2}\right)^\frac{2p}{d}\right]}\right\}^\frac{1}{p}
\end{eqnarray}
\end{repcorollary}
\begin{proof}
As before, to ease notation, let us again define 
\begin{equation}
\varrho_n \triangleq \frac{|\lambda_n - \xi_n|}{\lambda_n\xi_n}.
\end{equation} Then based on Corollary~\ref{corr:conv} we know that the sum $\sum_{n=1}^{\infty}\varrho_n^p$ exists and thus also the partial sums converge
\begin{eqnarray}
\nonumber \lim_{N\rightarrow\infty}\left|\sum_{n=1}^\infty\varrho_n^p - \sum_{n=1}^N\varrho_n^p\right|
\nonumber = \lim_{N\rightarrow\infty}\sum_{n=1}^\infty\varrho_n^p - \sum_{n=1}^N\varrho_n^p = 0.
\end{eqnarray}
Based on  
\begin{equation*}
\begin{split}
&\forall a,b,d\in\mathbb{R} \text{ with } a,b,d \geq 0 \text{ and } a^d - b^d \rightarrow 0 \Rightarrow a-b \rightarrow 0 \\
&\text{and } \\
& \varrho_n \succeq 0 \\
& \text{we reach} \\
& \lim_{N\rightarrow\infty} \left|\rho - \rho_N\right| = \lim_{N\rightarrow\infty}\left[\sum_{n=1}^\infty\varrho_n^p\right]^\frac{1}{p} - \left[\sum_{n=1}^N\varrho_n^p\right]^\frac{1}{p} = 0 
\end{split}
\end{equation*}
As the denominators for both $\overline{\rho}_N$ and $\overline{\rho}$ are the same, the above limit also yields $\lim_{N\rightarrow\infty}|\overline{\rho} - \overline{\rho}_N| = 0$. 

The upper bounds for the truncation errors now is a direct result of Corollary~\ref{corr:conv} as
\begin{eqnarray}
\nonumber \left|\rho - \rho^N\right| &=& \left[\sum_{n=1}^\infty \varrho_n^p\right]^\frac{1}{p} - \left[\sum_{n=1}^N\varrho_n^p\right]^\frac{1}{p}\\
\nonumber &<& \left\{C + K\cdot\left[ \zeta\left(\frac{2p}{d}\right) - 1 - \left(\frac{1}{2}\right)^\frac{2p}{d}\right]\right\}^\frac{1}{p}\\
\nonumber & & - \left\{C + K\cdot\left[ \sum_{n=3}^N \left(\frac{1}{n}\right)^\frac{2p}{d} \right]\right\}^\frac{1}{p}\\
\nonumber\left|\overline{\rho} - \overline{\rho}_N\right| &=& \frac{  \left[\sum_{n=1}^\infty\varrho^p\right]^{1/p} - \left[\sum_{n=1}^N\varrho^p\right]^{1/p}}{ \left\{ C + K\left( \zeta(2p/d) -1 - 1/2^{2p/d}\right)\right\}^{1/p} }\\
\nonumber &<& 1 - \left\{\frac{ C + K\cdot\left[ \sum_{n=3}^N \left(\frac{1}{n}\right)^\frac{2p}{d} \right] }{C + K\cdot\left[ \zeta\left(\frac{2p}{d}\right) - 1 - \left(\frac{1}{2}\right)^\frac{2p}{d}\right]}\right\}^\frac{1}{p}
\end{eqnarray}
\end{proof}
\begin{proposition}\label{prop:residual}
Let $x,y\in\mathbb{R}$ be positive real values such that $y>x$. Then $\forall A,B \in \mathbb{R}$, $A,B>0$
\begin{equation}
\nonumber\frac{A+Bx}{A+By} > \frac{x}{y}.
\end{equation}
\end{proposition}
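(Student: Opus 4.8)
The plan is to reduce the claimed inequality to an obviously true statement by clearing denominators. Since $y>0$ and, by hypothesis, $A,B>0$, the denominator $A+By$ is strictly positive, and $y$ itself is positive. Both fractions therefore have strictly positive denominators, so I can cross-multiply without reversing the inequality: the assertion $\frac{A+Bx}{A+By} > \frac{x}{y}$ is equivalent to
\begin{equation}
\nonumber y(A+Bx) > x(A+By).
\end{equation}

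Next I would expand both sides, obtaining $Ay + Bxy$ on the left and $Ax + Bxy$ on the right. The mixed term $Bxy$ is identical on both sides and cancels, so the inequality collapses to $Ay > Ax$, i.e. $A(y-x) > 0$. This holds because $A>0$ by assumption and $y-x>0$ since $y>x$. Reading the chain of equivalences back up then yields the proposition.

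There is essentially no obstacle here; the only point demanding care is verifying that the denominators $A+By$ and $y$ are strictly positive \emph{before} cross-multiplying, so that the direction of the strict inequality is preserved. This is immediate from the positivity of $A$, $B$, and $y$. As a sanity check on the result, note the underlying intuition: since $\frac{Bx}{By} = \frac{x}{y} < 1$ (because $x<y$), adding the same positive quantity $A$ to both numerator and denominator of a fraction smaller than $1$ moves it closer to $1$, hence strictly increases it; this gives $\frac{A+Bx}{A+By} > \frac{x}{y}$ and matches the computation above.
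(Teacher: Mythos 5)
Your proof is correct. It does, however, take a slightly different route from the paper's. You clear denominators: since $y>0$ and $A+By>0$, the claim is equivalent to $y(A+Bx)>x(A+By)$, which after cancelling the common term $Bxy$ reduces to $A(y-x)>0$, immediate from $A>0$ and $y>x$. The paper instead writes $A=k_1x=k_2y$ for suitable $k_1>k_2>0$ (possible because $A,x,y>0$ and $y>x$), and then bounds $\frac{A+Bx}{A+By}=\frac{k_1x+Bx}{k_2y+By}>\frac{k_2x+Bx}{k_2y+By}=\frac{x}{y}$ by replacing $k_1$ with the smaller $k_2$ in the numerator. Both arguments are one-line elementary algebra; yours is arguably the more standard and transparent of the two, since it reduces directly to a sign condition rather than introducing auxiliary constants, and your closing remark (adding the same positive quantity to numerator and denominator of a fraction less than $1$ increases it) captures the same intuition the paper's substitution encodes implicitly. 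Your care in checking positivity of the denominators before cross-multiplying is exactly the right point to flag.
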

\begin{proof}
Since $A,x,y>0$, we can find two positive real values $k_1>0$ and $k_2>0$ such that $A = k_1x$ and $A = k_2y$. Furthermore, $y>x$ simply implies $k_1 > k_2$. Using $k_1$ and $k_2$ now we can write
\begin{equation}
\nonumber\frac{A+Bx}{A+By} = \frac{k_1x+Bx}{k_2y+By} > \frac{k_2x+Bx}{k_2y+By} = \frac{x}{y}.
\end{equation}
\end{proof}
\end{appendices}
\bibliographystyle{plain}
\bibliography{cites}
\end{document}